\documentclass[twoside,11pt]{article}

%

%
%
%

\usepackage{jmlr2e}
\usepackage{amscd,yhmath} 
\usepackage{wasysym}
\usepackage{hyperref}
\usepackage{mathrsfs}
\def\R{{\mathbb R}}

\def\E{{\mathbb E}}

\def\N{{\mathbb N}}
\def\s{{\mathbb S}}
\def\e{{\epsilon}}

\newcommand{\keyw}[1]{{\bf #1}}

\hypersetup{
  colorlinks=true,       
  linkcolor=magenta,          
   citecolor=blue,        
    filecolor=magenta,      
    urlcolor=cyan           
}


\jmlrheading{0}{0000}{0-00}{0/00}{00/00}{}{Vladimir Pestov}


\ShortHeadings{A learning rule with a universally monotone error}{Pestov}
\firstpageno{1}

\usepackage{lastpage}
\jmlrheading{23}{2022}{1-\pageref{LastPage}}{9/21}{4/22}{21-1043}{Vladimir Pestov}
\ShortHeadings{Universally monotone error}{Pestov}

\begin{document}

\title{A universally consistent learning rule with a universally monotone error}

\author{\name Vladimir Pestov \email
vladimir.pestov@uottawa.ca \\
       \addr Departamento de Matem\'atica \\
Universidade Federal de Santa Catarina \\
Campus Universit\'ario Trindade \\
CEP 88.040-900 Florian\'opolis-SC, Brasil \\
{and} \\
Departement of Mathematics and Statistics\\ 
University of Ottawa\\
STEM Complex, 150 Louis-Pasteur Pvt\\
Ottawa, Ontario K1N 6N5 Canada
}

\editor{Mehryar Mohri}

\maketitle

\begin{abstract}
We present a universally consistent learning rule whose expected error is monotone non-increasing with the sample size under every data distribution. The question of existence of such rules was brought up in 1996 by Devroye, Gy\"orfi and Lugosi (who called them ``smart''). Our rule is fully deterministic, a data-dependent partitioning rule constructed in an arbitrary domain (a standard Borel space) using a cyclic order. The central idea is to only partition at each step those cyclic intervals that exhibit a sufficient empirical diversity of labels, thus avoiding a region where the error function is convex.
\end{abstract}

\begin{keywords}
Learning rule, learning error, ``smart'' rule, partitioning rule, cyclic order
\end{keywords}

\section{Introduction}

Here we are interested in the learning rules for a binary classification problem. Given a labelled $n$-sample $\sigma_n$, such a rule outputs a binary classifier for the domain $\Omega$, that is, predicts a label, $0$ or $1$, for every point $x$ of the domain. We denote the rule by ${g}=({g}_n)_{n=1}^{\infty}$, and the predicted label for $x$ based on a sample $\sigma_n$, by ${g}_n(\sigma_n)(x)$. Now let $\tilde\mu$ be an unknown distribution of the labelled datapoints $(X,Y)$, that is, a probability measure on $\Omega\times\{0,1\}$.
The learning (or generalization, or misclassification) error ${L}_{\tilde\mu}({g}_n)$ is the random variable 
\[P_{\tilde\mu}[{g}_n(D_n)(X)\neq Y\mid D_n],\]
where $D_n$ is a random labelled $n$-sample.
The rule ${g}$ is {\em consistent} (under $\tilde\mu$), if the error converges to the smallest possible classification error (the Bayes error), $ L^{\ast}(\tilde\mu)$, in expectation (or probability):
\[\E_{\tilde\mu}[{L}_{\tilde\mu}({g}_n)]\overset{n\to\infty}\longrightarrow L^{\ast}(\tilde\mu).\]
The rule is {\em universally consistent} if it is consistent under every data distribution $\tilde\mu$. 
Intuitively, this means that the more data we have, the better is the prediction of the learning rule, and asymptotically as $n\to\infty$, it is as good as it can possibly get under the (unknown) data law.

It is therefore tempting to think that the learning error does not increase under the transition $n\mapsto n+1$, that is, the sequence 
\begin{equation}
\E_{\tilde\mu}[{L}_{\tilde\mu}({g}_n)],~~n=1,2,3,\ldots
\label{eq:smart}
\end{equation}
is monotone nonincreasing. Perhaps surprisingly, it is not the case. See \citet{DGL}, Sect. 6.8 for a simple example of a data distribution on the interval, under which the nearest neighbour rule has a strictly smaller learning error for $n=1$ than it has for $n=2$. It is not difficult to construct similar counter-examples for other common universally consistent learning rules (cf. Problems 6.14 and 6.15, {\em loco citato}).

Devroye, Gy\"orfi and Lugosi called a rule ${g}$ {\em smart} (\citet{DGL}, Sect. 6.8) if for all labelled data distributions $\mu$ on $\Omega\times\{0,1\}$, the sequence in Eq. (\ref{eq:smart}) is nonincreasing.
Based on the above, they have conjectured that no universally consistent learning rule is ``smart''. (Cf. {\em loc. cit.}, bottom of p. 106 and Problem 6.16, p. 109.)

Our aim is to show that ``smart'' universally consistent rules do exist, even without requiring any amount of randomization. 

We use a partitioning rule: the domain is divided in disjoint cells, and the label for each cell is determined by the majority vote among all datapoints contained in it. It is easy to show that for a fixed partition, the error does not increase with the sample size (Problem 6.13 in \citet{DGL}). However, for a partitioning rule to be consistent, the cells have to be divided, and this is where the error jump may occur.

Here is the root of the problem. Let $Y,Y_i$, $i=1,\ldots,n$ be i.i.d. random labels following a Bernoulli distribution with $p=P[Y=1]$. Consider the predictor for the value of $Y$ based on the majority vote among $Y_1,\ldots,Y_n$. For the odd values of $n$, the voting ties are avoided, and  the misclassification error is a polynomial function in $p$:
\[L(p,n)=P[Y=1]P\left[\frac 1n\sum_{i=1}^n Y_i<\frac 12\right]+P[Y=0]P\left[\frac 1n\sum_{i=1}^n Y_i>\frac 12\right].
\]
For $n>1$, the error function is not concave: there is a straight line segment joining two points on the graph that is strictly above the underlying part of the graph (Fig. \ref{fig:non-concavity}).

\begin{figure}[ht]
\begin{center}
  \scalebox{0.4}{\includegraphics{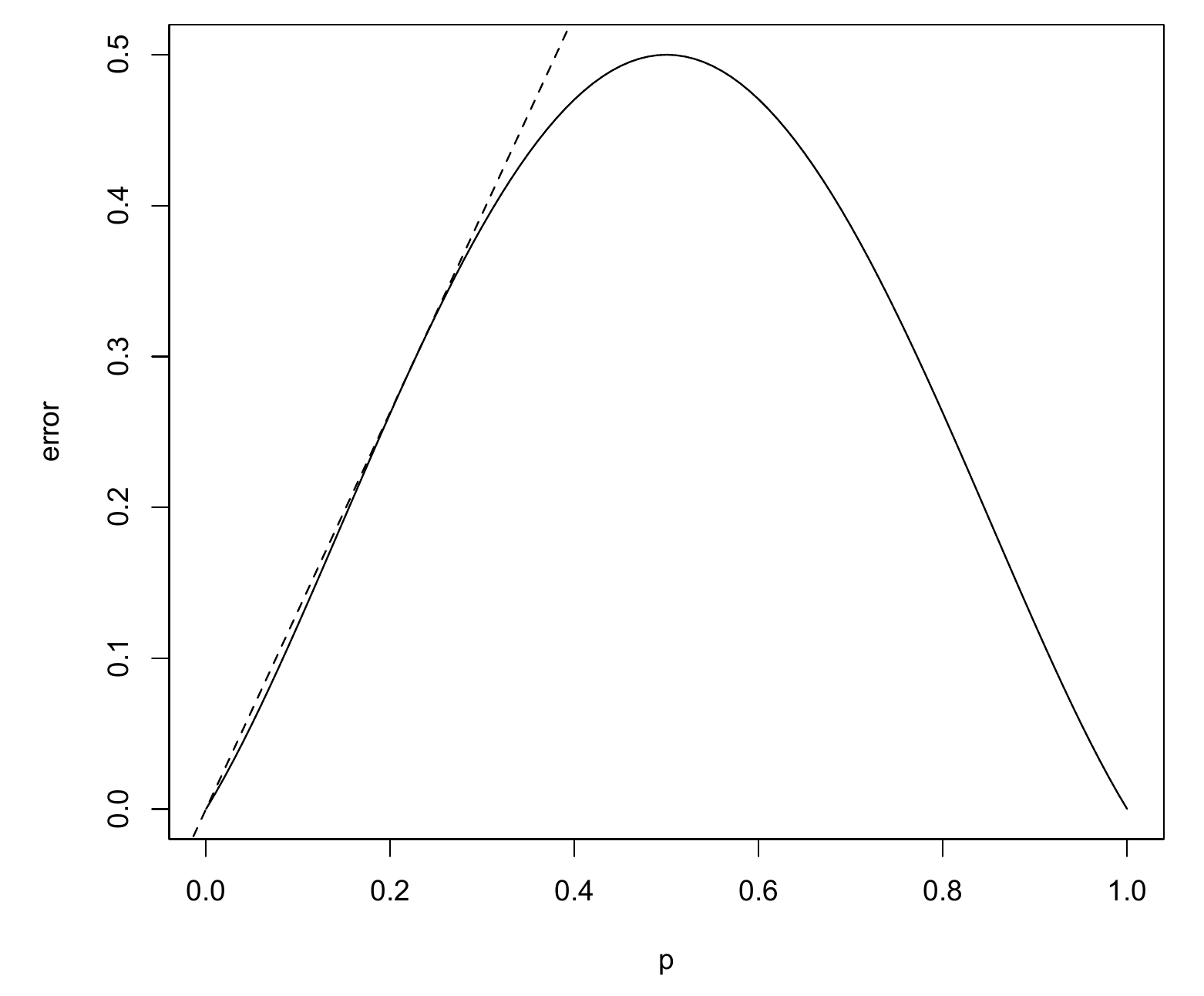}} 
  \caption{Non-concavity of the error function $L(p,3)$.}
  \label{fig:non-concavity}
\end{center}
\end{figure}

This is because, for $n>1$, the derivative of the polynomial function ${L}(p,n)$ at $p=0$ and $1$ equals $1$ and $-1$ respectively (see Problem 5.6(2) in \citet{DGL}, p. 84 for a Taylor polynomial). The optimal (Bayes) predictor for the problem gives the value $0$ if $p<1/2$ and $1$ if $p>1/2$, and the Bayes error is given by $L^{\ast}(p) = \min\{p,1-p\}$.
Since ${L}(p,n)>  \min\{p,1-p\}$ at all points except $0,1/2,1$, it follows that there are small neighbourhoods of $0$ and $1$ in which the polynomial function ${L}(p,n)$ is strictly convex. 

This implies that no concave function strictly greater than $\min\{p,1-p\}$ is contained under the graph of $L(p,n)$. 
In particular, given $N>n$, for some $p_0>0$ small enough
\begin{equation}
{L}(p_0,n)< \frac 12 {L}(2p_0,N).
\label{eq:p00}
\end{equation}

And here is how the error value can increase after we refine the partition, even if we increase the sample size. Suppose the domain $\Omega$ is subdivided into two cells of equal measure, $C_1$ and $C_2$, with conditional probabilities of getting label $1$ equal to $2p_0$ and $0$ respectively. (Fig. \ref{fig:splitting}.) 

\begin{figure}[ht]
\begin{center}
  \scalebox{0.3}{\includegraphics{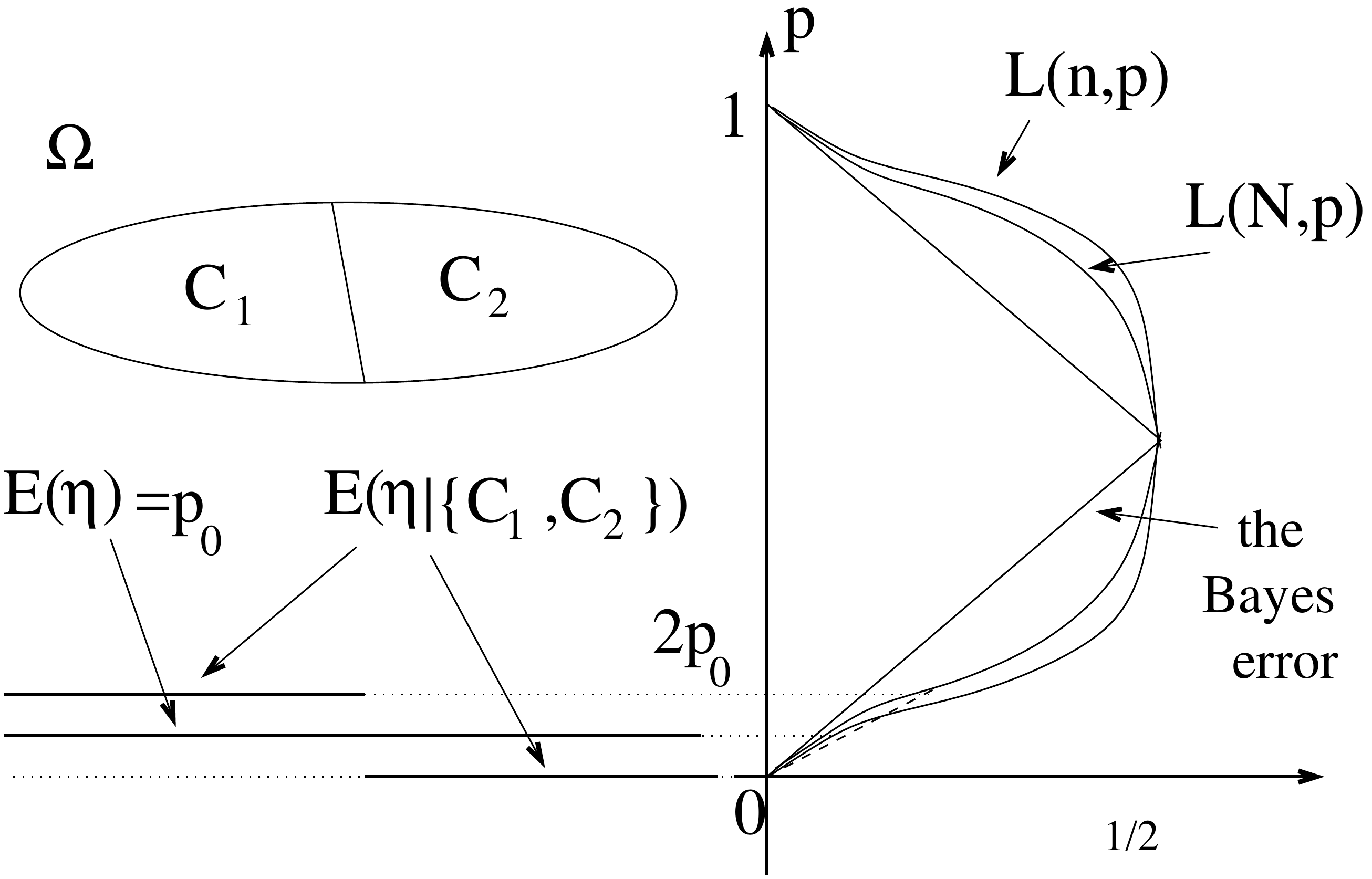}} 
  \caption{Splitting the domain into two cells in the area of convexity of error.}
  \label{fig:splitting}
\end{center}
\end{figure}

The learning error of the rule based on the trivial partition, $\{\Omega\}$, and a random $n$-sample $\sigma$ equals $L(p_0,n)$. But when we proceed to the rule based on the finer partition $\{C_1,C_2\}$, the error, conditionally on each cell containing $N$ sample points, strictly increases:
\begin{align*}
P[X\in C_1]\cdot L(2p_0,N)+ P[X\in C_2]\cdot L(0,N)&= \frac 12 {L}(2p_0,N)+\frac 12 0 \\
&>L(p_0,n).
\end{align*}
Using the monotonicity of $L(p,n)$ in $n$, it is easy to deduce that the expected error of the histogram rule based on the trivial partition, $p_{\{\Omega\}}$, over i.i.d. $n$-samples is less than the expected error of the rule based on the finer partition $\{C_1.C_2\}$, over the i.i.d. $N$-samples.

However, away from the endpoints of the interval $[0,1]$ this phenomenon no longer occurs. Given $\e>0$, if $N$ is sufficiently large, then on the interval $[\e,1-\e]$ the concave envelope of the function $L(p,N)$ (that is, the smallest concave function majorising it) is smaller than the function $L(p,n)$. This means that a cell $C$ can be safely partitioned (into any finite number of smaller cells) once the conditional probability $p=P[Y=1\mid X\in C]$ is bounded away from $0$ and $1$, that is, belongs to some interval $[\e_n,1-\e_n]$, where  $\e_n\downarrow 0$.
Therefore, the solution is to only partition a cell $C$ when it is empirically confirmed that $P[Y=1\mid X\in C]$ is in the interval $[\e_n,1-\e_n]$:
\begin{align*}
P_{\sigma}[Y=1\mid X\in C] &= \frac{\sum_{i\colon X_i\in C}Y_i}{\sharp\{i\colon X_i\in C\}} \\
&\in [\e_n,1-\e_n].
\end{align*}
Here $P_\sigma$ stands for the empirical (conditional) probability based on a random labelled sample $\sigma$.

There is still a probability of empirical error, but, 
near $0$ and $1$ and for $\e_n$ fixed, this error is a polynomial function in $p$ (resp. $1-p$) of higher order $\e_nN$, where $N$ is the number of points of the testing sample contained in the cell. Thus, even if we may from time to time erroneously partition a cell when we should not, the expected compound error under the transition $n\mapsto N$ still can be kept below the curve of the error function $L(p,n)$, provided $N$ is large enough. (Lemmas \ref{l:key0} and \ref{l:key_piece}.)

Now, a description of the learning rule, $g=(g_n)$.
The empirical path $(x_n)$ is divided into points of three kinds. A subsequence $(n_k)$ is chosen, starting with $n_1=1$. 
The points $(x_{n_k+1})$ (with labels stripped off) are used to form a partition of the domain into half-open subintervals, for which purpose we fix a circular order on the domain. Equivalently, we identify the domain with the unit circle $\s^1$. This way, almost surely the measure of every cell of the partition is strictly positive. (Fig. \ref{fig:cyclic_intervals}.)

\begin{figure}[ht]
\begin{center}
  \scalebox{0.25}{\includegraphics{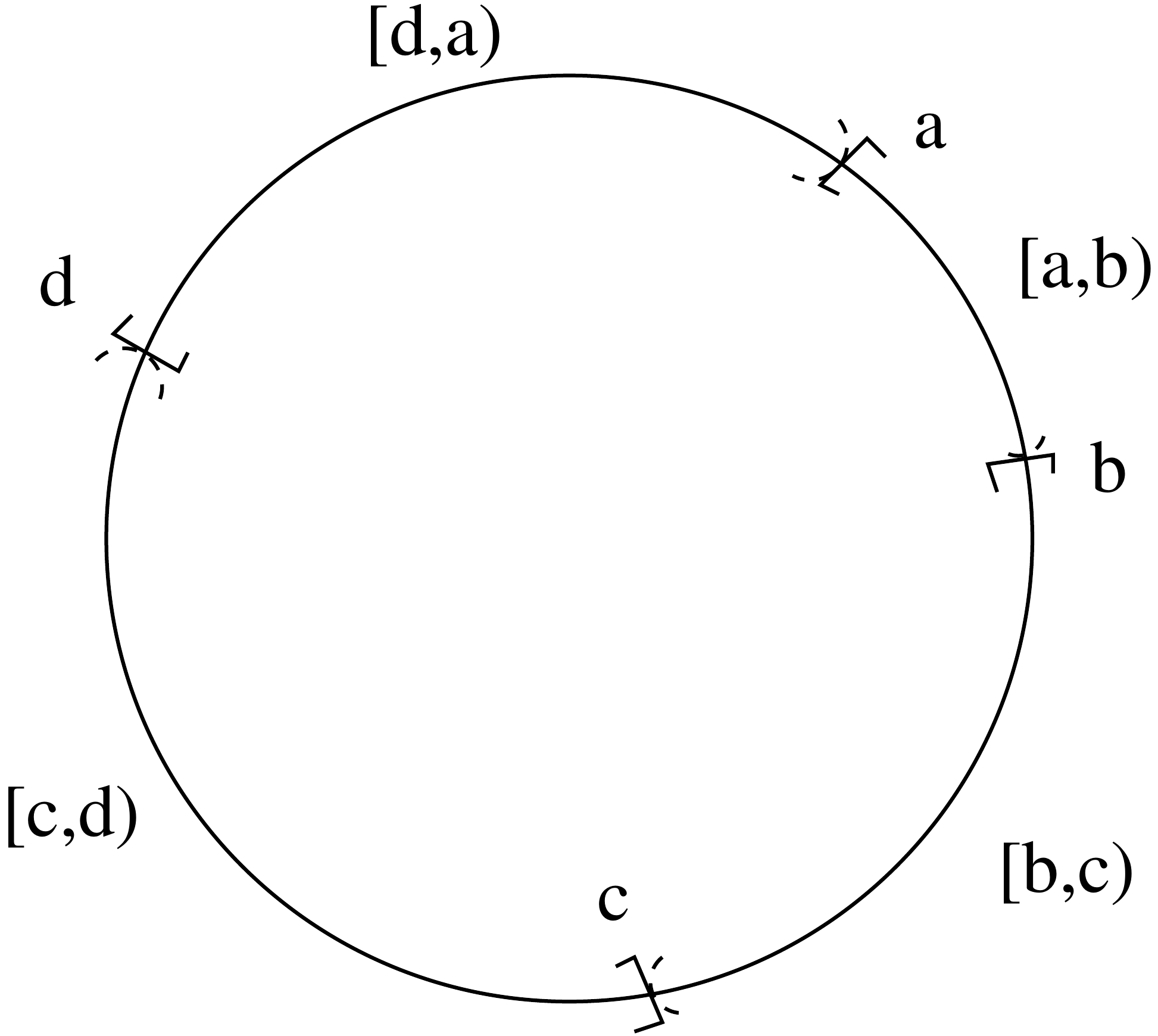}} 
  \caption{Half-open cyclic intervals.}
  \label{fig:cyclic_intervals}
\end{center}
\end{figure}

The hypothesis is only updated at the steps of the form $n=n_{k}$, so for all the intermediate values $n_k<n<n_{k+1}$, the rule $g_n$ just repeats the hypothesis output by the rule $g_{n_k}$: 
\[g_n(\sigma)=g_{n_k}(\sigma[1,\ldots,n_k]).\]
The hypotheses $g_{n_k}$ are generated recursively, that is, in order to output the hypothesis $g_{n_k}$, we need to know the hypotheses $g_{n_i}$, $i=1,2,\ldots,k-1$. In particular, the partitioning set ${\mathcal Q}_k\subseteq \{x_{n_1},\ldots,x_{n_k}\}$ is selected recursively as well.

The interval of integers $[n_k+2,n_{k+1}]$ is divided into two contiguous blocks, $A_{k}$ and $B_{k}$, of length $a_{k}$ and $b_{k}$ respectively. Thus, $n_{k+1}=n_k+1+a_{k}+b_{k}$.
We call $A_{k}$ the testing block, and $B_{k}$, the labelling block. (See Fig. \ref{fig:three_kinds}.)

\begin{figure}[ht]
\begin{center}
  \scalebox{0.3}{\includegraphics{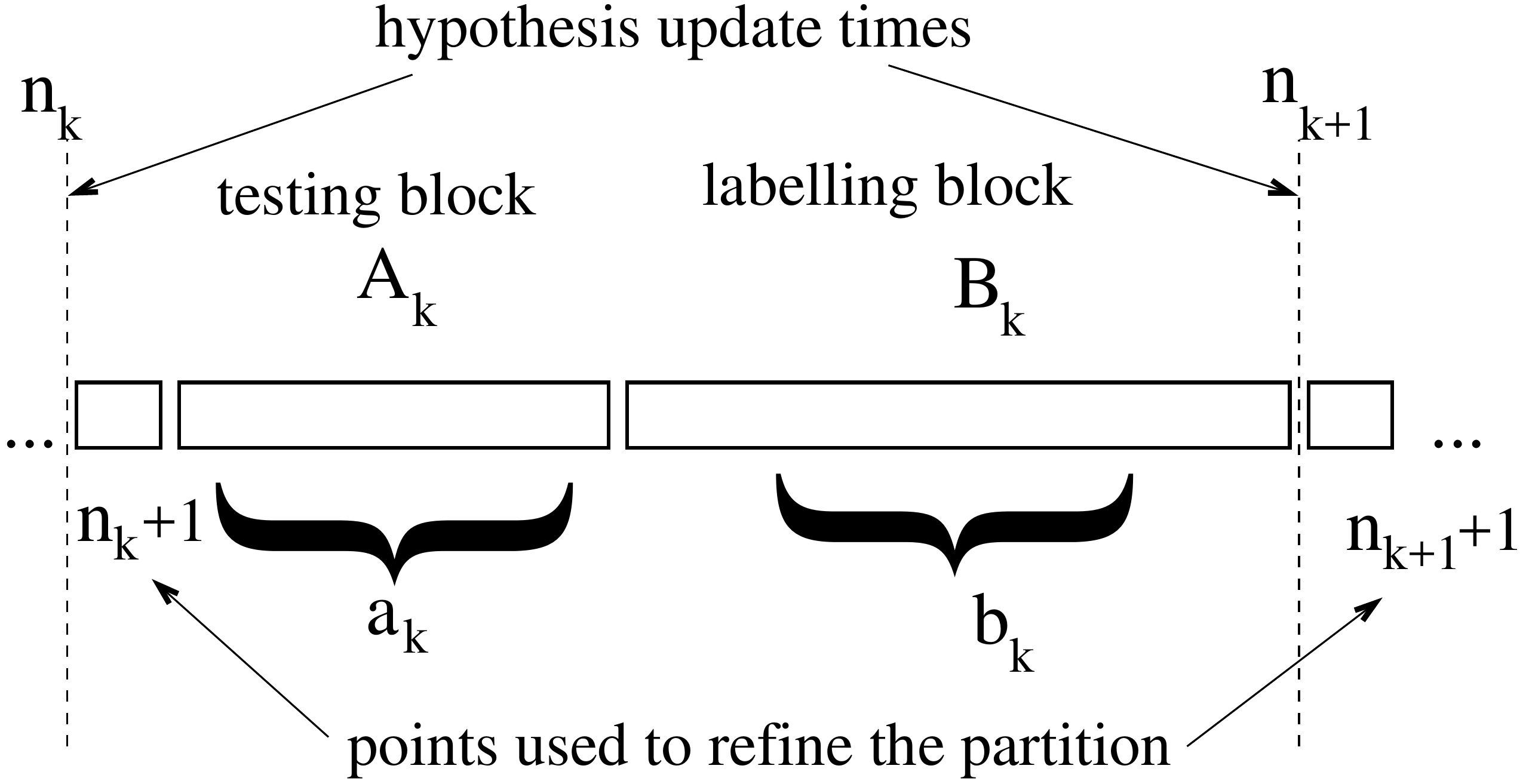}} 
  \caption{Natural numbers divided into blocks.}
  \label{fig:three_kinds}
\end{center}
\end{figure}

The testing block, $A_{k}$, or rather the corresponding subsample $\sigma[A_{k}]$, is used for empirical testing of the probability value $P[Y=1\mid X\in I]$ for every cell $I$ (a half-open cyclic interval) of the  partition, $\hat {\mathcal Q}_{k-1}$, generated by the current partitioning set ${\mathcal Q}={\mathcal Q}_{k-1}$. And the labelling block, $B_{k}$, is used to generate the predicted labels based on the partitioning rule under the possibly updated partition.

We only perform the testing and then generate a new hypothesis if every cell of the partition $\hat {\mathcal Q}_{k-1}$, contains sufficiently many points of the testing sample, and every cell of the partition $\hat{\mathcal P}_k$ contains sufficiently many points of the labelling sample. If this is the case, then for every interval $I\in\hat {\mathcal Q}_{k-1}$ satisfying $P_{\sigma[A_k]}[Y=1\mid X\in I]\in (\e_k,1-\e_k)$ we update the partitioning set ${\mathcal Q}_{k-1}$ by adding to it the set ${\mathcal P}_k\cap I$. After the update is done, we generate the new hypothesis, given by the histogram rule on the updated partition $\hat {\mathcal Q}_k$ and the sample $\sigma[B_k]$. Otherwise, no testing and no partitioning set update are made, and the rule returns the previous hypothesis output at the moment $n=n_{k-1}$.

At the beginning, the set of partitioning points is initialized to be an empty set, ${\mathcal Q}_0=\emptyset$, hence the corresponding partition $\hat {\mathcal Q}_0$ is trivial, $\hat {\mathcal Q}_0=\{\s^1\}$, having the entire domain $\Omega=\s^1$ as the only cell. Thus,
for $n=1$, the rule $g_1$ on the labelled sample input $(x_1,y_1)$ assigns the label $y_1$ to every point of the domain. In this way, $(x_1,y_1)$ is the subsample $\sigma[B_1]$, the first labelling block is $B_1=\{1\}$, and $b_1=1$.
The first testing block is empty: $A_1=\emptyset$, and $a_1=0$.

Further values $a_k,b_k$ will be chosen recursively so as to grow fast enough and guarantee that almost surely along the sample path, the hypothesis is being updated infinitely often. A conditioning argument shows the misclassification error does not increase with each step. As to the universal consistency, notice that in the cases where the regression function is identically zero or one (that is, the same label is assigned to every point), the division of cells will almost surely never occur, but obviously the partitioning rule with only one cell is still consistent. However, a certain variation of existing results on the universal consistency of partitioning rules (which require the cell diameter to go to zero in probability) does the job.

Now an outline of the article structure. The sections \ref{s:rules}--\ref{s:cyclic} lay the technical groundwork for our learning rule, presented and studied in the sections \ref{s:rule}--\ref{s:consistency}.
We start with a revision of the standard model of supervised statistical learning in Sect. \ref{s:rules}. The concavity properties of the error function $L(p,n)$ are dealt with in Section \ref{s:trivial}. In Sect. \ref{s:partitioning} we discuss the partitioning rules, and here appears the key technical result, Lemma \ref{l:key_piece}, showing how to partition cells without increasing the learning error.
Section \ref{s:cyclic} is devoted to the cyclic orders and the probability measures on the circle. Our learning rule is formally described in Section \ref{s:rule}. In Sect. \ref{s:monotonicity} we show the rule has a universally monotone expected error, and in Sect. \ref{s:consistency} we prove the universal consistency. A few concluding remarks in Sect. \ref{s:remarks} were motivated by the referees' comments.

\section{Learning rules}
\label{s:rules}
Let $\Omega=(\Omega,{\mathcal A})$ be a measurable space, that is, a set equipped with a sigma-algebra of subsets $\mathcal A$. The main example is where $\mathcal A$ is the smallest sigma-algebra of subsets containing all open balls with regard to a metric, making $\Omega$ a complete separable metric space. Such a measurable space $(\Omega,{\mathcal A})$ is called a {\em standard Borel space}. The elements of $\mathcal A$ are known as {\em Borel sets.} 

The Borel structure remembers very little of the generating metric, in the following sense.
Two standard Borel spaces admit a Borel isomorphism (a bijection preserving the sigma-algebras) if and only if they have the same cardinality. Thus, there are only the following isomorphism types of standard Borel spaces: finite ones with $n$ elements for each natural $n$, countably infinite (all of which are isomorphic to the natural numbers with the sigma-algebra of all subsets), and those of cardinality continuum. For example, the Borel spaces associated to the real line, to $\R^d$, to the Hilbert space $\ell^2$, to the Cantor set, etc., are all pairwise isomorphic as standard Borel spaces. See \citet{kechris} as a general reference.

In statistical learning theory the learning domain $\Omega$ is usually assumed to be a standard Borel space of cardinality continuum, which will be our standing assumption as well. 

The product $\Omega\times \{0,1\}$ now becomes a standard Borel measurable space in a natural way. The elements $x\in \Omega$ are known as {\em unlabelled points}, and the pairs $(x,y)\in \Omega\times \{0,1\}$ are {\em labelled points.} A finite sequence of labelled points, $\sigma=(x_1,x_2,\ldots,\allowbreak x_n,\allowbreak y_1,\allowbreak y_2,\ldots,y_n)\in \Omega^n\times \{0,1\}^n$, is a {\em labelled sample.}

A {\em classifier} in $\Omega$ is a mapping
\[T\colon \Omega\to \{0,1\},\]
assigning a label to every point. The mapping is usually assumed to be measurable in order for things like the misclassification error to be well defined, although some authors are allowing for non-measurable maps, working with the outer measure instead.

Let $\tilde\mu$ be a probability measure defined on the measurable space $\Omega\times \{0,1\}$. Denote $(X,Y)$ a random element of $\Omega\times \{0,1\}$ following the law $\tilde\mu$. The misclassification error of a classifier $T$ is the quantity
\begin{align*}
{L}_{\tilde\mu}(T) &= \tilde\mu \{(x,y)\in \Omega\times \{0,1\}\colon T(x)\neq y\} \\
&= P[T(X)\neq Y].
\end{align*}
The {\em Bayes error} is the infimum (in fact, the minimum) of the misclassification errors of all the classifiers $T$ defined on $\Omega$:
\[ L^{\ast}= L^{\ast}(\tilde\mu)=\inf_{T}{{L}}_{\tilde\mu}(T).\]

A {\em learning rule} in $(\Omega,{\mathcal A})$ is a mapping
\[{g}\colon\bigcup_{n=1}^{\infty}\Omega^n\times \{0,1\}^n\times \Omega \ni (\sigma,x)\mapsto {g}(\sigma)(x) \in \{0,1\}.\]
Again, the map above is usually assumed to be measurable. We denote $g_n$ the restriction of $g$ to $\Omega^n\times \{0,1\}^n\times \Omega$. For a labelled sample $\sigma$, we denote $g_n(\sigma)$ the binary function $\Omega\ni x\mapsto g_n(\sigma)(x)\in\{0,1\}$. Thought of as a subset of $\Omega$ on which the function takes value $1$, this $g_n(\sigma)$ is also known as a {\em hypothesis} output by the rule $g$ on the labelled sample input $\sigma$.

The labelled datapoints are modelled by a sequence of independent, identically distributed random elements $(X_n,Y_n)\in \Omega\times \{0,1\}$ following the law $\tilde\mu$. For each $n$, the {\em misclassification error} of the rule ${g}_n$ is the random variable
\begin{align*}
{L}_{\tilde\mu}{g}_n & = P\left[ {g}_n(D_n)(X)\neq Y\mid D_n\right].
\end{align*}
In other words, it is the error of the random classifier $g_n(D_n)$, where $D_n$ is a random labelled $n$-sample.

Consider the probability measure $\mu = \tilde\mu\circ\pi^{-1}$ on $\Omega$, where $\pi$ is the first coordinate projection of $\Omega\times \{0,1\}$. Now define a finite measure $\mu_1$ on $\Omega$ by $\mu_1(A)=\tilde\mu(A\times\{1\})$. Clearly, $\mu_1$ is absolutely continuous with regard to $\mu$. Define the {\em regression function}, $\eta\colon \Omega\to [0,1]$, as the Radon--Nikod\'ym derivative
\begin{align*}
\eta(x) & =\frac{d\mu_1}{d\mu} \\
&= P[Y=1\mid X=x],
\end{align*}
that is, the conditional probability for $x$ to be labelled $1$. The pair $(\mu,\eta)$ completely defines the measure $\tilde\mu$ and is often more convenient to use. Thus, a learning problem in a measurable space $(\Omega,{\mathcal A})$ can be alternatively given either by the measure $\tilde\mu$ on $\Omega\times\{0,1\}$ or by the pair $(\mu,\eta)$.

A rule ${g}$ is {\em consistent} under $\tilde\mu$ if
\[{L}_{\tilde \mu}{g}_n \overset{p}\to  L^{\ast}(\tilde\mu),\mbox{ that is, }
\E {L}_{\tilde \mu}{g}_n\to L^{\ast}(\tilde\mu),\]
and {\em universally consistent} if ${g}$ is consistent under every probability measure on $\Omega\times \{0,1\}$.

The following simple observation allows to delay the hypothesis update.

\begin{lemma} 
Let $F$ be a random function from the natural numbers to itself with the property $F(n)\leq n$ for all $n$. Given a learning rule $g$, define a rule $g_F$ by $(g_F)_n(\sigma) = g_{F(n)}(\sigma[(1,\ldots,F(n))])$. 
Assume that $F(n)-n\to 0$ in probability. If $g$ is consistent, then $g_F$ is consistent.
\label{l:random_delay}
\end{lemma}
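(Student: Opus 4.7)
The plan is to deduce consistency of $g_F$ from that of $g$ by a conditioning argument combined with bounded convergence. Set $\phi(k):=\mathbb{E}[{L}_{\tilde\mu}(g_k)]$; the consistency of $g$ gives $\phi(k)\to L^{\ast}(\tilde\mu)$ as $k\to\infty$, and of course $0\le\phi\le 1$.

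The first step is to establish the identity
\[
\mathbb{E}[{L}_{\tilde\mu}((g_F)_n)] \;=\; \mathbb{E}[\phi(F(n))].
\]
The idea is to condition on the value of $F(n)$: on the event $\{F(n)=k\}$ the rule $(g_F)_n$ outputs $g_k$ applied to the first $k$ points of the labelled sample, which are i.i.d.\ $\tilde\mu$, and the test point $(X,Y)$ is independent of both the data and the randomness defining $F$. Hence the conditional expected error on this event equals $\phi(k)$, and summing over $k$ (weighted by $P[F(n)=k]$) yields the displayed identity.

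Next I would use that the hypothesis on $F$ forces $F(n)\to\infty$ in probability (since $F(n)\le n\to\infty$). Composing with the deterministic convergence $\phi(k)\to L^{\ast}(\tilde\mu)$ gives $\phi(F(n))\to L^{\ast}(\tilde\mu)$ in probability, and since $\phi$ is uniformly bounded by $1$ the bounded convergence theorem yields $\mathbb{E}[\phi(F(n))]\to L^{\ast}(\tilde\mu)$, which is exactly the consistency of $g_F$.

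The only delicate point is the conditioning step when the randomness producing $F$ is allowed to depend on the sample, as will occur in the rule presented later (where $F(n)=n_k$ is an update time potentially adapted to the data filtration). In that case the independence of $F$ from $(D_i)$ must be replaced by a tower-property argument keyed on $\sigma(F(n))\vee\sigma(D_{F(n)})$, using that the test point $(X,Y)$ is independent of the entire data sequence; once this is in place the identity above holds verbatim and no other step in the argument requires care.
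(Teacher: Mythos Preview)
Your conditioning identity $\E[L_{\tilde\mu}((g_F)_n)]=\E[\phi(F(n))]$ is the heart of your argument, and it is precisely where the proof breaks down once $F$ is allowed to depend on the sample (which is the case needed in the paper's application). Conditioning on $\{F(n)=k\}$ alters the law of $D_k$ whenever $F$ looks at the data, so $\E[\ell(g_k(D_k))\mid F(n)=k]\neq\phi(k)$ in general. The tower argument you sketch does not rescue this: independence of the test point $(X,Y)$ from the data stream is irrelevant, since the error $\ell(g_k(D_k))$ is already a deterministic function of $D_k$; what you would need is independence of $\{F(n)=k\}$ from $D_k$, and that simply fails. A concrete obstruction: if $g_1$ outputs one of two classifiers with errors $0$ and $1$ according to $D_1$, and $F(2)$ is defined to equal $1$ exactly when $g_1(D_1)$ has error $0$, then $\E[L((g_F)_2)]=0$ while $\E[\phi(F(2))]=\tfrac14$.

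The paper's proof avoids this entirely by using the full strength of the hypothesis rather than the weak consequence $F(n)\to\infty$. Since $F$ is integer-valued and $F(n)\le n$, the assumption $F(n)-n\to 0$ in probability is exactly $P[F(n)=n]\to 1$. On the event $\{F(n)=n\}$ one has $(g_F)_n=g_n$ identically, so
\[
P\bigl[L((g_F)_n)<L^\ast+\e\bigr]\ \ge\ P\bigl[\{F(n)=n\}\cap\{L(g_n)<L^\ast+\e\}\bigr]\ >\ 1-2\e
\]
for all large $n$, by a plain union bound. No conditioning, no independence, no bounded-convergence machinery is needed, and the argument is indifferent to whether $F$ depends on the data.
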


\begin{proof}
Given $\e>0$, find $N$ so that for $n\geq N$, we have $P(L_n(g_n)<L^\ast+\e)>1-\e$ and $P[F(n)=n]>1-\e$ whenever $n\geq N$. It follows that for such $n$
\begin{align*}
P(L(g_F)_n< L^\ast+\e)\geq P(F(n)&=n \mbox{ and }L_n(g_n)<L^\ast+\e)\\
&>1-2\e.
\end{align*}
\end{proof}

\section{Error in a trivial one-point domain}
\label{s:trivial}

Consider the ``natural'' learning rule $g$ in the one-point domain $\Omega=\{\ast\}$, which is the majority vote among $n$ i.i.d. random labels $D_n=(Y_1,\ldots,Y_n)$ following the Bernoulli law. To avoid ties, we assume $n$ odd (so if $n$ is even, the $n$-th label is not considered).

So, let $n=2k+1$, $k\geq 0$ and $p\in[0,1]$. We have:
\[P[{g}_n(D_n)=0]= P\left[\frac 1n\sum_{i=1}^nY_i<1/2\right] =
\sum_{i=0}^k {n\choose i}p^i(1-p)^{n-i},\]
\[
P[{g}_n(D_n)=1]=\sum_{i=0}^k {n\choose i}p^{n-i}(1-p)^{i}.
\]
The expression for the expected learning error becomes
 \begin{align}
 {L}\,(p,n) &= P[{g}_n(D_n)=0]\cdot P[Y=1] + P[{g}_n(D_n)=1]\cdot P[Y=0] \nonumber \\
 &= \sum_{i=0}^k {n\choose i} p^{i+1}(1-p)^{n-i}+ \sum_{i=0}^k {n\choose i} p^{n-i}(1-p)^{i+1}.
\label{l:learningerror}
 \end{align}

As $n\to\infty$, simple ball volume considerations in the Hamming cube $\{p,1-p\}^n$ show that $L(p,n)$ converges to the Bayes error,
\[ L^{\ast}(p) = \min\{p,1-p\}.\]

For the first part of the following statement, see Sect. VII of \citet{CH}, or Problem 6.12 in \citet{DGL}; as neither source contains a proof, we present it here.

\begin{lemma}
The convergence ${L}\,(p,n)\to  L^{\ast}(p)$ is monotone along the odd values of $n$. More exactly, for every $n=2k+1$,
\begin{equation}
{L}\,(p,n)-{L}\,(p,n+2) = {n\choose k}(2p-1)^2p^{k+1}(1-p)^{k+1} \geq 0.
\end{equation}
\label{l:monotone}
\end{lemma}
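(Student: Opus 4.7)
My plan is to avoid manipulating the binomial sums in Eq.~(\ref{l:learningerror}) directly, and instead interpret $L(p,n) - L(p,n+2)$ probabilistically as the net gain in accuracy when the majority vote over $n$ i.i.d.\ Bernoulli($p$) ballots is augmented by two additional independent ballots $Y_{n+1}, Y_{n+2}$, tested against an independent Bernoulli($p$) label $Y$. Writing $M_n$ for the majority of $Y_1,\ldots,Y_n$ and $S_n = \sum_{i=1}^n Y_i$, I want to identify the event on which $M_{n+2} \neq M_n$, compute its contribution to the error difference, and simplify.

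The key observation is that, because both $n = 2k+1$ and $n+2 = 2k+3$ are odd, the majority flips precisely in one of two mutually exclusive cases:
\begin{itemize}
\item[(A)] $S_n = k+1$ and $Y_{n+1} = Y_{n+2} = 0$, in which case $M_n = 1$ but $M_{n+2} = 0$;
\item[(B)] $S_n = k$ and $Y_{n+1} = Y_{n+2} = 1$, in which case $M_n = 0$ but $M_{n+2} = 1$.
\end{itemize}
Off of $A \cup B$ one has $\mathbf{1}_{M_n \neq Y} = \mathbf{1}_{M_{n+2} \neq Y}$, so the contributions cancel when we take $L(p,n) - L(p,n+2) = \E[\mathbf{1}_{M_n \neq Y} - \mathbf{1}_{M_{n+2} \neq Y}]$.

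Using the independence of $Y$ from the ballots and conditioning on $A$ versus $B$, the difference collapses to
\[
L(p,n) - L(p,n+2) = P(A)\bigl(P(Y=0) - P(Y=1)\bigr) + P(B)\bigl(P(Y=1) - P(Y=0)\bigr) = (2p-1)\bigl[P(B) - P(A)\bigr].
\]
A direct count gives $P(A) = \binom{n}{k+1} p^{k+1}(1-p)^{k+2}$ and $P(B) = \binom{n}{k} p^{k+2}(1-p)^{k+1}$, and the identity $\binom{n}{k} = \binom{n}{k+1}$ (valid because $n = 2k+1$) lets me factor out $\binom{n}{k} p^{k+1}(1-p)^{k+1}$. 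What remains inside the brackets is $p - (1-p) = 2p - 1$, so combining with the prefactor $(2p-1)$ yields the claimed formula $\binom{n}{k}(2p-1)^2 p^{k+1}(1-p)^{k+1}$, which is manifestly nonnegative.

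I do not expect a serious obstacle here; the argument is essentially a coupling between the $n$-vote and $(n+2)$-vote classifiers. The only place where one must be careful is in verifying that cases (A) and (B) are the only ways the majority can flip when two ballots are added to an already odd-sized poll, and in correctly tracking which of the two errors (predicting $1$ when $Y=0$ or vice versa) dominates on each case. The algebraic collapse at the end depends crucially on the symmetry $\binom{n}{k} = \binom{n}{k+1}$, which is exactly what makes the right-hand side a perfect square in $(2p-1)$ and forces the monotonicity.
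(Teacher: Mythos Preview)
Your proposal is correct and takes essentially the same approach as the paper: both arguments couple the $n$-vote and $(n+2)$-vote classifiers on the same ballots, identify the two flip events (your (A) and (B) are exactly the paper's cylindrical cases $i=k+1$ with suffix $00$ and $i=k$ with suffix $11$), and combine via $\binom{n}{k}=\binom{n}{k+1}$ to get the perfect square in $(2p-1)$. Your probabilistic phrasing is a bit more streamlined than the paper's Hamming-cube integration, but the computation is the same.
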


\begin{proof}
Let $\{p,1-p\}^n$ denote the Hamming cube $\{0,1\}^n$ with the product measure $\{p,1-p\}^{\otimes n}$. Denote $w(\tau)=\sum_{i=1}^n\tau_i$ the weight of the string $\tau$.
The learning error ${L}\,(p,2k+1)$ is the expected value of the random variable $L(g_n(D_n))$, represented by the function
\[\{p,1-p\}^n\ni\tau\mapsto L(g_n(\tau))=\begin{cases} p,&\mbox{ if }w(\tau)\leq k,\\
1-p,&\mbox{ if }w(\tau)\geq k+1.
\end{cases} \]
We will study the behaviour of this expected value under the transition $k\mapsto k+1$, that is, $n\mapsto n+2$. In the latter case, 
\[\tau\mapsto L(g_{n+2}(\tau))=\begin{cases} p,&\mbox{ if }w(\tau)\leq k+1,\\
1-p,&\mbox{ if }w(\tau)\geq k+2.
\end{cases} \]

For every $i=0,1,\ldots,n$, denote $C_i$ the cylindrical set of all strings $\tau\in\{0,1\}^{n+2}$ satisfying $w(\tau_n)=i$, where $\tau_n$ is the $n$-prefix of $\tau$. We have
\begin{align*}
{L}\,(p,n)-{L}\,(p,n+2) &= \E\, L(g_n(\tau_n))-\E L(g_{n+2}(\tau)) \\
&= \sum_{i=0}^n \int_{C_i} \left(L(g_n(\tau_n))-L(g_{n+2}(\tau))\right)\,d\{p,1-p\}^{\otimes n}.
\end{align*}

If the prefix $\tau_{n}$ of $\tau$ satisfies $w(\tau_n)\leq k-1$, then $w(\tau)\leq k+1$, and so on $C_i$ the two error variables  are identical. The same applies if $w(\tau_n)\geq k+2$. We have
\[{L}\,(p,n+2)-{L}\,(p,n)=\sum_{i=k,k+1} \int_{C_i} \left(L(g_n(\tau_n))-L(g_{n+2}(\tau))\right)\,d\{p,1-p\}^{\otimes n}.\]
{\em Case $i=k$.} The value of the error variable changes from $p$ to $1-p$ for the strings of the form $\tau=\tau_n11$, whose total number is ${n\choose k}$. For every such string, the singleton $\{\tau_n 11\}$ has measure $p^{k+2}(1-p)^{k+1}$. Other strings in $C_k$ keep the same error value, $p$. We conclude:
\[\int_{C_{k}}\left(L(g_n(\tau_n))-L(g_{n+2}(\tau))\right)\,d\{p,1-p\}^{\otimes n}=(2p-1){n\choose k}p^{k+2}(1-p)^{k+1}.
\]
{\em Case $i=k+1$.} The error value changes from $1-p$ to $p$ for the strings of the form $\tau_n00$, whose total number is ${n\choose k+1}={n\choose k}$:
\[\int_{C_{k+1}}\left(L(g_n(\tau_n))-L(g_{n+2}(\tau))\right)\,d\{p,1-p\}^{\otimes n}=(1-2p){n\choose k}p^{k+1}(1-p)^{k+2}.
\]
Thus, 
\begin{align*}
{L}\,(p,n)-{L}\,(p,n+2)&= (2p-1){n\choose k}p^{k+2}(1-p)^{k+1}+(1-2p){n\choose k}p^{k+1}(1-p)^{k+2}\\
&= (2p-1){n\choose k}p^{k+1}(1-p)^{k+1} \left[p - (1-p)  \right] \\
&= (2p-1)^2{n\choose k}p^{k+1}(1-p)^{k+1} \geq 0.
\end{align*}
\end{proof}

\begin{lemma}
The convergence ${L}\,(p,n)\to  L^{\ast}(p)$ is uniform in $p$ along the odd values of $n$.
\label{l:uniformL}
\end{lemma}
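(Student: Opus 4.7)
The plan is to invoke Dini's theorem. All three hypotheses are already at hand: the domain $[0,1]$ is compact; each $L(\,\cdot\,,n)$ is continuous on $[0,1]$ (it is the polynomial written out in Eq.~(\ref{l:learningerror})); and the limit function $L^{\ast}(p)=\min\{p,1-p\}$ is continuous on $[0,1]$.

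The monotonicity in $n$ along odd values is exactly the content of Lemma \ref{l:monotone}: for every odd $n$ and every $p\in[0,1]$,
\[
L(p,n)-L(p,n+2)={n\choose k}(2p-1)^2 p^{k+1}(1-p)^{k+1}\geq 0,
\]
so the sequence $(L(\,\cdot\,,n))_{n\text{ odd}}$ is pointwise nonincreasing. Pointwise convergence $L(p,n)\downarrow L^{\ast}(p)$ was established just before Lemma \ref{l:monotone} (by the standard ball-volume argument in the Hamming cube). Dini's theorem then upgrades this monotone pointwise convergence on a compact set to uniform convergence, which is exactly the assertion.

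No obstacle is anticipated: the whole point of stating the lemma separately is to package the application of Dini's theorem for later use. If one preferred to avoid Dini, an alternative route would be to bound $L(p,n)-L^\ast(p)$ directly via Hoeffding's inequality on $|n^{-1}\sum Y_i - p|$, which gives an explicit sub-Gaussian tail uniform in $p$; but in the present context the one-line Dini argument is cleaner and uses only material already developed in the paper.
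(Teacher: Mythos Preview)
Your proposal is correct and matches the paper's proof essentially verbatim: the paper also applies Dini's theorem to the monotone (via Lemma~\ref{l:monotone}) pointwise convergence of the continuous functions $L(p,n)-L^\ast(p)$ to zero on the compact interval $[0,1]$.
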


\begin{proof}
The sequence of non-negative continuous functions ${L}\,(p,n)- L^{\ast}(p)$ with $n$ odd converges to zero pointwise and monotonically on a compact set $[0,1]$, which implies the uniform convergence (Dini's theorem).
\end{proof}

A real function $f$ is {\em concave} if for all $x,y$ in the domain of $f$ and each $t\in [0,1]$, $f((1-t)x+ty)\geq (1-t)f(x)+tf(y)$.
Equivalently, for any collection of $x_i$ and $t_i$ with $\sum_i t_i=1$, we have $f\left(\sum_it_ix_i\right)\geq \sum_it_i f(x_i)$. 

\begin{lemma}
For every $n$ odd, the function ${L}\,(p,n)$ is concave in a sufficiently small neighbourhood of $p=1/2$.
\label{l:concave1/2}
\end{lemma}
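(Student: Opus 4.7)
The plan is to exploit the symmetry $L(p,n) = L(1-p,n)$, which forces $L'(1/2,n) = 0$, and then compute $L''(1/2,n)$ explicitly, showing it is strictly negative. Since $L(\cdot,n)$ is a polynomial, so is $L''(\cdot,n)$; once we know $L''(1/2,n) < 0$, continuity immediately yields $L''(p,n) < 0$ on some open neighbourhood of $1/2$, and this is equivalent to concavity there.

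To carry out the computation, I would first rewrite the error function in a more manageable form. Setting $n = 2k+1$ and $F_n(p) = P[\mathrm{Bin}(n,p)\leq k] = \sum_{i=0}^k\binom{n}{i}p^i(1-p)^{n-i}$, the formula (\ref{l:learningerror}) reorganizes as
\[
L(p,n) \;=\; p\,F_n(p) + (1-p)\bigl(1 - F_n(p)\bigr) \;=\; (1-p) + (2p-1)F_n(p).
\]
Differentiating twice gives $L'(p,n) = -1 + 2F_n(p) + (2p-1)F_n'(p)$ and
\[
L''(p,n) \;=\; 4F_n'(p) + (2p-1)F_n''(p),
\]
so at $p = 1/2$ only the first term survives: $L''(1/2,n) = 4F_n'(1/2)$.

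The key computation is the derivative of the binomial CDF, obtained by a telescoping argument using $\binom{n}{i}i = n\binom{n-1}{i-1}$ and $\binom{n}{i}(n-i) = n\binom{n-1}{i}$:
\[
F_n'(p) \;=\; -n\binom{n-1}{k}p^k(1-p)^{n-1-k}.
\]
Evaluating at $p=1/2$ with $n = 2k+1$ produces
\[
F_n'(1/2) \;=\; -n\binom{2k}{k}\Bigl(\frac{1}{2}\Bigr)^{2k} \;<\; 0,
\]
so $L''(1/2,n) = -4n\binom{2k}{k}4^{-k} < 0$ as required.

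I do not expect a serious obstacle: the whole argument reduces to a one-line symmetry observation plus a standard identity for the derivative of the binomial CDF. The only mild subtlety is choosing the right rewriting of $L(p,n)$ so that the second derivative at $1/2$ collapses to a single, obviously negative term; the factorization $L(p,n) = (1-p) + (2p-1)F_n(p)$ does exactly that, and the rest is bookkeeping.
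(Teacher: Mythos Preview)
Your argument is correct and complete: the rewriting $L(p,n)=(1-p)+(2p-1)F_n(p)$ is valid, the telescoping identity $F_n'(p)=-n\binom{n-1}{k}p^k(1-p)^{n-1-k}$ is standard, and evaluating at $p=1/2$ gives $L''(1/2,n)=4F_n'(1/2)<0$, from which local concavity follows by continuity of the polynomial $L''(\cdot,n)$.

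The paper takes a different route. Rather than computing $L''(1/2,n)$ directly, it proceeds by induction on odd $n$: the base case $L(p,1)=2p(1-p)$ is globally concave, and the inductive step uses the explicit difference formula of Lemma~\ref{l:monotone},
\[
L(p,n+2)=L(p,n)-\binom{n}{k}(2p-1)^2\bigl(p(1-p)\bigr)^{k+1},
\]
noting that the subtracted term, rewritten via $p(1-p)=\tfrac14-(p-\tfrac12)^2$, has Taylor expansion at $p=1/2$ beginning with a \emph{positive} multiple of $(p-\tfrac12)^2$; hence its negative is concave near $1/2$, and the sum of two locally concave functions is locally concave. Your approach is more self-contained (it does not rely on Lemma~\ref{l:monotone}) and yields the exact value $L''(1/2,n)=-4n\binom{2k}{k}4^{-k}$; the paper's approach instead reuses material already in hand and keeps the argument at the level of Taylor coefficients without differentiating.
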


\begin{proof}
For $n=1$, ${L}\,(p,1)=2p(1-p)=2(p-p^2)$ is globally concave. Write
\[p(1-p) = \frac 14 - \left(p-\frac 12\right)^2.\]
Then, by Lemma \ref{l:monotone},
\begin{align*}
{L}\,(p,n+2) &= {L}\,(p,n)-{n\choose k}(2p-1)^2p^{k+1}(1-p)^{k+1}\\
&= {L}\,(p,n)-4{n\choose k}\left(p-\frac 12\right)^2\left[ \frac 14 - \left(p-\frac 12\right)^2\right]^{k+1}.
\end{align*}
The lowest term in the Taylor expansion of the second polynomial around $p=1/2$ is of second degree, $(p-1/2)^2$ with negative coefficient $-4{n\choose k}(1/4)^{k+1}$, meaning the function is concave in a sufficiently small neighbourhood of $p=1/2$. As the sum of concave functions is concave, we conclude by induction.
\end{proof}

If $f$ is a bounded real-valued function defined on some set $X$, it is easy to see that there exists the smallest concave function of the same domain of definition majorizing $f$. This function is called the {\em concave envelope} of $f$, and we will denote it $\wideparen{f}$.

\begin{lemma}
Given $n$ and $\e>0$, there is $N$ so that for all values $\e\leq p\leq 1-\e$
\[\wideparen{{L}}\,(p,N)\leq {L}\,(p,n).\]
\label{l:wideparenL<L}
\end{lemma}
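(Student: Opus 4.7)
My plan relies on the structure of the concave envelope of a symmetric convex--concave--convex function, which $L(\cdot, N)$ possesses for $N$ odd: it is strictly convex near the endpoints $0$ and $1$ as discussed after Fig.~\ref{fig:non-concavity}, and strictly concave in a neighbourhood of $p = 1/2$ by Lemma~\ref{l:concave1/2}. By the symmetry $L(p, N) = L(1-p, N)$, the envelope $\wideparen{L(\cdot, N)}$ agrees with $L(\cdot, N)$ on a symmetric middle interval $[a_N, 1 - a_N]$ and is linear on each of the flanks $[0, a_N]$ and $[1 - a_N, 1]$ with common slope $\pm M_N$, where
\[
M_N := \sup_{p \in (0, 1/2]} L(p, N)/p = L(a_N, N)/a_N = L'(a_N, N)
\]
is the maximal secant slope from the origin, attained at the tangent point $a_N \in (0, 1/2)$.

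The first technical step is to show $M_N \to 1$ as $N \to \infty$. Fix some odd $m > 1$ so that $L'(0, m) = 1$; then the ratio $(L(p, m) - p)/p$ extends continuously to $[0, 1/2]$ with value $0$ at $p = 0$. Given $\eta > 0$, choose $p_0 > 0$ small so that this ratio is $\leq \eta$ on $[0, p_0]$. For $N \geq m$, Lemma~\ref{l:monotone} gives $L(p, N) \leq L(p, m)$ pointwise, hence $L(p, N)/p \leq 1 + \eta$ on $[0, p_0]$. On $[p_0, 1/2]$, uniform convergence from Lemma~\ref{l:uniformL} gives $L(p, N)/p \leq 1 + \delta_N/p_0$, where $\delta_N := \sup_p|L(p, N) - L^*(p)| \to 0$, and this is $\leq 1 + \eta$ for $N$ large. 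Combining, $M_N \leq 1 + \eta$ for $N$ large, so $M_N \to 1$.

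Now choose $N \geq n$ large enough that $M_N \leq L(\e, n)/\e$, which is possible since the right-hand side is a fixed constant strictly greater than $1$. I verify $\wideparen{L(\cdot, N)}(p) \leq L(p, n)$ on $[\e, 1-\e]$ in two pieces. On $[\e, 1-\e] \cap [a_N, 1 - a_N]$ the envelope equals $L(\cdot, N) \leq L(\cdot, n)$ by Lemma~\ref{l:monotone}. On the flank $[\e, a_N]$ (if non-empty) it equals $M_N \cdot p$, so the claim reduces to $M_N \leq L(p, n)/p$ for $p \in [\e, a_N]$. The function $L(p, n)/p$ is unimodal on $(0, 1/2]$ (its unique interior maximum is the analogous tangent-point constant for $L(\cdot, n)$ itself, or it is monotone if $L(\cdot, n)$ is globally concave), so its minimum over $[\e, a_N]$ is attained at an endpoint and equals $\min(L(\e, n)/\e, L(a_N, n)/a_N)$. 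The first term dominates $M_N$ by our choice of $N$, and the second dominates $M_N = L(a_N, N)/a_N$ because $L(a_N, N) \leq L(a_N, n)$, again by Lemma~\ref{l:monotone}. The flank $[1 - a_N, 1 - \e]$ is symmetric.

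The main obstacle I anticipate is justifying cleanly the structural description of $\wideparen{L(\cdot, N)}$: specifically, that for $N$ odd and sufficiently large the tangent line from the origin of slope $M_N$ supports the graph of $L(\cdot, N)$ at a point $a_N$ lying inside the concave middle region, so that the envelope is genuinely linear on $[0, a_N]$ and coincides with $L(\cdot, N)$ on $[a_N, 1 - a_N]$. This is the standard concave-envelope construction for a symmetric convex--concave--convex function, but writing it down rigorously requires locating the inflection point of $L(\cdot, N)$ and checking that $a_N$ lies beyond it. For $N = 1$, $L(\cdot, 1)$ is globally concave and $\wideparen{L(\cdot, 1)} = L(\cdot, 1)$, so that case reduces immediately to Lemma~\ref{l:monotone}.
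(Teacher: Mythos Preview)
Your approach is genuinely different from the paper's, and the core idea---controlling the envelope via the maximal secant slope $M_N$ and showing $M_N\to 1$---is sound and attractive. The argument that $M_N\to 1$ is correct, and so is the observation that the envelope equals $M_N p$ on $[0,a_N]$: since $L(q,N)\leq M_N q$ for every $q\in[0,1]$ (using the symmetry on $[1/2,1]$), the linear function $q\mapsto M_N q$ is a concave majorant, while the chord from $(0,0)$ to $(a_N,L(a_N,N))$ forces the envelope up to $M_N p$ on $[0,a_N]$.

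The gaps lie in the two structural claims you flag only partially. First, the assertion that $\wideparen{L}(\cdot,N)=L(\cdot,N)$ on the whole middle interval $[a_N,1-a_N]$ requires $L(\cdot,N)$ to be concave on that entire interval, not just near $1/2$ (Lemma~\ref{l:concave1/2}). Equivalently you need that $L(\cdot,N)$ has a single inflection point on $(0,1/2)$ and that $a_N$ lies to its right; you note this obstacle but do not resolve it. Second, the unimodality of $p\mapsto L(p,n)/p$ on $(0,1/2]$ is asserted without proof; it is not a consequence of anything established in the paper, and your parenthetical justification is circular (it presupposes the same tangent-line structure for $L(\cdot,n)$). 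Both claims are plausible and probably true, but proving them would amount to a separate analysis of the inflection structure of the polynomial family $L(\cdot,N)$, which is not short.

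The paper sidesteps both difficulties by never computing $\wideparen{L}(\cdot,N)$ at all. Instead it manufactures an explicit concave majorant $\psi$ of $L(\cdot,N)$ that already satisfies $\psi\leq L(\cdot,n)$ on $[\e,1-\e]$: take $\psi(p)=p+\gamma$ on $[0,q]$ and $\psi(p)=L(p,n)$ on $[q,1/2]$, extended symmetrically, where $q$ is chosen just inside the (small) neighbourhood of $1/2$ on which $L(\cdot,n)$ is known to be concave, and $\gamma>0$ is chosen so that $L(q,n)=q+\gamma$ and $L(p,n)>p+\gamma$ on $[\e,q]$. Uniform convergence (Lemma~\ref{l:uniformL}) then gives $L(\cdot,N)\leq p+\gamma\leq\psi$ for $N$ large. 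The trick is that the concave piece near $1/2$ is borrowed from $L(\cdot,n)$ itself, whose local concavity is already established, rather than from $L(\cdot,N)$, whose global concavity structure would have to be proved from scratch.
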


\begin{proof}
Choose $\delta>0$ so that in the $\delta$-neighbourhood of $p=1/2$, the function ${L}\,(p,n)$ is concave (Lemma \ref{l:concave1/2}). Since the only values where ${L}\,(p,n)=p$ are $p=0,1/2,1$, there is $\gamma>0$ so small that for all $p\in [\e,1/2-\delta]$ we have ${L}\,(p,n)>p+\gamma$. By the intermediate value theorem, there is $q\in [1/2-\delta,1/2)$ with ${L}\,(q,n)=q+\gamma$. Reducing $\gamma$ further if needed, we may assume that such a $q$ is unique. 
By the uniform convergence of error functions (Lemma \ref{l:uniformL}), there is $N$ with ${L}\,(p,N)\leq p+\gamma$ for all $p$. The monotonicity of the error function (Lemma \ref{l:monotone}) implies ${L}\,(p,N)\leq {L}\,(p,n)$ for all $p$. 

The function
\[\psi(p)=\begin{cases} p+\gamma,&\mbox{ if }p\leq q, \\
{L}\,(p,n),&\mbox{ if } q\leq p \leq 1/2,
\end{cases} \]
extended by symmetry over $[1/2,1]$, is concave over $[0,1]$. (Indeed, for every $p\in [q,1/2]$, the gradient of the chord joining $(p,L(p,n))$ with $(q,q+\e)=(q,L(q,n))$ is less than $1$.) By the construction, we have ${L}\,(p,N)\leq \psi(p)$. Therefore, for all $p$,
\[\wideparen{{L}}\,(p,N)\leq \psi(p).\]
Since on the interval $[\e,1-\e]$ we have $\psi(p)\leq {L}\,(p,n)$, we conclude.
\end{proof}

The proof of the following is left out as an exercise in \citet{DGL}, problem 5.6(2). 

\begin{lemma}
Let $n=2k+1$, where $k\geq 1$. Up to higher degree terms, $L(p,n)$ at zero has the form
\[L(p,n)=p+{2k+1\choose k} p^{k+1}+o(p^{k+1}).\] 
\label{l:expansionat0}
\end{lemma}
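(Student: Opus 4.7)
The plan is to exploit the probabilistic interpretation of the two sums in the formula for $L(p,n)$ (Eq.~(\ref{l:learningerror})), then reduce the computation to a single Taylor expansion at $p=0$.

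First I would rewrite $L(p,n)$ in terms of the binomial tail. Let $S_n = Y_1+\cdots+Y_n$ where the $Y_i$ are i.i.d.\ Bernoulli($p$). The first sum equals $p\cdot P[S_n\le k]$, and after re-indexing $j=n-i$ the second sum equals $(1-p)\cdot P[S_n\ge k+1]$. Writing $P[S_n\le k]=1-P[S_n\ge k+1]$ produces the crucial simplification
\[L(p,n) = p + (1-2p)\,P[S_n\ge k+1].\]
This reduces the problem to expanding a single binomial tail at $p=0$.

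Next I would expand $P[S_n\ge k+1]=\sum_{i=k+1}^{n}\binom{n}{i}p^i(1-p)^{n-i}$. Every term with $i\ge k+2$ is $O(p^{k+2})$, and the $i=k+1$ term is
\[\binom{n}{k+1}p^{k+1}(1-p)^{n-k-1} = \binom{n}{k+1}p^{k+1} + O(p^{k+2}),\]
so $P[S_n\ge k+1]=\binom{n}{k+1}p^{k+1}+O(p^{k+2})$. Plugging into the boxed identity above,
\[L(p,n) = p + (1-2p)\bigl(\tbinom{n}{k+1}p^{k+1}+O(p^{k+2})\bigr) = p + \tbinom{n}{k+1}p^{k+1} + O(p^{k+2}),\]
since multiplying by $(1-2p)$ only contributes corrections of order $p^{k+2}$ or higher. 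Finally I would use the symmetry of binomial coefficients: with $n=2k+1$, $\binom{2k+1}{k+1}=\binom{2k+1}{k}$, which produces the claimed expansion.

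There is essentially no obstacle here; the only point requiring care is recognizing the algebraic simplification $L=p+(1-2p)P[S_n\ge k+1]$, which collapses the two sums into one and makes it transparent that the leading nonlinear correction at $p=0$ comes entirely from the $i=k+1$ term of the upper tail. Everything else is a routine Taylor expansion.
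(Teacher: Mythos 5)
Your proof is correct, and it follows a genuinely different route from the paper's. The paper treats the two sums in Eq.~(\ref{l:learningerror}) separately: the right sum plainly contributes $\binom{2k+1}{k}p^{k+1}$ from its $i=k$ term, while for the left sum (which equals $p\cdot P[S_n\le k]$) the paper must show that the coefficients of $p^m$ for $1\le m\le k$ in $\sum_{i=0}^k\binom{n}{i}p^i(1-p)^{n-i}$ all vanish, which requires the combinatorial identity $\sum_{i+j=m}\binom{n}{i}\binom{n-i}{j}(-1)^j = \frac{n!}{m!(n-m)!}(1-1)^m=0$. You sidestep that identity entirely by writing $L(p,n)=p\,P[S_n\le k]+(1-p)\,P[S_n\ge k+1]$ and then substituting $P[S_n\le k]=1-P[S_n\ge k+1]$, collapsing the expression to $p+(1-2p)\,P[S_n\ge k+1]$. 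The leading term of the single upper tail is then immediate from its $i=k+1$ term, and no cancellation argument is needed. Your decomposition makes the cancellations that the paper verifies combinatorially happen automatically, which is cleaner.

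One small remark: the lemma's statement writes the error as $o(p^{k+2})$, but what you establish (and what the paper's proof also establishes) is $O(p^{k+2})$; indeed already for $k=1$ one computes $L(p,3)=p+3p^2-8p^3+4p^4$, whose $p^3$ coefficient is nonzero, so the $o(p^{k+2})$ in the statement appears to be a slip for $O(p^{k+2})$ (or equivalently $o(p^{k+1})$). Your $O$ is the correct and intended conclusion.
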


\begin{proof}
Consider the expression for the learning error (Eq. \ref{l:learningerror}):
\[L(p,n)=
\sum_{i=0}^k {n\choose i} p^{i+1}(1-p)^{n-i}+ \sum_{i=0}^k {n\choose i} p^{n-i}(1-p)^{i+1}.\]
The monomial of the lowest order in the right hand sum comes from the term corresponding to $i=k$ and equals exactly ${2k+1\choose k} p^{k+1}$. The monomial of the lowest order in the left hand sum corresponds to $i=0$ and equals $p$. Thus, it is enough to show that in the polynomial
\[\sum_{i=0}^k {n\choose i} p^{i}(1-p)^{n-i}=
\sum_{i=0}^k {n\choose i} p^{i}\sum_{j=0}^{n-i}{n-i\choose j}(-1)^jp^j\]
(the l.h.s. after we took $p$ out) all the powers of $p$ between $m=1$ and $m=k$ inclusive vanish. Let $1\leq m\leq k$. Using the classical binomial formula, we calculate the coefficient of $p^m$:
\begin{align*}
\sum_{i+j=m} {n\choose i} {n-i\choose j}(-1)^j &=\frac{n!}{m! (n-m)!} \sum_{j=0}^m \frac{m!(-1)^j}{j! (m-j)!} \\
&= \frac{n!}{m! (n-m)!} (1-1)^m \\
&=0.
\end{align*}
\end{proof}

\begin{remark}
Note that 
\[{2k+1\choose k}={2k+1 \choose k+1} = {2k \choose k} + {2k \choose k+1},\]
which is how the expression for the coefficient appears in \citet{DGL}. Also, Lemma \ref{l:expansionat0} is false for $k=0$ (that is, $n=1$), in which case $L(p,1)=2p-2p^2$. There are two reasons why the proof fails: first, $p^{k+1}=p$, and second, we cannot conclude that for $m=k=0$ the power $(1-1)^m$ vanishes.
\end{remark}

The following key technical result together with its corollary underpins our learning rule by saying that a certain amount of empirical error when testing a cell for partitioning is admissible. An application to random partitions appears in Lemma \ref{l:key_piece}.

\begin{lemma}
Given $n$ odd and $t\in (0,1]$, for all $N=N(n,t)$ (odd) large enough,
\[P[\mathrm{binomial}(p,N)>tN]\cdot \wideparen{L}(p,N)+P[\mathrm{binomial}(p,N)\leq tN]\cdot L(p,N) \leq L(p,n)\]
over all $p\in [0,1/2]$.
\label{l:key0}
\end{lemma}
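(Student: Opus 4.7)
The strategy is to split $p\in[0,1/2]$ into a ``bulk'' region $[\e,1/2]$ and a ``boundary'' region $[0,\e]$, where $\e=\e(n,t)>0$ is a small constant (and $N$ is then taken large depending on $\e$). On the bulk region, both summands on the LHS are directly dominated by $L(p,n)$: the $\wideparen{L}(p,N)$ term by Lemma \ref{l:wideparenL<L} (for $N$ sufficiently large), and the $L(p,N)$ term by Lemma \ref{l:monotone} (for odd $N\geq n$). Since the LHS is a convex combination of the two, it is bounded by $L(p,n)$.

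On the boundary region, the target inequality is equivalent to
\[
P[\mathrm{bin}(p,N)>tN]\cdot(\wideparen{L}(p,N)-L(p,N)) \leq L(p,n)-L(p,N).
\]
For the RHS, write $n=2k+1$, $N=2K+1$, and apply Lemma \ref{l:expansionat0} to both error functions:
\[
L(p,n)-L(p,N) = \binom{n}{k}p^{k+1} - \binom{N}{K}p^{K+1} + O(p^{k+2}).
\]
For $\e$ small (depending on $n$) and $N$ large, this is at least $\tfrac{1}{2}\binom{n}{k}p^{k+1}$ on $[0,\e]$. For the LHS, bound $\wideparen{L}(p,N)\leq 1/2$ and apply the union-bound estimate, valid for all $p\in[0,1]$,
\[
P[\mathrm{bin}(p,N)>tN] \leq \binom{N}{m_0}p^{m_0}, \qquad m_0 := \lfloor tN \rfloor + 1,
\]
obtained by viewing $\{\mathrm{bin}(p,N)\geq m_0\}$ as the union over $m_0$-subsets of $[N]$ of the event that all indicators in the subset equal $1$. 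Factoring out $p^{k+1}$, the LHS is at most $\tfrac{1}{2}\binom{N}{m_0}\e^{m_0-k-1}\cdot p^{k+1}$. Using $\binom{N}{m_0}\leq 2^N$ and $m_0-k-1\geq tN-k-1$, this coefficient is bounded by a constant (depending on $n$) times $(2\e^t)^N$, which tends to $0$ provided $\e<2^{-1/t}$. Picking such an $\e$ (also small enough for the Taylor bound) and then $N$ sufficiently large completes the argument.

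The main obstacle is decoupling the choice of $\e$ from $N$: all constraints on $\e$ must depend only on $n$ and $t$, while $N=N(n,t)$ is chosen afterwards. The Taylor-type lower bound on $L(p,n)-L(p,N)$ needs $\e<c(n)$, which in turn requires controlling the $\binom{N}{K}p^{K+1}$ and higher-order remainders uniformly in large $N$ on a fixed neighborhood of $0$ (this relies on taking $\e$ less than roughly $1/4$, so that $2^N\e^{K-k}\to 0$ as $N\to\infty$); the tail-bound step needs $\e<2^{-1/t}$; and Lemma \ref{l:wideparenL<L} is applicable on $[\e,1-\e]$ once $N$ is chosen large enough depending on $\e$.
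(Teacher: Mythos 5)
Your proof is correct and follows essentially the same strategy as the paper's: split $[0,1/2]$ into a bulk region where Lemma~\ref{l:wideparenL<L} and monotonicity dominate both summands, and a boundary region near $0$ where the tail $P[\mathrm{bin}(p,N)>tN]$ is a polynomial of degree $\gtrsim tN$, hence negligible compared with $L(p,n)-L(p,N)\gtrsim p^{k+1}$, using Lemma~\ref{l:expansionat0}; your union bound $\binom{N}{m_0}p^{m_0}\leq 2^Np^{tN}$ is exactly the paper's rough estimate. One small simplification you could make: instead of Taylor-expanding $L(p,N)$ and worrying about its $N$-dependent remainder (hence the $\e<1/4$ constraint), note that $0\leq L(p,N)-p\leq L(p,n+2)-p=O_n(p^{k+2})$ by monotonicity (Lemma~\ref{l:monotone}), which gives $L(p,n)-L(p,N)\geq\tfrac12\binom{n}{k}p^{k+1}$ on $[0,\e]$ for $\e$ depending only on $n$, uniformly in all odd $N>n$.
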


\begin{proof}
Let $n=2k+1$. By force of Lemma \ref{l:expansionat0},
\[\lim_{p\to 0}\frac{L(p,n+2)-p}{L(p,n)-p}=0,\]
and for some $\delta>0$ small enough,
\[L(p,N)-p \leq L(p,n+2)-p < \frac 12 (L(p,n)-p)\]
when $p\in [0,\delta]$ and $N>n$ is odd. Rewrite the inequality as
\[L(p,N) < \frac 12 (L(p,n)+p).\]

Now note a very rough estimate 
\begin{align*}
P[\mathrm{binomial}(p,N)\geq tN] &= \sum_{i=\lceil tN\rceil}^{N}{N\choose i} p^i (1-p)^{N-i} \\
&\leq p^{\lceil tN\rceil}\sum_{i=0}^N {N\choose i} \\
&\leq p^{tN}2^N \\
& = (2^{t^{-1}}p)^{tN}.
\end{align*}
When $N>t^{-1}(k+1)$, thanks to Lemma \ref{l:expansionat0}, the ratio of the polynomials $P[\mathrm{binomial}(p,N)>tN]$ and $(1/2)L(p,n)-p/2$ converges to zero as $p\to 0$, and so for some $\delta^\prime>0$, we have
\[P[\mathrm{binomial}(p,N)>tN]< \frac 12 (L(p,n)-p)\]
as long as $p\in [0,\delta^\prime]$. 

Use Lemma \ref{l:wideparenL<L} to further increase $N_0$ so that for all $N\geq N_0$ and $p\in [\min\{\delta,\delta^\prime\}, 1/2]$,
\[\wideparen{L}(p,N)\leq L(p,n).\]
For $p$ in the interval $[\min\{\delta,\delta^\prime\}, 1/2]$ and $N$ sufficiently large, we have
\begin{align*}
& P[\mathrm{binomial}(p,N)>tN]\cdot \wideparen{L}(p,N)+P[\mathrm{binomial}(p,N)\leq tN]\cdot L(p,N)\\
& \leq
P[\mathrm{binomial}(p,N)>tN]\cdot L(p,n)+(1-P[\mathrm{binomial}(p,N)>tN])\cdot L(p,n) \\
&= L(p,n),
\end{align*}
and if $p\leq \min\{\delta,\delta^\prime\}$,
\begin{align*}
& P[\mathrm{binomial}(p,N)>tN]\cdot \wideparen{L}(p,N)+P[\mathrm{binomial}(p,N)\leq tN]\cdot L(p,N)\\
& \leq P[\mathrm{binomial}(p,N)>tN] + L(p,N) \\
&\leq \frac 12 (L(p,n)-p) + \frac 12 (L(p,n)+p) \\
& = L(p,n).
\end{align*}
\end{proof}

\begin{lemma}
Given $n$ odd and $t\in (0,1/2)$, for all $N=N(n,t)$ (odd) large enough,
\begin{align*}
P[\mathrm{binomial}(p,N)&\in (tN,(1-t)N) ]\cdot \wideparen{L}(p,N)+ \\
& P[\mathrm{binomial}(p,N)\notin (tN,(1-t)N) ]\cdot L(p,N)
\\ & \leq L(p,n),
\end{align*}
over all $p\in [0,1]$.
\label{l:key}
\end{lemma}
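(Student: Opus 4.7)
The plan is to reduce Lemma \ref{l:key} to Lemma \ref{l:key0} by exploiting the symmetry of all quantities involved about $p = 1/2$. I first verify that each of the three $p$-dependent terms in the inequality is invariant under $p \mapsto 1-p$. The functions $L(p,n)$ and $L(p,N)$ are symmetric about $1/2$, as is immediate from the closed-form expression in Eq. \eqref{l:learningerror} (the two sums swap). The concave envelope inherits this symmetry: if $f(p) = f(1-p)$, then $p \mapsto \wideparen{f}(1-p)$ is also concave and still majorises $f$, so by minimality it must agree with $\wideparen{f}(p)$. Finally, if $B \sim \mathrm{binomial}(p,N)$ then $N - B \sim \mathrm{binomial}(1-p,N)$, and since the interval $(tN,(1-t)N)$ is symmetric about $N/2$, the probability $P[B \in (tN,(1-t)N)]$ is unchanged under $p \mapsto 1-p$. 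Consequently, both sides of the claimed inequality are symmetric in $p$, and it suffices to prove it for $p \in [0,1/2]$.

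On $[0,1/2]$, the hypothesis $t < 1/2$ gives $(tN,(1-t)N) \subseteq (tN, N]$, so the event inclusion $\{B \in (tN,(1-t)N)\} \subseteq \{B > tN\}$ yields
\[ P[B \in (tN,(1-t)N)] \;\leq\; P[B > tN]. \]
Rewriting the LHS of Lemma \ref{l:key} as
\[ L(p,N) + P[B \in (tN,(1-t)N)]\cdot\bigl(\wideparen{L}(p,N) - L(p,N)\bigr), \]
one sees it is affine in the probability coefficient and non-decreasing in it, because $\wideparen{L}(p,N) \geq L(p,N)$ by definition of the concave envelope. Replacing this coefficient by the larger $P[B > tN]$ therefore produces an upper bound, which is precisely the quantity appearing on the left-hand side of Lemma \ref{l:key0}. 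Since $t \in (0,1/2) \subset (0,1]$, Lemma \ref{l:key0} applies and bounds this in turn by $L(p,n)$, provided $N$ is taken at least as large as the threshold $N(n,t)$ furnished by that lemma.

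The argument is essentially a structural observation; no new estimates are needed beyond Lemma \ref{l:key0}. The only points deserving attention are the symmetry of $\wideparen{L}$ (which is not immediate from its defining property but follows from a one-line minimality check), and the verification that the event-inclusion runs in the direction producing an upper bound, which is exactly where the hypothesis $t < 1/2$ is used together with the symmetric placement of $(tN,(1-t)N)$ about $N/2$.
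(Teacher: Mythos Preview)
Your proof is correct and follows essentially the same approach as the paper: both reduce to Lemma \ref{l:key0} via the inequality $L\leq\wideparen{L}$ together with the symmetry $p\mapsto 1-p$. The only cosmetic difference is that you first invoke symmetry to restrict to $p\in[0,1/2]$ and then bound once, whereas the paper treats the two halves of $[0,1]$ separately and applies the symmetry inside the second case; your affine-in-the-coefficient rewriting makes the monotonicity step a bit more transparent.
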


\begin{proof}
Let $N=N(n,t)$ be chosen as in Lemma \ref{l:key0}. Write the expression on the left hand side above as
\begin{align*}
& P[\mathrm{binomial}(p,N)\in (tN,(1-t)N) ]\cdot \wideparen{L}(p,N)+ \\
& P[\mathrm{binomial}(p,N)\leq tN]\cdot L(p,N) + P[\mathrm{binomial}(p,N)\geq (1-t)N]\cdot L(p,N).
\end{align*}
For $p\in[0,1/2]$, bounding the third term by $P[\mathrm{binomial}(p,N)\geq (1-t)N]\cdot \wideparen L(p,N)$, we get the expression in Lemma \ref{l:key0}. For $p\in [1/2,1]$, we apply the same bound to the second term, and use the symmetry of the binomial distribution and the functions $L(p,n)$ and $\wideparen L(p,n)$:
\begin{align*}
&\leq P[\mathrm{binomial}(p,N)< (1-t)N ]\cdot \wideparen{L}(p,N)+ P[\mathrm{binomial}(p,N)\geq (1-t)N]\cdot L(p,N)\\
&=P[\mathrm{binomial}(1-p,N)> tN ]\cdot \wideparen{L}(1-p,N)+ P[\mathrm{binomial}(1-p,N)\leq tN]\cdot L(1-p,N),
\end{align*}
again applying Lemma \ref{l:key0}.
\end{proof}

\section{Partitioning rules}
\label{s:partitioning}

A partition, ${\mathscr P}$, of the domain (a standard Borel space) $\Omega$ is a finite family of disjoint measurable subsets, called cells, covering $\Omega$. To a partition ${\mathscr P}$ and a labelled sample $\sigma$ associate a classifier, ${h}_{\mathscr P}$, as follows. The predicted label of a point $x$ is determined by the majority vote among the elements of a labelled sample contained in the same cell as $x$. To avoid voting ties, we will remove if necessary the datapoint having the largest index, leaving an odd number of labels for the vote. The labels of those cells entirely missed by $\sigma$ are not relevant, and for instance can be chosen at random, or always be equal to $1$. (In our future rule, this will almost surely never happen.)

\begin{lemma}
Let ${\mathscr P}$ be a partition of the domain. Denote $p=P[Y=1]$.
Then, conditionally on each cell of the partition containing at least $n$ sample points, the expected error of the histogram classifier satisfies
\[\E L(h_{\mathscr P})\leq \wideparen L(p,n).\]
\label{l:err_partition}
\end{lemma}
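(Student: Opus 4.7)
The plan is to condition on the partition cell, exploit the cell-wise structure of the histogram rule, and then pass from pointwise error to the concave envelope via Jensen's inequality.

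First I would set up notation: let the partition be $P=\{C_1,\ldots,C_m\}$, write $\mu_j=P[X\in C_j]$ and $p_j=P[Y=1\mid X\in C_j]$, so that $p=\sum_j\mu_j p_j$. On each cell $C_j$ the histogram rule is exactly the one-point-domain majority-vote rule of Section \ref{s:trivial} applied to the labels of the sample points falling in $C_j$ (with the tie-breaker chopping the labelled sample down to the largest odd number $m_j$ of votes, $m_j\geq n$ because $n$ is assumed odd). Since the label $Y$ given $X\in C_j$ is Bernoulli$(p_j)$ and, conditionally on the sample sizes in the cells, the labels inside different cells are independent, the conditional expected error contributed by cell $C_j$ equals $\mu_j\,L(p_j,m_j)$. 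Summing over cells gives
\[\E L(h_P)=\sum_{j=1}^m \mu_j L(p_j,m_j).\]

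Next I would invoke the monotonicity lemma (Lemma \ref{l:monotone}): since $m_j\geq n$ and both are odd, $L(p_j,m_j)\leq L(p_j,n)$. Then, by the very definition of concave envelope, $L(p_j,n)\leq \wideparen{L}(p_j,n)$. Combining,
\[\E L(h_P)\leq \sum_{j=1}^m\mu_j\,\wideparen{L}(p_j,n).\]

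Finally I would apply Jensen's inequality to the concave function $\wideparen{L}(\cdot,n)$ with weights $\mu_j$:
\[\sum_{j=1}^m\mu_j\,\wideparen{L}(p_j,n)\leq \wideparen{L}\!\left(\sum_{j=1}^m\mu_j p_j,\,n\right)=\wideparen{L}(p,n),\]
which is exactly the desired inequality.

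The argument is essentially a three-line chain, so there is no serious obstacle; the only point that needs a bit of care is the tie-breaking clause in the definition of $h_P$, which one must check preserves the hypothesis $m_j\geq n$ (odd), so that Lemma \ref{l:monotone} legitimately applies cell by cell. Otherwise it is just independence across cells, monotonicity in $n$, and Jensen applied to $\wideparen L$.
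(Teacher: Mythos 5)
Your proof is correct and follows essentially the same route as the paper's: condition cell-by-cell, write the conditional error as a $\mu_j$-weighted sum of $L(p_j,\cdot)$, use the monotonicity Lemma~\ref{l:monotone} to reduce to sample size exactly $n$, then pass to $\wideparen L$ and use concavity. The only difference is cosmetic — you make the last inequality $\sum_j \mu_j L(p_j,n)\leq\wideparen L(p,n)$ explicit as the two-step chain ``majorization by the envelope plus Jensen,'' which the paper compresses into a single line.
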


\begin{proof}
Denote $p_C=P[Y=1\mid X\in C]$. Then $p=\sum \mu(C)p_C$.
Using the monotonicity of the function $L(p,n)$ in $n$ (Lemma \ref{l:monotone}),
\begin{align*}
P[h_{\mathscr P}(X)\neq Y\colon \sharp\sigma\upharpoonright C\geq n,~C\in P] &=
\sum_{C\in P}\mu(C)P[h_{\mathscr P}(X)\neq Y\mid X\in C,\sharp\sigma\upharpoonright C\geq n] \\
&\leq \sum_{C\in P}\mu(C)P[h_{\mathscr P}(X)\neq Y\mid X\in C,\sharp\sigma\upharpoonright C=n]
\\
&= \sum_{C\in P}\mu(C)L(p_C,n) \\
&\leq \wideparen L(p,n).
\end{align*}
\end{proof}

A partitioning rule $h=(h_{{\mathscr P}_n})$ is based on a sequence of partitions of the domain, $({\mathscr P}_n)$.  
Those partitions can be either deterministic and fixed in advance (as the histogram rule), or random, for instance determined by the (unlabelled) elements of a subsample.
To talk about random partitions, one needs of course a standard Borel structure  on the family of partitions that may emerge. This happens naturally, for example, in our case, where the partitions are into cyclic intervals of the circle: the family of all such partitions is naturally identifiable with a standard Borel space.

There are various known sufficient conditions for a partitioning rule to be  consistent. For example (\citet{DGL}, Th. 6.1) this is the case if $\Omega$ is a Euclidean domain, and the cell $C(X)$ containing a random element $X\in\Omega$ has two properties: the diameter of $C(X)$ converges to zero in probability, and the number of points of a sample contained in $C(X)$ converges to infinity in probability.

For a labelled sample $\sigma=(x_1,\ldots,x_n,y_1,\ldots,y_n)$, we denote $P_{\sigma}$ the corresponding empirical probability. In particular,
\[P_{\sigma}[Y=1] = \frac 1 n \sharp\{i\colon y_i=1\}.\]

The following lemma is our entire learning rule in a nutshell. It demonstrates the protocol for partitioning cells without increasing the error of the partitioning rule.

\begin{lemma}
Let the domain $\Omega$ be equipped with a learning problem $(\mu,\eta)$.
Let $\mathscr P$ be a random finite partition of $\Omega$, and $\sigma,\varsigma,\tau$ three jointly independent i.i.d. random labelled samples. Suppose also that $\mathscr P$ and $\varsigma$ are independent. Denote $n$ the size of $\sigma$ and $N$ the size of $\varsigma$.
Let $0<\e<1/2$, and let $N(n,\e)$ be chosen as in Lemma \ref{l:key}. Suppose  $N\geq N(n,\e)$.
Define a random partition $\mathscr Q$ as follows: if $P_{\varsigma}[Y=1]\in (\e,1-\e)$, then ${\mathscr Q}={\mathscr P}$, otherwise ${\mathscr Q}=\{\Omega\}$. 
Conditionally on the event that every cell of $\mathscr P$ contains at least $N$ points of $\tau$, 
\begin{align*}
\E L(h_{\mathscr Q}(\tau))\leq \E L(h_{\{\Omega\}}(\sigma)).
\end{align*}
\label{l:key_piece}
\end{lemma}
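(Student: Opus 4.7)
The plan is to decompose the conditional expectation via the independence of $\varsigma$ into the convex combination that appears inside Lemma \ref{l:key}, and then apply that lemma directly. Write $p=P[Y=1]$ and let $E$ denote the event being conditioned on. Then $\E L(h_{\{\Omega\}}(\sigma))=L(p,n)$. Since $\varsigma$ is independent of $P$, of $\tau$, and of the fresh test pair $(X,Y)$, the event $\{P_{\varsigma}[Y=1]\in(\e,1-\e)\}$ is independent of $E$, with probability $\alpha=P[\mathrm{binomial}(p,N)\in(\e N,(1-\e)N)]$. Hence
\[\E[L(h_Q(\tau))\mid E]=\alpha\,\E[L(h_P(\tau))\mid E]+(1-\alpha)\,\E[L(h_{\{\Omega\}}(\tau))\mid E].\]

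For the first summand, conditioning further on the realization of $P$ reduces to a deterministic partition in which, by the event $E$, every cell contains at least $N$ points of $\tau$; Lemma \ref{l:err_partition} then yields $\E[L(h_P(\tau))\mid P,E]\leq \wideparen{L}(p,N)$, and averaging over $P$ preserves the bound. For the second summand, I would condition on $P$ and on the cell-count profile $(m_C)_{C\in P}$ of $\tau$, which decouples the labels of $\tau$ into mutually independent within-cell families of i.i.d.\ Bernoulli$(p_C)$ variables, each of size $m_C\geq N$. A bookkeeping argument parallel to the proof of Lemma \ref{l:err_partition}, combined with the monotonicity of $L(p,\cdot)$ from Lemma \ref{l:monotone}, then yields $\E[L(h_{\{\Omega\}}(\tau))\mid E]\leq L(p,N)$, at least after enlarging the threshold $N(n,\e)$ if needed so that the tail probability of the conditioning on $E$ distorts the within-cell distribution only negligibly.

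Putting the pieces together, Lemma \ref{l:key} applied with $t=\e$ and the hypothesized choice $N\geq N(n,\e)$ gives
\[\alpha\,\wideparen{L}(p,N)+(1-\alpha)\,L(p,N)\leq L(p,n),\]
so that $\E[L(h_Q(\tau))\mid E]\leq L(p,n)=\E L(h_{\{\Omega\}}(\sigma))$, which is the desired inequality. The main obstacle I foresee is the bound on the second summand, because $E$ depends simultaneously on $P$ and on $\tau$ and may re-weight the empirical cell-count profile of $\tau$ and thereby bias the global majority vote; the cell-by-cell conditioning is what unlocks the i.i.d.\ within-cell structure required for a Lemma \ref{l:err_partition}-style bound to survive, and a careful choice of $N$ (possibly larger than the threshold required by Lemma \ref{l:key} alone) is what neutralises the residual bias.
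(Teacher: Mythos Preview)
Your outline is exactly the paper's proof: split on the event $A=\{P_{\varsigma}[Y=1]\in(\e,1-\e)\}$ using the independence of $\varsigma$ from $(P,\tau)$, bound the refined branch $\E[L(h_P(\tau))\mid E]$ by $\wideparen L(p,N)$ via Lemma~\ref{l:err_partition}, bound the unrefined branch $\E[L(h_{\{\Omega\}}(\tau))\mid E]$ by $L(p,N)$, and finish with Lemma~\ref{l:key}. The paper's argument is four displayed lines and cites Lemma~\ref{l:err_partition} for both bounds without any further comment; in particular it does \emph{not} introduce the extra conditioning on cell counts or the enlargement of $N$ that you sketch for the second summand.

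The obstacle you single out is real, and the patch you propose does not remove it. Conditioning on the cell-count profile $(m_C)$ does decouple the labels of $\tau$ into independent blocks of i.i.d.\ Bernoulli$(p_C)$ variables, but the global majority vote then aggregates $\sum_C\mathrm{Bin}(m_C,p_C)$, which is not $\mathrm{Bin}(M,p)$ and admits no cell-by-cell error decomposition in the style of Lemma~\ref{l:err_partition}. Enlarging $N$ alone cannot help either: take a deterministic two-cell partition with $\mu(C_1)=\delta$ tiny, $p_{C_1}=1$, $p_{C_2}=0$ (so $p=\delta$), and $|\tau|=2N$. The event $E$ forces $m_1=m_2=N$, whence the global vote is a tie and, after the index-based tie-break, $\E[L(h_{\{\Omega\}}(\tau))\mid E]=\tfrac12$, while $L(N,p)=L(N,\delta)$ is of order~$\delta$. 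So the inequality $\E[L(h_{\{\Omega\}}(\tau))\mid E]\leq L(p,N)$ that both you and the paper invoke fails at this level of generality, regardless of how large $N$ is. Your instinct to flag this step was correct; but the ``bookkeeping plus bigger $N$'' route cannot be completed as described. Any repair has to use something beyond cell-count conditioning---for instance an assumption tying $|\tau|$ to $P$ so that $E$ has high probability and the positional bias it induces is genuinely negligible, which is how the lemma is actually deployed in Section~\ref{s:monotonicity} but is not part of its stated hypotheses.
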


\begin{proof}
Denote for short the events
\[A=[P_{\varsigma}[Y=1]\in (\e,1-\e)]\mbox{ and }B =[\mbox{ for all cells } C\in {\mathscr P},\sharp\tau\upharpoonright C\geq N].\]
Denoting $p=P[Y=1]=\E\eta$, we have
\[P(A)=P[\mbox{\small binomial}(p,N)\in (\e,1-\e)],\]
and since the events $A$ and $B$ are independent,
\begin{align*}
\E (L(h_{\mathscr Q}(\tau))\mid B) &= P(A)\E (L(h_{\mathscr P}(\tau))\mid B) + (1-P(A))\E (L(h_{\{\Omega\}}(\tau))\mid B) \\
\mbox{\small (Lemma \ref{l:err_partition})}&\leq P(A)\wideparen L(p,N)+ (1- P(A)) L(p,N)\\
\mbox{\small (Lemma \ref{l:key})} &\leq L(p,n) \\
&=L(h_{\{\Omega\}}(\sigma)).
\end{align*}
\end{proof}

For $x\in\Omega$, let $C(x)$ denote the cell of the partition ${\mathscr P}_n$ containing $x$, and $N(x)$ the number of elements of $\sigma$ belonging to the cell $C(x)$. The following is a variation on Theorem 6.1 in \citet{DGL}. 

\begin{theorem}
Let $(\mu,\eta)$ be a learning problem on a standard Borel space $\Omega$.
Let $({\mathscr P}_k)$ be a sequence of random partitions of $\Omega$, and let $(D_k)$ be a sequence of finite i.i.d. labelled samples. Suppose that $\E(\eta\mid {\mathscr P}_k)\to \eta$ in probability, and the number $N(X)$ of elements of $D_k$ in a random cell $C(X)\in {\mathscr P}_k$ goes to infinity in probability as $k\to\infty$. Then the expected error $\E h_{{\mathscr P}_k}(D_k)$ converges to $L^{\ast}=L^{\ast}(\mu,\eta)$ as $k\to\infty$.
\label{th:variation}
\end{theorem}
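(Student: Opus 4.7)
The plan is to reduce the claim to an $L^1$-convergence statement about the empirical regression function estimate on the cells, and then handle the bias and variance separately. Write $\eta_k(x) = \E(\eta\mid P_k)(x)$ for the cell-averaged regression function, which is constant on each cell $C\in P_k$ with value $\eta_k|_C = P[Y=1\mid X\in C]$. Write $\hat\eta_k(x) = \sharp\{i: X_i\in C(x)\}^{-1}\sum_{i: X_i\in C(x)}Y_i$ for the empirical estimate of $\eta$ on the cell containing $x$, computed from $D_k$. Up to the negligible effect of the tie-breaking convention (dropping one datapoint when the cell count is even), the histogram classifier is the plug-in rule $h_{P_k}(x) = \mathbf{1}\{\hat\eta_k(x) > 1/2\}$.

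The first step is the classical plug-in bound: for any measurable $\hat\eta\colon\Omega\to[0,1]$ and the classifier $T(x) = \mathbf{1}\{\hat\eta(x)>1/2\}$,
\[L(T) - L^* \leq 2\,\E|\hat\eta(X) - \eta(X)|.\]
Applying this with $\hat\eta = \hat\eta_k$ and using the triangle inequality
\[\E|\hat\eta_k(X) - \eta(X)| \leq \E|\hat\eta_k(X) - \eta_k(X)| + \E|\eta_k(X) - \eta(X)|,\]
it is enough to show both terms on the right vanish as $k\to\infty$.

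The second (bias) term goes to zero directly by hypothesis: $\eta_k\to\eta$ in probability, and since all values lie in $[0,1]$, dominated convergence yields $L^1$-convergence. For the first (variance) term, I would condition on $P_k$, on the cell $C = C(X)$, and on the size $m = N(X)$ of the subsample falling into $C$. Given the $X_i\in C$, the labels $Y_i$ are conditionally i.i.d.\ Bernoulli$(\eta_k|_C)$, so $\hat\eta_k(X) - \eta_k(X)$ has conditional mean zero and variance at most $1/(4m)$. Chebyshev's inequality combined with the pointwise bound $|\hat\eta_k - \eta_k|\leq 1$ gives
\[\E|\hat\eta_k(X) - \eta_k(X)| \leq \E\bigl[\min\{1/(2\sqrt{N(X)}),\,1\}\,\mathbf{1}\{N(X)\geq 1\}\bigr] + P[N(X) = 0],\]
and both terms go to zero by the hypothesis $N(X)\to\infty$ in probability, again using dominated convergence on the first.

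The part requiring most care is checking that the conditional independence needed in the variance step is actually available. In the intended use of the theorem, $P_k$ is built from an earlier block of the sample path while the labelling block $D_k$ is disjoint from it, so conditionally on $P_k$ both $D_k$ and the test pair $(X,Y)$ remain i.i.d.\ from $\tilde\mu$, and the argument goes through verbatim. The minor technicalities — discarding one datapoint to break ties, and the arbitrary default label assigned to cells entirely missed by $D_k$ (an event whose probability vanishes since $N(X)\to\infty$ in probability) — contribute at worst an $O(1/\sqrt{N(X)})$ perturbation, which is absorbed by the same estimate.
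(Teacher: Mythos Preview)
Your proof is correct and follows essentially the same route as the paper's: reduce via the plug-in bound (the paper cites Corollary~6.1 in \cite{DGL}) to $\E|\hat\eta_k(X)-\eta(X)|\to 0$, split by the triangle inequality into the bias term $\E|\E(\eta\mid P_k)(X)-\eta(X)|$ and the variance term $\E|\hat\eta_k(X)-\E(\eta\mid P_k)(X)|$, and handle each separately. Your write-up is in fact more careful than the paper's---you make explicit the dominated-convergence step from convergence in probability to $L^1$, and you address the tie-breaking and empty-cell technicalities---whereas the paper simply defers the variance term to the first part of the proof of Theorem~6.1 in \cite{DGL}.
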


\begin{proof}
Denote 
\[\hat\eta_k(x)= \frac{1}{N(x)}\sum_{i\colon X_i\in C(x)}Y_i\]
the empirical regression function. According to Corollary 6.1 in \citet{DGL}, it is enough to show that $\E\left\vert\hat\eta_k(X)-\eta(X)\right\vert \to 0$. By the triangle inequality,
\begin{align*}
\E\left\vert\hat\eta_k(X)-\eta(X)\right\vert &\leq
\E\left\vert\hat\eta_k(X)-\E(\eta\mid {\mathscr P}_k)(X)\right\vert + 
\E\left\vert\E(\eta\mid {\mathscr P}_k)(X)-\eta(X)\right\vert.
\end{align*}
The first term converges to zero through conditioning on $N(X)$ and using the fact that $N(x)\hat\eta_k(X)$ is distributed as $\mbox{\small binomial}(N(x),\E(\eta\mid {\mathscr P}_k)(x))$, it is exactly the first part of the proof of Theorem 6.1 in \citet{DGL}. The convergence to zero of the second term is our assumption.
\end{proof}

\section{Cyclic orders}
\label{s:cyclic}

Recall again a basic theorem in descriptive set theory: every standard Borel space of uncountable cardinality is isomorphic to the unit interval with its usual Borel structure (see Th. 15.6 in \citet{kechris}). In particular, every such space is Borel isomorphic to the unit circle:
\[\s^1=\{e^{2\pi \mathbf{i} t}\colon t\in [0,1)\}\subseteq{\mathbb{C}}.\]
Thus, given an arbitrary domain $\Omega$ (a standard Borel space), we can fix a Borel isomorphism with the circle $\s^1$ and work directly with the circle from now on. 

This is the same thing as choosing on $\Omega$ a cyclic order with certain properties, and we will give a minimum of necessary definitions. A {\em cyclic order} on a set $X$ is a ternary relation, denoted $[x,y,z]$, satisfying the following properties:

\begin{enumerate}
\item Either $[x,y,z]$ or $[z,y,x]$, but not both.
\item $[x,y,z]$ implies $[y,z,x]$.
\item $[x,y,z]$ and $[y,u,z]$ implies $[x,u,z]$.
\end{enumerate}

A linearly ordered set $(X,\leq)$ supports a cyclic order given by
\[[x,y,z] \mbox{ if and only if } x<y<z\mbox{ or } y<z<x\mbox{ or }z<x<y.\]
The circle has a natural cyclic order, where $x<y<z$ whenever $y$ is between $x$ and $z$ when we traverse the arc from $x$ to $z$ in the counter-clockwise direction (although clockwise would do just as well). Here is a definition not requiring geometric notions: for any $t,s,w\in [0,1)$, $\left[ e^{2\pi \mathbf{i} t},e^{2\pi \mathbf{i} s},e^{2\pi \mathbf{i} w}\right]$ if and only if $[t,s,w]$, where the cyclic order on the interval is defined as above. (See \citet{S}, remark to Lemma 1.)  

Any two points $x,y$ of a cyclically ordered set define an open interval, $(x,y)$, consisting of all points $z$ with $[x,z,y]$. Similarly one defines other types of intervals. We will be interested in half-open intervals of the form $[x,y) = (x,y)\cup\{x\}$. A cyclic order on a standard Borel space $\Omega$ is Borel if the corresponding ternary relation is a Borel subset of $\Omega^3$, which in particular implies that every interval is a Borel set.

It is easy to verify that the Vapnik--Chervonenkis dimension of the family of all intervals (open, closed, and half-open) of a cyclically ordered set with at least 3 points is exactly 3. Indeed, every three-point set is shattered, while the axioms imply that a set of four points cannot be shattered.

Fixing any point $\xi$ of a cyclically ordered set $X$, we obtain a linear order $<_{\xi}$ on $X$, with $\xi$ as the smallest element, and for all other elements, $y<_{\xi}z$ if and only if $[\xi,y,z]$. Now the original cyclic order is exactly the cyclic order defined by the linear order $<_{\xi}$. 

A cyclic order is {\em dense} if for every $x,y$, $x\neq y$, there is $z$ with $[x,y,z]$. A cyclic order is {\em order-separable} if there is a countable subset meeting each non-empty open interval. Say that a cyclic order is {\em Dedekind complete} if every non-empty proper subset $C$ has the greatest lower bound with regard to the linear order $<_{\xi}$ for every $\xi\notin C$. 
It can be shown that a standard Borel space equipped with a Dedekind complete dense order-separable Borel order admits a Borel isomorphism with the circle $\s^1$ preserving the cyclic order. Thus, technically, we construct our learning rule by fixing a cyclic order on a domain having the above listed properties, but it is more convenient to work by directly identifying the domain with the circle $\s^1$ and its standard cyclic order.

A mapping $f\colon X\to Y$ between two cyclically ordered sets is {\em monotone} if for all $x,y,z\in X$, whenever $f(x),f(y),f(z)$ are all pairwise distinct, we have $[x,y,z]$ if and only if $[f(x),f(y),f(z)]$. This is equivalent to saying that for some (or any) $\xi\in X$, the mapping $f$ is monotone non-decreasing with regard to the linear orders $<_{\xi}$ on $X$ and $<_{f(\xi)}$ on $Y$. A monotone map between two linearly ordered sets is monotone in this sense (but the converse does not hold). One can also talk of monotone maps between a cyclically ordered set and linearly ordered set. The composition of two monotone maps is monotone.

Perhaps it would be helpful to mention that the exponential map $\R\to\s^1$ is monotone on any interval of unit length, but not on the entire real line: for instance, $0<0.5<1.25$, therefore $[0,0.5,1.25]$ with regard to the cyclic order on $\R$, but the corresponding images $e^0=1$, $e^{\pi\mathbf{i}}=-1$ and $e^{\pi\mathbf{i}/2}=\mathbf{i}$ satisfy $[1,\mathbf{i},-1]$, that is, $[1,-1,\mathbf{i}]$ does not hold. Similarly, the two-fold cover of $\s^1\to\s^1$, $x\mapsto x^2$, is not cyclically monotone. On the contrary, every orientation-preserving self-homeomorphism of $\s^1$ is.
It is further easily seen that every monotone map from the circle $\s^1$ to itself is Borel.

Say that $y$ is a {\em successor} of $x$ in a finite cyclically ordered set $\mathcal P$, if for all $z\in {\mathcal P}\setminus\{x,y\}$ one has $[x,y,z]$, that is, $x\neq y$ and $[x,z,y]$ does not happen. Clearly, the successor of a given element always exists, provided $\vert {\mathcal P}\vert\geq 2$, and is unique.
Let now $\mathcal P$ be a finite subset of a cyclically ordered set $X$. Then $\mathcal P$ defines a partition of $X$ into half-open intervals $[x,y)$, for all pairs $x,y\in {\mathcal P}$ where $y$ is the successor of $x$ in $\mathcal P$. We will denote this partition $\hat {\mathcal P}$. If $\vert {\mathcal P}\vert\leq 1$, then by definition the corresponding partition is trivial, $\hat {\mathcal P}=\{\Omega\}$. (If there is a single point, $x$, in $\mathcal P$, then one may say the only half-open interval contained in $\hat {\mathcal P}$ is $[x,x)=\Omega$.)

\begin{lemma}
Let $f\colon X\to Y$ be a surjective monotone map between two cyclically ordered sets, and let ${\mathcal P}\subseteq X$ be a finite subset. Then every half-open interval in the partition $\widehat{f({\mathcal P})}$ of $Y$ defined by $f({\mathcal P})$ is the image of some interval of the partition $\hat {\mathcal P}$ of $X$ defined by $\mathcal P$.
\label{l:intervals}
\end{lemma}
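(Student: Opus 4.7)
The plan is, given any half-open interval $[u,v) \in \widehat{f(P)}$, to exhibit an interval $[a,\xi) \in \hat P$ whose endpoints satisfy $f(a) = u$ and $f(\xi) = v$, so that the endpoint images trace out exactly $[u,v)$. The whole construction reduces to a single well-chosen base point for the linearization, together with the characterization of monotonicity as a non-decreasing map between linearized orders.

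Since $v \in f(P)$, I would pick some $\xi \in P$ with $f(\xi) = v$, and work in the linear orders $<_\xi$ on $X$ and $<_v$ on $Y$; by the equivalent formulation of monotonicity already noted in the paper, $f\colon (X,<_\xi) \to (Y,<_v)$ is non-decreasing. Because $v$ is the cyclic successor of $u$ in the finite set $f(P)$, unrolling $f(P)$ at $v$ places $u$ as the $<_v$-maximum of $f(P)$. Now set $a = \max_{<_\xi} P$ (which exists by finiteness of $P$). Non-decreasing-ness then immediately gives $f(a) = \max_{<_v} f(P) = u$. Moreover $\xi$, being the $<_\xi$-minimum of all of $X$ (and hence of $P$), must be the cyclic successor of $a$ in $P$: any other candidate $z \in P\setminus\{a,\xi\}$ with $[a,z,\xi]$ would by the cyclic order axioms satisfy $[\xi,a,z]$, i.e.\ $a <_\xi z$, contradicting the maximality of $a$. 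Hence $[a,\xi) \in \hat P$ and its endpoint-induced cyclic interval in $Y$ is precisely $[f(a),f(\xi)) = [u,v)$.

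The main obstacle is aligning the two linearizations so that the $<_\xi$-maximum on the $X$-side is automatically pushed to the $<_v$-maximum of $f(P)$; this is exactly what the successor relation $v = \mathrm{succ}_{f(P)}(u)$ buys us, but only once the base point $\xi$ is chosen in $P \cap f^{-1}(v)$ rather than in $P \cap f^{-1}(u)$, and this is the single non-obvious design choice in the argument. A secondary verification, routine from monotonicity, is that $f$ genuinely carries the arc $[a,\xi) \subseteq X$ onto $[u,v) \subseteq Y$: for any $w \in (u,v)$, surjectivity produces $\zeta \in X$ with $f(\zeta) = w$, and applying the monotonicity axiom to the triple $(a,\zeta,\xi)$ whose image-values $u,w,v$ are pairwise distinct forces $\zeta \in (a,\xi)$, while $a$ itself supplies the left endpoint $u$.
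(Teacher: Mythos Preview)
Your argument for the generic case $\lvert f(P)\rvert \geq 2$ is correct and is a streamlined variant of the paper's approach. Both proofs linearize at a point of $P$ mapping to the right endpoint; the paper then performs a two-step refinement (first replacing the left preimage by $x' = \max_{<_y}(P\cap f^{-1}(u))$, then the right by $y' = \min_{<_x}(P\cap f^{-1}(v))$, and finally arguing that $y'$ is the successor of $x'$), whereas you get the left endpoint in one stroke as $a=\max_{<_\xi}P$ and observe that $\xi$ is then automatically its successor. Your route is shorter and in fact lands on the same interval, since $f(a)=u$ forces $a$ to coincide with the paper's $x'$, and uniqueness of successors gives $\xi=y'$. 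Like the paper, you verify only the inclusion $[u,v)\subseteq f([a,\xi))$ explicitly; the reverse inclusion is indeed routine from the non-decreasing property (for $\zeta\in(a,\xi)$ one has $u\leq_v f(\zeta)$, and $f(\zeta)=v$ would force $u=v$).

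You have, however, silently assumed $\lvert f(P)\rvert\geq 2$: otherwise there is no successor pair $u\neq v$ to anchor the construction. When $\lvert f(P)\rvert\leq 1$ the partition $\widehat{f(P)}$ is the trivial one $\{Y\}$. The sub-case $\lvert P\rvert\leq 1$ is then immediate from surjectivity, but the sub-case $\lvert f(P)\rvert=1$ with $\lvert P\rvert\geq 2$ still requires an argument that some single cell of $\hat P$ already surjects onto $Y$; this does not follow from your construction. The paper handles this with a separate short paragraph (showing that all of $X\setminus P$ must lie in a single cell of $\hat P$, else monotonicity would be violated), and your proof needs the same patch.
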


\begin{proof}
Let $x,y\in {\mathcal P}$, where $b=f(y)$ is the successor of $a=f(x)$ in $f({\mathcal P})$. Denote $x^\prime$ the maximal element in the finite set $f^{-1}(a)$ with regard to the linear order $<_y$. The interval $[x^\prime,y)$ contains no other elements of $f^{-1}(a)$. Now let $y^\prime$ be the minimal element in the finite set $f^{-1}(b)$ with regard to the linear order $<_x$. The interval $[x^\prime,y^\prime)\subseteq [x^\prime,y)$ still contains no elements of $f^{-1}(a)$ other than $x^\prime$, and no elements of $f^{-1}(b)$ other than $y^{\prime}$. Then $y^\prime$ is the successor of $x^\prime$ in $\mathcal P$: any element $w$ of $\mathcal P$ strictly between those two would have either satisied $[a,f(w),b]$ or coinside with $a$ or $b$, both of which are impossible. 

We claim that in this case, $f[x^\prime,y^\prime)=[a,b)$. Let $w\in (a,b)$, that is, $[a,w,b]$. Since $f$ is surjective, there is $z\in X$ with $f(z)=w$. Because of monotonicity of $f$, we must have $[x^\prime,z,y^\prime]$, that is, $z\in (x^\prime,y^\prime)$. We conclude.

The trivial case $f(P)=\emptyset={\mathcal P}$ is obvious. Finally, suppose $f({\mathcal P})$ only contains one element, $a$, that is, $f^{-1}(a)={\mathcal P}$. If $Y$ only contains one element other than $a$, just select any interval of $\hat {\mathcal P}$ containing a preimage of this element. Else, we claim that all of $X\setminus {\mathcal P}$ is contained in only one interval of $\hat {\mathcal P}$. Indeed, let $x,y\in X\setminus {\mathcal P}$ be such that $f(x)\neq f(y)$. If $x$ and $y$ belong to different intervals of $\hat {\mathcal P}$, there exist $z,w\in {\mathcal P}$ with $[x,z,y]$ and $[x,y,w]$. This implies the incompatible properties $[f(x),a,f(y)]$ and $[f(x),f(y),a]$. From here the statement easily follows.
\end{proof}

If $f\colon X\to Y$ is a measurable map between two standard Borel spaces and $\nu$ is a Borel probability measure on $X$, then the pushforward measure $\nu\circ f^{-1}$ on $Y$ (which is also a Borel probability measure) is defined by letting $\nu\circ f^{-1}(A)= \nu(f^{-1}(A))$ for every Borel subset $A\subseteq Y$.

\begin{lemma}
Given a Borel probability measure on the circle $\s^1$, there is a monotone (hence Borel) map $f\colon\s^1\to\s^1$ with $\nu\circ i^{-1}=\mu$, where $\nu$ is the Haar measure on the circle.
\label{inumu}
\end{lemma}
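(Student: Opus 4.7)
The plan is to adapt the classical quantile (inverse-CDF) construction to the cyclic setting. Fix a basepoint $1\in\s^1$ and parametrise by $p\colon[0,1)\to\s^1$, $p(t)=e^{2\pi\mathbf{i}t}$. This $p$ is a Borel bijection that is monotone in the sense relating the linear order on $[0,1)$ to the cyclic order on $\s^1$, and it transports Lebesgue measure $\lambda$ on $[0,1)$ to the Haar measure $\nu$. Let $\mu'=\mu\circ p$ denote the pullback of $\mu$ to $[0,1)$. It then suffices to construct a non-decreasing Borel map $G\colon[0,1)\to[0,1)$ with $G_\ast\lambda=\mu'$ and transfer it to the circle via $p$.

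For this, I would define $F\colon[0,1]\to[0,1]$ by $F(t)=\mu'([0,t))$, which is non-decreasing and left-continuous with $F(0)=0$ and $F(1)=1$, and take its quantile
\[
G(s)=\sup\{t\in[0,1) : F(t)\leq s\}, \qquad s\in[0,1).
\]
A routine argument (essentially the standard inverse-CDF trick, checking the sets $\{s:G(s)<t\}$ and their Lebesgue measure) shows that $G$ is non-decreasing and satisfies $G_\ast\lambda=\mu'$. Atoms of $\mu$ manifest as jumps of $F$ and translate into flat pieces of $G$; this is harmless, since the paper's definition of monotonicity only constrains triples whose images are pairwise distinct.

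I would then set $f(p(s)):=p(G(s))$, giving a map $f\colon\s^1\to\s^1$. The identity $f_\ast\nu=\mu$ is immediate from $G_\ast\lambda=\mu'$ and the choices of parametrisation. For cyclic monotonicity, consider a triple $x_1,x_2,x_3\in\s^1$ for which the three images $f(x_j)$ are pairwise distinct. After a cyclic relabelling we may assume $0\leq s_1<s_2<s_3<1$ with $x_j=p(s_j)$. Since $G$ is non-decreasing and the values $G(s_j)$ are distinct, in fact $G(s_1)<G(s_2)<G(s_3)$, and then $p$ carries this linear order into the cyclic order on $\s^1$. Borel measurability of $f$ follows from the previously noted fact that every monotone self-map of $\s^1$ is automatically Borel.

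The only subtlety worth flagging, rather than a genuine obstacle, is the interaction of the construction with the wrap-around identification at the basepoint. The paper's permissive definition of cyclic monotonicity — requiring preservation of $[\,\cdot\,,\cdot\,,\cdot\,]$ only when all three images differ — precisely absorbs this issue: neither an atom of $\mu$ at the basepoint nor the collapse of intervals by $G$ can produce a counterexample to cyclic monotonicity.
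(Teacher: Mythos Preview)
Your proof is correct and follows essentially the same route as the paper: conjugate by the parametrisation $[0,1)\leftrightarrow\s^1$ and apply the inverse-CDF construction on the interval. The only cosmetic differences are that the paper uses the right-continuous CDF $F(t)=\mu'[0,t]$ with the $\inf$-form quantile $\theta\mapsto\inf\{t:F(t)\ge\theta\}$, whereas you use the left-continuous variant with a $\sup$; you also spell out the cyclic-monotonicity check more carefully than the paper does.
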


\begin{proof}
The map $j\colon\s^1\ni e^{2\pi\mathbf{i} t}\mapsto t\in[0,1)$ is a Borel isomorphism. 
The push-forward measure $\nu\circ j^{-1}$ is the Lebesgue measure on the unit interval, $\lambda$, so $j$ is an isomorphism between the Lebesgue probability spaces $(\s^1,\nu)$ and $([0,1),\lambda)$. 
Denote $\mu^\prime=\mu\circ j^{-1}$ the push-forward measure, and let $F$ be the corresponding distribution function, $F(t)=\mu^\prime(-\infty,t]$ ($=\mu^\prime[0,t]$ for $t\in [0,1]$). Let $i^\prime\colon [0,1)\ni\theta\mapsto \inf\{t\in [0,1]\colon F(t)\geq \theta\}\in [0,1)$. This is a monotone map with $\lambda\circ i^{\prime -1}=\mu^\prime$. Finally, define $i=j^{-1}\circ i^{\prime}\circ j$. This is the desired monotone map from $\s^1$ to itself that pushes forward $\nu$ to $\mu$.
\end{proof}

\begin{lemma}
Given $k$, $N$, and $\delta>0$, there exists $M=M(k,n,\delta)$ so large that for every Borel probability measure $\mu$ on the circle $\s^1$, if $k+M$ i.i.d. points following the law $\mu$ are chosen, then with confidence $1-\delta$ every interval of the circle partition $\hat {\mathcal P}$ generated by the random finite set ${\mathcal P}=\{X_1,X_2,\ldots,X_k\}$ contains at least $N$ points from among $X_{k+1},X_{k+2},\ldots,X_{k+M}$.
\label{l:M}
\end{lemma}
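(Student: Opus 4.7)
The plan is to reduce to the uniform case on the circle via the monotone representation from Lemma \ref{inumu}. Let $i\colon\s^1\to\s^1$ be a monotone map with $\nu\circ i^{-1}=\mu$, where $\nu$ is the Haar measure. Let $(U_j)_{j\geq 1}$ be i.i.d.\ $\nu$-distributed random points, and set $X_j:=i(U_j)$, so that $(X_j)$ are i.i.d.\ $\mu$-distributed. Write $P=\{X_1,\ldots,X_k\}=i(P^*)$ where $P^*=\{U_1,\ldots,U_k\}$. The statement will then follow in full generality once it is proved for $\mu=\nu$.

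First I would establish a cell-by-cell correspondence between $\hat P$ and $\hat{P^*}$. Fix any interval $[a,b)\in \hat P$. Let $u_a$ denote the cyclically last element of $P^*\cap i^{-1}(a)$ and $u_b$ the cyclically first element of $P^*\cap i^{-1}(b)$, moving from $a$ to $b$ in the cyclic direction. Cyclic monotonicity of $i$ then forces $u_b$ to be the successor of $u_a$ in $P^*$ (this is essentially the argument inside the proof of Lemma \ref{l:intervals}) and forces $i^{-1}([a,b))=[u_a,u_b)$. Hence for every index $j$,
\[X_j\in [a,b)\iff U_j\in [u_a,u_b),\]
and the number of indices $j\in\{k+1,\ldots,k+M\}$ with $X_j\in [a,b)$ equals the number with $U_j\in [u_a,u_b)$. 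Therefore, whenever every cell of $\hat{P^*}$ contains at least $N$ of the samples $U_{k+1},\ldots,U_{k+M}$, the same holds for every cell of $\hat P$ with respect to $X_{k+1},\ldots,X_{k+M}$. This reduces the lemma to $\mu=\nu$.

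For the uniform case the argument splits into two stages. Stage one uses only $U_1,\ldots,U_k$: by the standard distribution of uniform spacings on the circle, a union bound over pairs gives $P[\min\text{ spacing of }P^*<\e]\leq k^2\e$, so choosing $\e_0=\e_0(k,\delta)$ small enough guarantees that with probability at least $1-\delta/2$ every cell of $\hat{P^*}$ has $\nu$-measure at least $\e_0$. Stage two uses $U_{k+1},\ldots,U_{k+M}$, which are independent of $P^*$: conditionally on any realisation of $\hat{P^*}$ in which every cell has measure $p\geq\e_0$, the count in each cell is $\mathrm{binomial}(M,p)$, so a Chernoff estimate bounds the probability of the count being below $N$ by $\exp(-cM\e_0)$, provided $M$ is sufficiently large compared with $N/\e_0$. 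Union bound over the at most $k$ cells then makes the overall stage-two failure probability at most $\delta/2$ for $M\geq M(k,N,\delta)$ large enough. Combining both stages yields total failure probability at most $\delta$, as required.

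The main technical obstacle is the correspondence in the second paragraph: the map $i$ need not be either injective or surjective (it is piecewise constant on level sets wherever $\mu$ has atoms and skips any gaps in the support of $\mu$), so one must carefully exploit cyclic monotonicity to confirm that the preimage of each cell of $\hat P$ is a single cell of $\hat{P^*}$ and not a union of several. The routine part will be the Chernoff + union bound calculation, which only needs to deliver \emph{some} finite $M(k,N,\delta)$ without any care about the rate.
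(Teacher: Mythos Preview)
Your approach is essentially the paper's: reduce to the Haar case via the monotone map of Lemma~\ref{inumu}, and for the uniform case run a two-stage argument (bound the minimum spacing of $U_1,\dots,U_k$, then count samples per cell). The only real difference is in stage two: the paper uses a VC uniform-deviation bound over all half-open cyclic intervals (VC dimension~$3$), while you use a Chernoff bound plus a union over the at most $k$ realised cells. Both are valid; yours is the more elementary route, the paper's has the side benefit of controlling every interval simultaneously. The spacing bounds in stage one are the same up to constants.

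There is, however, a gap in your reduction step. The identity $i^{-1}([a,b))=[u_a,u_b)$ and the biconditional $X_j\in[a,b)\iff U_j\in[u_a,u_b)$ are both false when $\mu$ has atoms. If $\mu\{a\}>0$, the level set $i^{-1}(a)$ is a non-degenerate arc and $u_a$ is merely the \emph{last} $P^*$-point inside it; a later sample $U_j$ ($j>k$) landing in $i^{-1}(a)$ but cyclically before $u_a$ has $X_j=a\in[a,b)$ yet $U_j\notin[u_a,u_b)$. Symmetrically, if $\mu\{b\}>0$, a $U_j$ in the initial segment of $i^{-1}(b)$ preceding $u_b$ lies in $[u_a,u_b)$ but has $X_j=b\notin[a,b)$. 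So neither inclusion holds and the two counts need not coincide; in particular your last paragraph misidentifies the obstacle (the preimage of a cell of $\hat P$ can be a proper part of a single cell of $\hat{P^*}$, not only a union of several). The paper does not claim this biconditional: it invokes Lemma~\ref{l:intervals} only for the one-sided statement that each cell of $\hat P$ (intersected with the range of $i$) is the $i$-image of some cell of $\hat{P^*}$, and uses that to transfer the lower bound $N$ from $\hat{P^*}$ to $\hat P$. You should weaken your claim to that one-sided inequality of counts.
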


\begin{proof}
First, we prove the lemma for $\s^1$ with the Haar measure. Fix a sufficiently small $\e>0$. The probability of all the intervals of the circular partition made by ${\mathcal P}=\{x_1,x_2,\ldots,x_k\}$ to have arc length $\geq\e$ is
\begin{align*}
(1-2\e)(1-4\e)\cdot\ldots\cdot(1-(k-1)\e) & > 1-2\e-4\e-\ldots- (k-1)\e \\
&=1-k(k-1)\e.
\end{align*}
Thus, if we set $\e=\delta/2k(k-1)$, then with confidence $1-\delta/2$ every interval will have length $\geq\e$. 

Since the VC dimension of the family of all half-open intervals of the circle is $d=3$, the sample size that suffices to empirically estimate the measure of all the intervals with confidence $1-\delta/2$ to within the precision $\e/2$ does not exceed
\[M^\prime = \max\left\{\frac{48}{\e}\log\frac{16e}{\e},\frac{8}{\e}\log\frac{4}{\delta} \right\}.\]
(Here we use the bounds from \citet{vidyasagar}, p. 269, Th. 7.8.) Set \[M=\max\left\{M^\prime,\frac{2N}{\e}\right\}.\]
For $n\geq M$, if $\sigma$ is an $n$-sample, then, denoting $\nu_n$ the empirical measure, we have with confidence $1-\delta$ that for each interval $I$ of the partition:
\[\nu_n(I)\geq \nu(I)-\frac{\e}2 \geq \frac {\e}2,\]
that is, $I$ contains at least $n\e/2\geq N$ points of the sample.

Now let $\mu$ be an arbitrary measure on $\s^1$. Select a monotone map $i\colon\s^1\to\s^1$ pushing forward the Haar measure $\nu$ to $\mu$ (Lemma \ref{inumu}). The random elements $X_1,\ldots,X_{n+k} \sim\mu$ can be written as $i(X^\prime_1),\ldots,i(X^\prime_{n+k})$, where $X^\prime_i$ are i.i.d. random elements following the law $\nu$. 
According to Lemma \ref{l:intervals}, for every interval of the partition generated by $X_1,\ldots,X_k$ its intersection with $i(\s^1)$ is the image of some interval of the partition generated by $X^\prime_1,\ldots,X^\prime_k$, and so, according to the first part of our proof, with confidence $1-\delta$, all those intervals contain at least $N$ sample points each.
\end{proof}

Say that a finite subset $\mathcal P$ of the circle $\s^1$ is $\e$-dense with regard to a probability measure $\mu$, if $\mathcal P$ meets every half-open interval of measure $\geq \e$. 

\begin{lemma}
Let $\mu$ be a Borel probability measure on the circle $\s^1$, and let $X_1,X_2,\ldots$ be a sequence of i.i.d. random elements of $\s^1$ following the law $\mu$. Let $\e>0$. Almost surely, starting with some $k$ large enough, the random finite set $\{X_1,\ldots,X_k\}$ is $\e$-dense.
\label{l:edense}
\end{lemma}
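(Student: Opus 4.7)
The plan is to invoke a uniform Glivenko--Cantelli theorem for the class of half-open intervals of the circle. As observed in Section \ref{s:cyclic}, this class has Vapnik--Chervonenkis dimension exactly $3$. By the VC strong uniform law of large numbers (for instance, via the finite-sample bounds of \cite{vidyasagar}, Chapter 7, combined with Borel--Cantelli), the empirical measures $\hat\mu_k = \frac{1}{k}\sum_{i=1}^k \delta_{X_i}$ associated to the i.i.d. sample $X_1,X_2,\ldots$ satisfy
\[\sup_I \left\vert\,\hat\mu_k(I) - \mu(I)\,\right\vert \longrightarrow 0 \quad\text{almost surely as }k\to\infty,\]
where the supremum is taken over all half-open intervals $I\subseteq\s^1$.

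Given this uniform convergence, the conclusion follows almost immediately. On the full-measure event where the supremum above tends to zero, there exists a (random) $k_0$ such that for every $k\geq k_0$,
\[\sup_I \left\vert\,\hat\mu_k(I) - \mu(I)\,\right\vert < \frac{\e}{2}.\]
Then any half-open interval $I$ with $\mu(I)\geq\e$ satisfies $\hat\mu_k(I) \geq \mu(I) - \e/2 \geq \e/2 > 0$, which forces at least one of $X_1,\ldots,X_k$ to lie in $I$. This is precisely the statement that $\{X_1,\ldots,X_k\}$ is $\e$-dense with respect to $\mu$ for every $k\geq k_0$, which is what we wanted.

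There is essentially no technical obstacle once the VC uniform bound is invoked; the argument reduces to a single observation about the relationship between measure and empirical measure. A more elementary route, if one wished to avoid uniform convergence, would be to split $\mu$ into its atomic and continuous parts: the (finitely many) atoms of mass $\geq\e/2$ are hit almost surely (in fact, infinitely often) by the second Borel--Cantelli lemma, while for the non-atomic part one can fix a partition of $\s^1$ into finitely many intervals of $\mu$-measure below $\e/4$ and apply the strong law of large numbers to each. However, since the same VC tool is already invoked in the proof of Lemma \ref{l:M}, reusing it here keeps the exposition both uniform and short.
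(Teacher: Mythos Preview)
Your proof is correct but follows a different route from the paper's. The paper does not invoke the uniform Glivenko--Cantelli theorem here: instead it uses the monotone parametrization of Lemma~\ref{inumu} to push the problem back to the Haar measure $\nu$ on $\s^1$, lays down a fixed finite cover $Q$ of the circle by arcs of Haar length between $\e/3$ and $\e/2$, bounds the probability that a $\nu$-i.i.d.\ sample of size $k$ misses some arc of $Q$ by $n_0(1-\e/3)^k$, and applies Borel--Cantelli directly to this summable sequence. Any half-open interval $J$ with $\mu(J)\geq\e$ pulls back under the monotone map to a cyclic interval of Haar measure $\geq\e$, which therefore contains an entire arc of $Q$; this finishes the argument.

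Your approach trades the monotone-map reduction and the explicit cover for a single invocation of the VC uniform law of large numbers. It is shorter and, as you observe, reuses a tool already present in the proof of Lemma~\ref{l:M}. The paper's route, by contrast, keeps this particular lemma elementary (a finite union bound plus geometric decay, no VC machinery) and exercises once more the monotone parametrization that has been set up for Lemma~\ref{l:M} anyway. Either argument is perfectly adequate for the purpose.
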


\begin{proof}
Fix a cyclically monotone parametrization $i\colon\s^1\to\s^1$ pushing forward the Haar measure $\nu$ to $\mu$ (Lemma \ref{inumu}). 
Let $\mathcal Q$ be a cover of the circle with $n_0\geq 2\e^{-1}$ intervals of Haar measure between $\e/3$ and $\e/2$ each. Let $Y_1,\ldots,Y_k$ be i.i.d. random elements of $\s^1$ following the law $\nu$.
The probability for all of them to miss at least one of the intervals from $\mathcal Q$ is bounded by $n_0(1-\e/3)^k$, and this is a summable sequence in $k$. By the Borel-Cantelli lemma, almost surely, starting with some $k$ high enough, in every interval $I\in {\mathcal Q}$ there is contained at least one random element from among $Y_i$, $i=1,2,\ldots,k$. 
Let $J$ be a cyclic interval with $\mu(J)\geq \e$. The inverse image $i^{-1}(J)$ is again a cyclic interval by the definition of a monotone map, and $\nu(i^{-1}(J))=\mu(I)$. The interval $i^{-1}(J)$ must wholly contain at least one interval $I\in {\mathcal Q}$. We conclude: almost surely, some $X_i=i(Y_i)$ belongs to $J$.
\end{proof}

\section{The learning rule}
\label{s:rule}

Select a sequence $(\e_n)$ of positive numbers converging to zero, with $\e_1<1/2$. Select a summable sequence of positive numbers $(\delta_n)$, that is, $\sum_{n=1}^{\infty}\delta_n<\infty$, satisfying $\delta_1<1$.

Put $a_1=0$, $b_1=1$, and further select $N_k,a_k,b_k$, $k>1$, recursively as follows.

\begin{enumerate}
\item Let $N_k=N(b_{k-1},\e_k)$ be chosen as in Lemma \ref{l:key}, with $n=b_{k-1}$ and $t=\e_k$. 

In other words, for all $N\geq N_k$, $N$ odd, and all $p\in [0,1]$,
\begin{align*}
P[\mathrm{binomial}(p,N)&\in(\e_kN,(1-\e_k)N)]\cdot \wideparen{L}(p,N)+ \\
& P[\mathrm{binomial}(p,N)\notin (\e_kN,(1-\e_k)N)]\cdot L(p,N) \leq L(p,b_{k-1}).
\end{align*}

\item Choose $a_k=M(k,N_k,\delta_k)$ as in Lemma \ref{l:M}.

That is, $a_k$ is so large that for every Borel probability measure $\mu$ on the circle $\s^1$, if $k+a_k$ i.i.d. points $\sim\mu$ are chosen, then with confidence $1-\delta_k$ every interval of the circle partition generated by the random finite set ${\mathcal P}=\{X_1,X_2,\ldots,X_k\}$ contains at least $N_k$ elements from among $X_{k+1},X_{k+2},\ldots,X_{k+a_k}$.

\item Now choose $b_k$, again using Lemma \ref{l:M}, as $b_k=M(k,a_k,\delta_k)$. 

In full, for every Borel probability measure $\mu$ on $\s^1$, if $k+b_k$ i.i.d. points $\sim\mu$ are chosen, then with confidence $1-\delta_k$ every interval of the partition generated by $\{X_1,X_2,\ldots,X_k\}$ contains at least $a_k$ elements from among $X_{k+1},\ldots,X_{k+b_k}$.
\end{enumerate}

Set $n_1=1$ and further, recursively,
\[n_{k}=n_{k-1}+a_k+b_k+1.\]

Denote $A_1=\emptyset$, $B_1=\{1\}$, and for $k>1$,
\[A_k=(n_{k-1}+2,\ldots,n_{k-1}+a_k+1),~~
B_k=(n_{k-1}+a_k+2,\ldots,n_{k-1}+a_k+b_k+1).\]
Denote ${\mathcal P}_1=\emptyset$ and for every $i\geq 2$ set 
\[{\mathcal P}_i=\{x_{n_j+1}\colon j=1,\ldots,i-1\}.\] 

For a finite subset $I$ of the positive integers and a labelled sample $\sigma$, we will denote $\sigma[I]$ a labelled subsample of $\sigma$ consisting of all pairs labelled with $i\in I$, in the same order.

Recall further that for a finite set $\mathcal Q$, we denote $\hat{\mathcal Q}$ the partition of the circle $\s^1$ into half-open cyclic intervals determined by the finite set ${\mathcal Q}$. Also, given a partition $\mathscr P$, the corresponding histogram classifier is denoted $h_{\mathscr P}$. 

Finally, $P_{\sigma[A_i]}$ is the (conditional) empirical probability supported on the subsample $\sigma[A_i]$, in particular,
\[P_{\sigma[A_i]}[Y=1\vert X\in I] = \frac{\sharp\{j\in A_i\colon x_j\in I,~y_j=1\}}{\sharp\{j\in A_i\colon x_j\in I\}}.\]

Here is the algorithm description.
\newpage

\begin{quote}
\hrule
\vskip .2cm
\begin{tabbing}
\quad \=\quad \=\quad \=\quad\=\quad\=\quad\=\quad\=\quad\=\quad\kill
\keyw{on input} $\sigma_n$ \keyw{do} \\
\> $k\leftarrow\max\{i\colon n_i\leq n\}$ \\
\> ${\mathcal Q}\leftarrow\emptyset$ \\
\> ${\mathcal R}\leftarrow\emptyset$ \\
\> \keyw{for} $i=1:k$ \keyw{do} \\
\>\> \keyw{if} every interval $I \in\hat{\mathcal P}_{i}$ contains $\geq a_i$ points of $\sigma[B_i]$ \keyw{and}\\
\>\>\>  ($i=1$ \keyw{or} every interval $I \in \hat {\mathcal Q}$ contains $\geq N_i$ points of $\sigma[A_i]$) \keyw{do} \\
\>\>\>\> \keyw{if} $k>1$ \keyw{do} \\
\>\>\>\>\> \keyw{for} every  $I \in\hat{\mathcal Q}$  \keyw{do} \\
\>\>\>\>\>\> \keyw{if} 
$P_{\sigma[A_i]}[Y=1\vert X\in I]\in (\e_i,1-\e_i)$, 
\keyw{do} \\
\>\>\>\>\>\>\> ${\mathcal R}\leftarrow {\mathcal R}\cup ({\mathcal P}_i\cap I)$ \\ 
\>\>\>\>\>\>\> \keyw{end do}\\
\>\>\>\>\>\>  \keyw{end if} \\
\>\>\>\>\>\> \keyw{end do}\\
\>\>\>\>\> \keyw{end for}\\
\>\>\>\>  \keyw{end if} \\
\>\>\>\> ${\mathcal Q}\leftarrow {\mathcal R}$ \\
\>\>\>\> $H\leftarrow {h}_{\hat{\mathcal Q}}(\sigma[B_i])$ \\
\>\>\> \keyw{end do} \\
\>\> \keyw{end if} \\
\>\keyw{end for} \\
\keyw{end do} \\
\keyw{return} $H$
\end{tabbing} 
\vskip .2cm

\hrule
\vskip .4cm
\end{quote}

\section{Monotonicity of the expected error} 
\label{s:monotonicity}

The hypothesis can only be updated at the moments $n=n_k$, so 
it is enough to compare the expected error of $g_{n_{k-1}}$ and $g_{n_{k}}$. Denote $i$ the largest integer $<k$ such that the hypothesis was updated at the step $n_i$. Denote ${\mathcal Q}_i$ the state of the partitioning set $\mathcal Q$ at the moment $n=n_i$. This is a random finite subset of the circle with $\leq i$ elements. As before, we denote $\hat {\mathcal Q}_i$ the family of half-open intervals into which the circle is partitioned by the finite set ${\mathcal Q}_i$. We will be conditioning on $k$, $i$, and ${\mathcal Q}_i$, so from now on, the integers $i,k$ and a finite subset ${\mathcal Q}_i\subseteq\s^1$ (possibly empty) are fixed, while ${\mathcal Q}_k\supseteq {\mathcal Q}_i$ stays random, and we do not know whether a hypothesis update was made at the time $n_k$. We will further condition on the event (A) ``every interval of $\hat {\mathcal Q}_k$ contains at least $N_i$ points of the testing sample $\sigma[A_k]$'', because given the complementary event, no testing and update were made and $h_{n_k}=h_{n_i}=k_{n_{k-1}}$.

It is now enough to verify, conditionally on the above, that for every interval $I\in\hat {\mathcal Q}_i$,
\begin{equation}
P[g_k(X)\neq Y\mid X\in I]\leq P[g_i(X)\neq Y\mid X\in I].
\label{eq:I}
\end{equation}
Fix such an interval $I$. Conditioning further on the size of the samples $\sigma[B_i]\upharpoonright I$, $\sigma[A_k]\upharpoonright I$, and $\sigma[B_k]\upharpoonright I$, we see they are conditionally i.i.d., and conditionally jointly independent. The sample $\sigma[A_k]\upharpoonright I$ is conditionally independent on the random partition ${\mathcal P}_k\cap I$. Moreover, conditionally on the event (A) above, we have $m_k=\sharp\sigma[A_k]\upharpoonright I\geq N(b_{k-1},\e_k)$, where $b_{k-1}>\sharp\sigma[B_i]\upharpoonright I$. 
We are under the assumptions of Lemma \ref{l:key_piece}. 

Denote $\hat{\mathcal P}_k[I]$ the family of all the intervals of the partition $\hat{\mathcal P}_k$  contained in $I$. This is a finite random partition of $I$ (possibly trivial), given by the random set ${\mathcal P}_k\cap I$. For every interval $J\in \hat{\mathcal P}_k[I]$, set $m_J=\sharp\sigma[B_k]\upharpoonright I$. According to Lemma \ref{l:key_piece}, conditionally on the event ``for all $J$, $m_J\geq a_k$'' the inequality (\ref{eq:I}) above holds. Since it also holds trivially conditionally on the complementary event (in which case it turns into equality), we are done.

\section{Universal consistency} 
\label{s:consistency}

The difficulty here is that the diameter of a random cell (that is, an interval $I=I(X)$ in $\hat{\mathcal Q}_k$ containing a random element $X$) need not converge to zero in probability, and not only because of $\eta$. Enough to consider the case where the measure $\mu$ is supported on an atom located at $1$ and a small arc of length $\e>0$ around $-1$. Almost surely, starting with some $k$, $\hat{\mathcal Q}_k$ will contain two intervals of arc length $>1/2-\e$ each.

Analysis of the proof of Theorem 6.1 in \citet{DGL} shows that the requirement of the cell diameter going to zero in probability is only needed in order to prove that the sequence of conditional expectations of the regression function $\eta$ formed with regard to the sequence of random partitions converges to $\eta$. This would be, in our case,
\begin{equation}
\E(\eta\mid \hat {\mathcal Q}_k)\overset p\to \eta.
\label{l:etamidQk}
\end{equation}
We will prove it directly.

\begin{lemma}
Let $(\mu,\eta)$ be a learning problem on the circle $\s^1$. Almost surely, starting with some $k$ large enough, at every step $n_k$ every interval of the random partition $\hat{\mathcal Q}_k$ will be tested and the hypothesis will be updated.
\label{l:as}
\end{lemma}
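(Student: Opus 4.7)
The plan is to apply the Borel--Cantelli lemma to a summable sequence of ``failure'' events. For each $k\geq 2$, let $F_k$ denote the event that the outer test at iteration $i=k$ of the algorithm fails, i.e., that either (a) some interval $I\in\hat P_k$ contains fewer than $a_k$ points of $\sigma[B_k]$, or (b) some interval $I\in\hat Q_{k-1}$ contains fewer than $N_k$ points of $\sigma[A_k]$. If I can show $P(F_k)\leq 2\delta_k$, then $\sum_k P(F_k)<\infty$, and by Borel--Cantelli almost surely only finitely many $F_k$ occur. On the complementary event, starting from some random $K$, the outer test at step $n_k$ succeeds for all $k\geq K$, which is exactly the conclusion of the lemma.

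Bounding (a) is essentially immediate from how the parameters were chosen. The set $P_k=\{x_{n_j+1}\colon j=1,\ldots,k-1\}$ is an i.i.d. $\mu$-sample of size $k-1$; the labelling block $\sigma[B_k]$ consists of $b_k$ further i.i.d. observations; and the two are drawn from disjoint index ranges of the master i.i.d. sequence, hence are independent. Since $b_k$ was selected precisely as $M(k,a_k,\delta_k)$ via Lemma \ref{l:M}, (a) fails with probability at most $\delta_k$.

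The main obstacle is (b), because $\hat Q_{k-1}$ is a random partition whose joint distribution depends on the entire history of the algorithm, so Lemma \ref{l:M} cannot be applied to it directly. My plan is to sidestep this by a coarsening argument. Tracing the pseudocode, the accumulator $R$ only ever receives points via the assignment $R\leftarrow R\cup(P_i\cap I)$ for $i\leq k-1$, so $Q_{k-1}\subseteq\bigcup_{i\leq k-1}P_i=P_{k-1}\subseteq P_k$. Consequently $\hat Q_{k-1}$ is coarser than $\hat P_k$: every cell of $\hat Q_{k-1}$ is a union of cells of $\hat P_k$, so a lower bound of $N_k$ on the count in every cell of $\hat P_k$ automatically transfers to every cell of $\hat Q_{k-1}$. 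Applying Lemma \ref{l:M} to the partition generated by $P_k$ and the independent testing block $\sigma[A_k]$ of size $a_k=M(k,N_k,\delta_k)$ bounds the probability that (b) fails by $\delta_k$. A union bound then yields $P(F_k)\leq 2\delta_k$, and Borel--Cantelli together with the summability of $(\delta_k)$ completes the argument.
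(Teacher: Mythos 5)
Your proof is correct and follows essentially the same route as the paper: bound the failure probability of each step's admission test by $2\delta_k$ via Lemma \ref{l:M} and conclude by Borel--Cantelli. The one place where you go beyond what is written in the paper is the explicit observation that $Q_{k-1}\subseteq P_{k-1}\subseteq P_k$ makes $\hat Q_{k-1}$ a coarsening of $\hat P_k$; the paper's proof states the bound for $\hat P_k$ and leaves that transfer to $\hat Q_{k-1}$ implicit, so your version is a welcome clarification rather than a different argument.
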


\begin{proof}
By the choice of $a_k$, the event ``every interval of the random partition $\hat {\mathcal P}_k$ contains more than $N=N(\e_k,b_{k-1})$ points of $\sigma[A_k]$'' occurs with probability $>1-\delta_k$, and by the choice of $b_k$, the event ``every interval of the random partition $\hat{\mathcal P}_k$ contains more than $a_k$ points of $\sigma[B_k]$'' occurs with probability $>1-\delta_k$ as well. Since $(\delta_k)$ is a summable sequence, we conclude.
\end{proof}

\begin{lemma}
Let $(\mu,\eta)$ be a learning problem on the circle $\s^1$. Let $\e>0$. Almost surely, starting with some $k$ large enough, for every interval $I$ of the random partition $\hat {\mathcal Q}_k$ having the property $p=P[Y=1\mid X\in I]\in (\e,1-\e)$ we will have ${\mathcal P}_{k+1}\cap I \subseteq {\mathcal Q}_{k+1}$.
\label{l:addset}
\end{lemma}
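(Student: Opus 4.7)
The plan is to combine Lemma \ref{l:as} with Hoeffding's inequality, a union bound, and the Borel--Cantelli lemma. Fix $\e>0$. By Lemma \ref{l:as}, almost surely there exists a (random) $k_0$ such that for every $k\geq k_0$ the testing/update step at iteration $k+1$ of the algorithm is performed; in particular every interval of $\hat Q_k$ then contains at least $N_{k+1}$ points of $\sigma[A_{k+1}]$. Enlarging $k_0$ if needed, we may also assume $\e_{k+1}<\e/2$ for all $k\geq k_0$.

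On this event, for an interval $I\in\hat Q_k$ with $p_I:=P[Y=1\mid X\in I]\in(\e,1-\e)$, the inclusion $P_{k+1}\cap I\subseteq Q_{k+1}$ holds precisely when the empirical conditional frequency $\hat p_I := P_{\sigma[A_{k+1}]}[Y=1\mid X\in I]$ lies in $(\e_{k+1},1-\e_{k+1})$. Since $p_I\in(\e,1-\e)$ and $\e_{k+1}<\e/2$, it is enough to ensure $|\hat p_I-p_I|<\e/4$. Conditioning on $\hat Q_k$ and on the number $N_I\geq N_{k+1}$ of points of $\sigma[A_{k+1}]$ falling in $I$, the corresponding labels are i.i.d.\ Bernoulli$(p_I)$, so Hoeffding's inequality yields
$$P[\,|\hat p_I-p_I|>\e/4 \mid \hat Q_k,\,N_I\,]\leq 2\exp(-N_{k+1}\e^2/8).$$

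Because every element of $Q_k$ belongs to $P_{k+1}=\{x_{n_j+1}:j=1,\ldots,k\}$, the partition $\hat Q_k$ contains at most $k$ intervals, and a union bound delivers
$$P[\,\exists I\in\hat Q_k:\, p_I\in(\e,1-\e) \text{ and } \hat p_I\notin(\e_{k+1},1-\e_{k+1})\,]\leq 2k\exp(-N_{k+1}\e^2/8).$$
This bound is summable in $k$ provided $N_k$ grows faster than $(16/\e^2)\log k$. We are free to impose such a growth condition at the set-up stage of the algorithm, since Lemma \ref{l:key} only stipulates that $N_k$ exceed a certain threshold; enlarging $N_k$ (for instance, requiring $N_k\geq k^2$) is costless and affects no previous step. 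The Borel--Cantelli lemma, together with Lemma \ref{l:as}, then delivers the conclusion. The main obstacle is precisely this mild rate requirement on $N_k$, resolved at no cost thanks to the one-sided nature of Lemma \ref{l:key}.
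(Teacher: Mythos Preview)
Your argument is correct and follows the same route as the paper's proof: invoke Lemma \ref{l:as}, apply a Chernoff/Hoeffding bound to the conditional label frequencies, and finish with Borel--Cantelli. You are in fact more careful than the paper on two counts: you take a union bound over the at most $k$ random intervals of $\hat Q_k$, and you use the correct sample size $N_I\geq N_{k+1}$ in the concentration bound (the paper writes $a_{k'}$, which appears to be a slip, since only the points of $\sigma[A_{k+1}]$ landing in $I$ enter the test).

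One remark on your ``main obstacle'': the extra growth requirement $N_k\geq k^2$ that you propose is in fact already forced by the recursive construction of the algorithm. From the proof of Lemma \ref{l:M} one has $a_k=M(k,N_k,\delta_k)\geq 4N_k k(k-1)/\delta_k$ and $b_k=M(k,a_k,\delta_k)\geq 4a_k k(k-1)/\delta_k$, while the proof of Lemma \ref{l:key0} gives $N_{k+1}\geq b_k/(2\e_{k+1})$. Chaining these, $N_{k+1}$ dominates a constant times $N_k k^4/(\e_{k+1}\delta_k^2)$, so $N_k$ grows super-exponentially and $\sum_k 2k\exp(-N_{k+1}\e^2/8)<\infty$ for every fixed $\e>0$ without any additional stipulation. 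Your proposed fix is harmless, but unnecessary.
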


\begin{proof}
From Lemma \ref{l:as}, we know that almost surely, for all $k$ large enough, the cells of the partition ${\mathcal P}_k$ will be tested. For $k^\prime$ sufficiently large, $\e_{k^\prime}<\e/2$.  According to the Chernoff bound, 
\begin{align*}
P[\mbox{\small binomial}(a_{k^\prime},p)\notin[\e_{k^\prime},1-\e_{k^\prime}]] 
&\leq
P[\left\vert\mbox{\small binomial}(a_{k^\prime},p)-p\right\vert >\e/2] \\
<e^{-\e^2 a_{k^\prime}/4}.
\end{align*}
The series is summable, and by the Borel--Cantelli lemma, we conclude that the divisibility of $I$ will be certified almost surely from some step on. Consequently, our algorithm prescribes to add the set ${\mathcal P}_{k+1}\cap I$ to the partition ${\mathcal Q}_k$.
\end{proof}

\begin{lemma}
Let $(\mu,\eta)$ be a learning problem on the circle $\s^1$. Let $I$ be a half-open cyclic interval on which $\eta$ is neither a.e. equal to $1$ nor a.e. equal to $0$.
Almost surely, at some step $k$ we will have ${\mathcal Q}_k\cap I\neq\emptyset$.
\label{l:nonpure}
\end{lemma}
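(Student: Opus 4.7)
The plan is to argue by contradiction: I would suppose that, with positive probability, $Q_k\cap I=\emptyset$ for every $k$, and show that under this hypothesis some partitioning point must nevertheless be forced to land in $I$ at some step. The first ingredient is monotonicity of the partitioning set: inspection of the algorithm in Section \ref{s:rule} shows that the set $R$ (and hence $Q$ at the end of a successful outer iteration) only grows, so $Q_k\subseteq Q_{k+1}$ for every $k$. On the bad event, the unique cell $C_k\in\hat Q_k$ containing $I$ is therefore well defined for each $k$ and forms a decreasing sequence $C_1\supseteq C_2\supseteq\cdots\supseteq I$, with measurable limit $C_\infty=\bigcap_k C_k\supseteq I$.

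Next I would analyse the limit of the true conditional probabilities $p_{C_k}=\int_{C_k}\eta\,d\mu/\mu(C_k)$. The hypothesis on $\eta$ forces $\mu(I)>0$ as well as both $\int_I\eta\,d\mu>0$ and $\int_I(1-\eta)\,d\mu>0$ (otherwise $\eta$ would be a.e.\ $0$ or a.e.\ $1$ on $I$). Dominated convergence applied to $1_{C_k}\eta$ and to $1_{C_k}$ gives $p_{C_k}\to p_{C_\infty}$, and the three positivity conditions on $I$ imply $p_{C_\infty}\in(0,1)$. Picking any deterministic $\delta$ strictly below $\min(p_{C_\infty},1-p_{C_\infty})$, one has $p_{C_k}\in(\delta,1-\delta)$ for all $k$ large enough, so Lemma \ref{l:addset} yields, almost surely for $k$ large, $P_{k+1}\cap C_k\subseteq Q_{k+1}$. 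Since the points $x_{n_j+1}$ are i.i.d.\ with law $\mu$ and $\mu(I)>0$, the Borel--Cantelli lemma forces $x_{n_j+1}\in I$ for infinitely many $j$, so $P_{k+1}\cap I\neq\emptyset$ for all large $k$. Combining, $\emptyset\neq P_{k+1}\cap I\subseteq P_{k+1}\cap C_k\subseteq Q_{k+1}$ for some large $k$, contradicting the bad event.

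The main subtlety to be handled carefully is that the cutoff $\delta$ in the previous paragraph depends on the sample path through the random quantity $p_{C_\infty}$, whereas Lemma \ref{l:addset} is stated for a fixed deterministic $\delta$. I would resolve this in the standard way, by intersecting the probability-one events provided by Lemma \ref{l:addset} over the countable sequence $\delta=1/n$, $n\geq 1$, thereby obtaining a single probability-one event on which the conclusion of that lemma holds uniformly for every positive deterministic $\delta$; one then chooses $\delta=\delta(\omega)$ per sample path on this event, so that the argument above goes through almost surely on the bad event and forces it to have probability zero.
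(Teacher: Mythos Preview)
Your argument is correct, but it is more elaborate than needed, and the paper's proof avoids the two detours you take (the limit $C_\infty$ and the countable intersection over $\delta=1/n$).

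The key observation you are missing is a \emph{uniform deterministic} bound: for \emph{every} half-open interval $J\supseteq I$, one has
\[
P[Y=1\mid X\in J]=\frac{\int_J\eta\,d\mu}{\mu(J)}\ \in\ \bigl(p\mu(I),\,1-(1-p)\mu(I)\bigr),
\]
where $p=P[Y=1\mid X\in I]\in(0,1)$ by hypothesis. (Indeed $\int_J\eta\,d\mu\ge\int_I\eta\,d\mu=p\mu(I)$ and $\mu(J)\le 1$, and symmetrically for the upper bound.) Since $I$ and the learning problem $(\mu,\eta)$ are fixed, this interval is deterministic and does not depend on the sample path. You can therefore take $\e=\min\{p,1-p\}\cdot\mu(I)$ once and for all, and Lemma~\ref{l:addset} applies directly to whatever cell $J_k\in\hat Q_k$ happens to contain $I$, with no need to pass to a limit or to intersect over countably many thresholds.

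With this bound in hand the proof is two lines: almost surely, for all $k$ large enough, $P_{k+1}\cap I\neq\emptyset$ (Lemma~\ref{l:edense}), and either $Q_k\cap I\neq\emptyset$ already, or else the cell $J_k\supseteq I$ of $\hat Q_k$ satisfies $p_{J_k}\in(\e,1-\e)$, so by Lemma~\ref{l:addset} the set $P_{k+1}\cap J_k\supseteq P_{k+1}\cap I$ is added to $Q_{k+1}$.

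Your approach is not wrong---the dominated-convergence step and the countable-$\delta$ trick both work---but it trades a one-line elementary inequality for a limit argument and a measurability fix, neither of which is necessary here.
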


\begin{proof}
We have $p=P[Y=1\mid X\in I]\in (0,1)$. 
Every interval $J$ containing $I$ satisfies $P[Y=1\mid X\in J]\in (p\mu(I),1-p\mu(I))$. Almost surely, if $k$ is large enough, ${\mathcal P}_{k+1}\cap I\neq\emptyset$ (Lemma \ref{l:edense}), and either ${\mathcal Q}_k\cap I\neq\emptyset$, or else the interval $J_k$ of the partition $\hat {\mathcal Q}_k$ containing $I$ will be tested at the step $k+1$ and the set ${\mathcal P}_{k+1}\cap J_k$ added to the partitioning set (Lemma \ref{l:addset}). Thus, almost surely, ${\mathcal Q}_{k+1}\cap I\neq\emptyset$.
\end{proof}

Denote $\Sigma(\cup_k\hat {\mathcal Q}_k)$ the sigma-algebra generated by all the cyclic intervals determined by random partitions $({\mathcal Q}_k)$, $k\in\N$. Turns out, this random sigma-algebra 
has a rather transparent structure. We will clarify it now, as well as show that $\Sigma(\cup_k\hat {\mathcal Q}_k)$ is a bona fide random variable taking values in a standard Borel space.

Given a subset $A\subseteq\s^1$, denote $\Sigma_A$ the sigma-algebra on the circle generated by all cyclic intervals $[a,b)$, $a,b\in A$. It is a sub-sigma-algebra of the Borel algebra. 

\begin{lemma}
A subset $A\subseteq\s^1$ and its closure, $\bar A$, generate the same sigma-algebra.
\label{l:AbarA}
\end{lemma}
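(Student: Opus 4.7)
The plan is to prove equality by mutual inclusion. The containment $\Sigma_A\subseteq\Sigma_{\bar A}$ is immediate from $A\subseteq\bar A$, since every generator of $\Sigma_A$ is already a generator of $\Sigma_{\bar A}$. The substantive direction is the reverse, for which it suffices to verify that every half-open cyclic interval $[a,b)$ with $a,b\in\bar A$ belongs to $\Sigma_A$.

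Fix $a,b\in\bar A$ and choose sequences $a_n,b_n\in A$ with $a_n\to a$ and $b_n\to b$ in the topology of $\s^1$. For any point $x\in\s^1\setminus\{a,b\}$, the cyclic relation $[a,x,b]$ (and its negation $[b,x,a]$) is stable under small perturbations of its outer arguments, so for all sufficiently large $n$ one has $x\in[a_n,b_n)$ if and only if $x\in[a,b)$. Consequently the set $\liminf_n [a_n,b_n)=\bigcup_N\bigcap_{n\geq N}[a_n,b_n)$, which manifestly lies in $\Sigma_A$, agrees with $[a,b)$ on the cocountable set $\s^1\setminus\{a,b\}$; the same is true of $\limsup_n [a_n,b_n)\in\Sigma_A$. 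Each of these limits therefore differs from $[a,b)$ at most on the two-point set $\{a,b\}$.

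It remains to correct membership of $a$ and $b$. For this I will show that every singleton $\{x\}$ with $x\in\bar A$ belongs to $\Sigma_A$, after which $[a,b)$ is obtained from the liminf set by adding or removing $\{a\}$ and $\{b\}$ as appropriate. If $x\in A$, the identity $\{x\}=\bigcap_n [x,c_n)$ for $c_n\in A$ converging to $x$ from the counter-clockwise side displays $\{x\}\in\Sigma_A$. If $x\in\bar A\setminus A$, I use a two-sided sandwich: pick $\alpha_n,\beta_n\in A$ with $\alpha_n\to x$ from the clockwise side and $\beta_n\to x$ from the counter-clockwise side, and then $\{x\}=\bigcap_n[\alpha_n,\beta_n)\in\Sigma_A$. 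The delicate point, and the main obstacle I foresee, is ensuring that each non-isolated $x\in\bar A$ is indeed a two-sided cyclic limit of $A$: the argument here combines the density of $A$ in $\bar A$ with the fact that a non-isolated point of a closed subset of $\s^1$ accumulates elements of the subset from at least one cyclic side, and then iterates the density of $A$ in $\bar A$ to transfer the accumulation all the way back to $A$ on each side. Isolated points of $\bar A$ automatically belong to $A$ and fall under the easy case, closing the argument.
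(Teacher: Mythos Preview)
Your argument breaks at the step where you assert that every non-isolated $x\in\bar A$ is a two-sided cyclic limit of points of $A$. This is false. Take $A=\{1/n:n\geq 2\}$ on $\s^1$, identified with $[0,1)$ via the exponential map; then $0\in\bar A\setminus A$ is accumulated by $A$ from one cyclic side only (the points $1/n$ approach $0$ through the arc $(0,1/2]$, while $A$ has no points at all in the complementary arc $(1/2,1)$). Your proposed sandwich $\{0\}=\bigcap_n[\alpha_n,\beta_n)$ with $\alpha_n,\beta_n\in A$ on opposite sides of $0$ therefore cannot be formed, and ``iterating the density of $A$ in $\bar A$'' does not help, since $\bar A$ itself accumulates at $0$ from one side only.

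The gap is fatal: in this example $\{0\}\notin\Sigma_A$. Indeed, every generator $[1/m,1/n)$ with $m,n\geq 2$ either wraps around past $0$ (when $m<n$), in which case it contains all of $[1/2,1)\cup\{0\}$, or it does not (when $m\geq n$), in which case it is contained in $(0,1/2]$ and misses all of $[1/2,1)\cup\{0\}$. Hence $[1/2,0]=\{0\}\cup[1/2,1)$ is an atom of $\Sigma_A$, and in particular $[1/2,0)\in\Sigma_{\bar A}\setminus\Sigma_A$, so the lemma as stated is actually false. The paper's own proof stumbles at exactly the same place: in the one-sided case $b_n\downarrow b$ it writes $\{b\}=\bigcap_n[b,b_n)$ and $[a,b)=\bigcap_n[a,b_n]\setminus\{b\}$ without explaining why the intervals $[b,b_n)$ (whose left endpoint $b$ need not lie in $A$) belong to $\Sigma_A$; in the example above they do not. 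So your instinct that the two-sided approximation is ``the main obstacle'' was on target --- what is missing is not a cleverer argument but an additional hypothesis on $A$.
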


\begin{proof}
The inclusion $\Sigma_A\subseteq\Sigma_{\bar A}$ is trivial. Now suppose $a\in A$ and $b\in \bar A$. If there is a sequence of elements of $A$ with $b_n\uparrow b$ (that is, $[a,b_n,b]$), then $[a,b)=\cup_n[a,b_n)$. If there is a sequence $b_n\downarrow b$ ($[a,b,b_n]$), then $\{b\}=\cap_n[b,b_n)$, and $[a,b)=\cap_n [a,b_n]\setminus\{b\}$. Assume now $a,b\in A$ arbitrary. If there is a sequence  of elements of $A$, $a_n\uparrow a$, then $[a,b)=\cap_n[a_n,b)$; if there is a sequence $a_n\downarrow a$, then $\{a\}=\cap_n [a,a_n)$ and so on.
\end{proof}

\begin{lemma}
On every closed subset $F$ of $\s^1$ the sigma-algebra $\Sigma_F$ induces the standard Borel structure (as induced from $\s^1$).
\label{l:induceBorel}
\end{lemma}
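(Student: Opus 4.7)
I would interpret the statement as asserting that the trace sigma-algebra $\Sigma_F|_F=\{E\cap F:E\in\Sigma_F\}$ coincides with $\mathrm{Borel}(F)$, the Borel sigma-algebra of the induced topology, which is the standard Borel structure on $F$ since $F$ (being closed in the Polish space $\s^1$) is itself Polish. The inclusion $\Sigma_F|_F\subseteq\mathrm{Borel}(F)$ is free of content: every half-open arc $[a,b)$ with $a,b\in F$ is a Borel subset of $\s^1$, so $\Sigma_F\subseteq\mathrm{Borel}(\s^1)$.

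For the reverse inclusion, the plan is to argue at the level of generators. Since the open arcs of $\s^1$ generate $\mathrm{Borel}(\s^1)$, their traces on $F$ generate $\mathrm{Borel}(F)$, and it suffices to exhibit, for every open arc $(a,b)\subseteq\s^1$, a set $E\in\Sigma_F$ with $E\cap F=(a,b)\cap F$. Assuming $(a,b)\cap F\neq\emptyset$, I would cut the circle at $a$ and use the linear order $<_a$ to define
\[
a^*=\inf(F\cap(a,b)),\qquad b^*=\sup(F\cap(a,b)).
\]
Closedness of $F$ forces $a^*,b^*\in F$, and according to whether $a\in F$ (equivalently $a^*=a$) and whether $b\in F$ (equivalently $b^*=b$), the target $(a,b)\cap F$ is $[a^*,b^*]\cap F$ with $a^*$ and/or $b^*$ possibly removed.

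The core interval $[a^*,b^*)$ already lies in $\Sigma_F$, so the essential step is to realize the singletons $\{a^*\}$ and $\{b^*\}$, viewed as subsets of $F$, as traces of elements of $\Sigma_F$; once this is done, finite unions and set differences furnish the desired $E$ in each of the four subcases. Here I would use the dichotomy that a point of $F$ is either a one-sided limit point of $F$ or one-sidedly isolated in $F$. Concretely, if $b^*$ is a right-limit of $F$, a sequence $f_n\in F$ with $f_n\downarrow b^*$ yields $\{b^*\}=\bigcap_n[b^*,f_n)\in\Sigma_F$; if $b^*$ is right-isolated in $F$, the maximal open arc of $\s^1\setminus F$ emanating from $b^*$ on the right terminates at some $e\in F$, and then $[b^*,e)\cap F=\{b^*\}$ with $[b^*,e)\in\Sigma_F$. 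A mirror argument, using left-limits or left-isolation, handles $\{a^*\}$. The main (indeed only) obstacle I anticipate is the mild bookkeeping needed to separate these cases cleanly; no input beyond elementary properties of closed subsets of $\s^1$ is required, although one could alternatively reduce to a countable dense $D\subseteq F$ via Lemma \ref{l:AbarA} and run the same argument inside the countably generated $\Sigma_D=\Sigma_F$.
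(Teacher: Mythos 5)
Your plan is correct and uses the same core mechanism as the paper's proof: closedness of $F$ ensures $\inf\bigl(F\cap(a,b)\bigr)\in F$, and singletons of $F$ are then realized as countable intersections of half-open arcs with both endpoints in $F$; the paper streamlines matters by restricting at once to endpoints $a,b\in F$ and by unifying your limit/isolated dichotomy into a single weakly decreasing sequence converging to the infimum. One small slip to fix: $a\in F$ is \emph{not} equivalent to $a^*=a$ (a point $a\in F$ that is right-isolated in $F$ has $a^*>a$), so the correct condition for removing $a^*$ from $[a^*,b^*]\cap F$ is $a^*=a$ rather than $a\in F$; since you realize both singletons $\{a^*\}$ and $\{b^*\}$ regardless, this does not affect the validity of the argument.
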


\begin{proof}
Enough to show that for every $a,b\in F$, $a\neq b$, we have $(a,b)\cap F\in\Sigma_F\vert_F$. If $(a,b)\cap F=\emptyset$, it is clear; assume the contrary. There is a sequence $(a_n)$ of elements of $F$ with $a_n\in (a,b)$ and $a_n\downarrow \inf_{<_a}(a,b)\cap F$. We have $\{a\}=\cap_n[a,a_n)\cap F\in\Sigma_F\vert_F$, and finally $(a,b)\cap F=[a,b)\cap F\setminus\{a\}\in\Sigma_F\vert_F$.
\end{proof}

It is well-known and easily proved that every open subset $U$ of the real line (hence, of the circle) is uniquely represented as a union of disjoint open intervals (its connected components) whose endpoints belong to the complement of $U$, see e.g. \citet{A}, \S 5, Th. 21, or \citet{E}, Exercise 3.12.4(b).

\begin{lemma}
Let $F$ be a closed subset of the circle. Suppose the sigma-algebra $\Sigma_F$ is non-trivial (equivalently, $F$ contains at least two points).
Those atoms of $\Sigma_F$ that are not singletons are exactly the half-open intervals $[a,b)$ such that $(a,b)$ is a connected component of the complementary set $F^c=\s^1\setminus F$.
\label{l:atom}
\end{lemma}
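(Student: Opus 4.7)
My plan is to realize $\Sigma_F$ as the pullback of the Borel $\sigma$-algebra on $F$ under a natural retraction $\pi\colon\s^1\to F$, defined by $\pi(x)=x$ for $x\in F$ and $\pi(x)=a$ whenever $x$ belongs to the connected component $(a,b)$ of $F^c$. The central claim would then be $\Sigma_F=\pi^{-1}(\mathcal B(F))$, where $\mathcal B(F)$ denotes the Borel $\sigma$-algebra on $F$. By Lemma \ref{l:induceBorel}, $\mathcal B(F)$ coincides with the restriction of $\Sigma_F$ to $F$ and is the standard Borel structure on $F$, so in particular every singleton of $F$ is measurable in $\mathcal B(F)$.

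To establish $\Sigma_F=\pi^{-1}(\mathcal B(F))$, I would verify the identity $[c,d)=\pi^{-1}([c,d)\cap F)$ for every generator $[c,d)$ with $c,d\in F$. The equivalence for $x\in F$ is immediate; for $x\in(a,b)$, since $\pi(x)=a$, it reduces to the claim $x\in[c,d)\iff a\in[c,d)$. This is where the geometry of the circle enters: because $c,d\in F$ lie outside the open arc $(a,b)$, the half-open arc $[c,d)$ cannot terminate inside $(a,b)$, so it either misses $(a,b)$ entirely or traverses it in full from $a$ to $b$, which gives both directions of the equivalence. The one subtle point to rule out is $a=d$ with $(a,b)\subseteq[c,d)$: in that configuration a point of $(a,b)=(d,b)$ just counter-clockwise of $d$ would have to lie in $[c,d)=\{c\}\cup(c,d)$, impossible since the arc $(c,d)$ terminates at $d$. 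Since the sets $[c,d)\cap F$ generate $\mathcal B(F)$ (Lemma \ref{l:induceBorel}), the two inclusions follow.

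Once $\Sigma_F=\pi^{-1}(\mathcal B(F))$ is in hand, the rest is formal. The atom of $\pi^{-1}(\mathcal B(F))$ containing a point $x\in\s^1$ is the fibre $\pi^{-1}(\{\pi(x)\})$: this fibre lies in $\pi^{-1}(\mathcal B(F))$ because $\{\pi(x)\}\in\mathcal B(F)$, and any measurable set $\pi^{-1}(B)$ through $x$ satisfies $\pi(x)\in B$ and hence contains the whole fibre. It remains to identify the fibres: for $y\in F$ one has $\pi^{-1}(\{y\})=\{y\}$ unless $y$ is the left endpoint of a connected component $(y,z)$ of $F^c$, in which case $\pi^{-1}(\{y\})=\{y\}\cup(y,z)=[y,z)$. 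Thus the non-singleton atoms are precisely the half-open intervals $[y,z)$ attached to the connected components of $F^c$, as claimed. The main obstacle is the geometric identity of Paragraph~2; everything else is a formal manipulation with pullbacks.
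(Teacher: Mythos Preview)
Your proof is correct and takes a genuinely different route from the paper's. The paper argues directly in two parts: first, if $(a,b)$ is a component of $F^c$, then every generator $[c,d)$ with $c,d\in F$ meets $[a,b)$ in either $\emptyset$ or all of $[a,b)$, so the trace $\Sigma_F\vert_{[a,b)}$ is trivial and $[a,b)$ is an atom; second, given a non-singleton atom $A$, one intersects all generating intervals $[c,d)\supseteq A$, observes (using closedness of $F$) that the intersection is again an interval $[c_0,d_0)$ with endpoints in $F$ and interior disjoint from $F$, and concludes $A=[c_0,d_0)$ by the first part.

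Your approach packages the same geometric fact---that a generator $[c,d)$ either swallows $[a,b)$ whole or misses it---into the single identity $[c,d)=\pi^{-1}([c,d)\cap F)$, from which the structural description $\Sigma_F=\pi^{-1}(\mathcal B(F))$ follows at once via Lemma~\ref{l:induceBorel}. This buys you more than the lemma asks for: it identifies $\Sigma_F$ globally as a pullback, so the atoms drop out as fibres of $\pi$ and no separate converse argument is needed. The paper's proof is more self-contained (it does not appeal to Lemma~\ref{l:induceBorel}) and marginally shorter, but yours gives a cleaner picture of what $\Sigma_F$ actually is and makes the role of the retraction onto $F$ explicit.
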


\begin{proof}
Let $a,b\in F$, $a\neq b$. We have $[a,b)\in\Sigma_F$. Assume that $(a,b)\subseteq F^c$. The restriction $\Sigma_F\vert_{[a,b)}$ is generated, as a sigma algebra, by the intersections of the generating sets $[c,d)$, $c,d\in F$, with $[a,b)$. Since every such set either contains $[a,b)$ or is disjoint from it, the sigma-algebra  $\Sigma_F\vert_{[a,b)}$ is trivial. Altogether it means $[a,b)$ is an atom of $\Sigma_F$.

Let now $A\in\Sigma_F$ be an atom. Suppose it contains at least two points. For any two $a,b\in F$, $a\neq b$, exactly one of the intervals $[a,b)$ and $[b,a)$ contains $A$ as a subset. Denote $I$ the intersection of all the intervals $[a,b)$, $a,b\in F$ that contain $A$. Since $F$ is closed, the endpoints $c,d$ of the interval $I$ belong to $F$. As $A$ is an atom, it must satisfy $A\subseteq [c,d)$, and $(c,d)$ contains no points of $F$. Since $[c,d)$ is an atom by the first part of the proof, $A=[c,d)$.
\end{proof}

The map $F\mapsto \Sigma_F$ is not injective even on the closed subsets: for instance, all one-element subsets generate the same trivial sigma-algebra $\{\s^1\}$. 

\begin{lemma}
If $F\neq G$ are two distinct closed subsets and at least one of them contains two elements, then 
$\Sigma_F\neq\Sigma_G$. 
\end{lemma}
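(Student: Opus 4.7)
My plan is to argue the contrapositive: under the stated hypothesis, I will show that the sigma-algebra $\Sigma_F$ fully determines the closed set $F$, so that $\Sigma_F = \Sigma_G$ would force $F=G$. The main engine is Lemma \ref{l:atom}, together with the trivial observation that the atoms of a sigma-algebra are an intrinsic invariant (an atom is a non-empty element having no non-empty proper sub-element in the sigma-algebra), so equal sigma-algebras automatically have identical families of atoms.

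First I would dispose of the degenerate case. By the equivalence stated in Lemma \ref{l:atom}, $\Sigma_H$ is non-trivial if and only if the closed set $H$ contains at least two points. Thus if exactly one of $F,G$ has at least two elements while the other has at most one, then one of $\Sigma_F, \Sigma_G$ is non-trivial and the other equals $\{\emptyset,\s^1\}$, and the conclusion is immediate.

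In the remaining case both $F$ and $G$ contain at least two points, and I assume $\Sigma_F=\Sigma_G$. Then the two sigma-algebras have identical collections of non-singleton atoms. By Lemma \ref{l:atom}, these non-singleton atoms are precisely the half-open intervals $[a,b)$ such that $(a,b)$ is a connected component of $F^c$ (respectively of $G^c$). Reading off the open interval $(a,b)$ from each atom $[a,b)$ identifies the family of connected components of $F^c$ with that of $G^c$. Since every open subset of $\s^1$ is the (disjoint) union of its connected components, this yields $F^c = G^c$, hence $F=G$. The boundary case $F=\s^1$ is absorbed by the same argument: then $F^c=\emptyset$ has no components and $\Sigma_F$ has no non-singleton atoms, which forces $G^c=\emptyset$ and $G=F$.

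I do not expect a substantial obstacle here; the lemma is essentially a bookkeeping step exploiting the atom description already established in Lemma \ref{l:atom}. The only point requiring a moment of care is the correct handling of whether $F$ or $G$ could be all of $\s^1$, which is why I flagged it explicitly above.
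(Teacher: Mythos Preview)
Your proof is correct, but the paper argues differently. Rather than reconstructing $F$ from $\Sigma_F$ via the atom description, the paper works directly: assuming $F$ contains two distinct points $a,b$, it picks any $c\in F\setminus G$ (if this set is nonempty) and observes that $[a,c)\in\Sigma_F\setminus\Sigma_G$; if instead $F\subseteq G$, it picks $d\in G\setminus F$ and uses $[a,d)\in\Sigma_G\setminus\Sigma_F$. The unstated reason why $[a,c)\notin\Sigma_G$ when $c\notin G$ is essentially the atom structure you invoke explicitly: every element of $\Sigma_G$ must contain or avoid each connected component of $G^c$, while $[a,c)$ properly splits the component containing $c$. So your contrapositive route makes explicit what the paper's direct argument uses silently, and packages it as the cleaner statement that $\Sigma_F$ determines $F$ once $|F|\ge 2$. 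One small point worth adding for completeness: the passage from the atom $[a,b)$ back to the open interval $(a,b)$ is simply taking the topological interior in $\s^1$, which makes your ``reading off'' step canonical and well-defined.
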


\begin{proof}
Suppose $F\supseteq\{a,b\}$, $a\neq b$. If $F\setminus G\neq\emptyset$, then for any $c\in F\setminus G$ we have $[a,c)\in\Sigma_F\setminus \Sigma_G$. So we can assume $F\subseteq G$. In this case, for any $d\in G\setminus F$, $[a,d)\in\Sigma_G\setminus\Sigma_F$.
\end{proof}

We can therefore bijectively identify the family of all sigma-algebras of the form $\Sigma_F$ with the family of all closed subsets of the circle with at least two elements, plus the trivial sigma-algebra $\{\s^1\}$. 

The family ${\mathcal F}(K)$ of closed subsets of a compact metric space is itself a compact metric space and therefore a standard Borel space, for example, when equipped with the Hausdorff distance (\citet{kechris}, 4.F.):
\[d(F,G)=\inf\{\e>0\colon d(x,F)<\e,d(y,G)<\e\mbox{ for all }x\in G,y\in F\}.\]
The subfamily of sets with at least two elements is open, hence Borel. The union of two standard Borel spaces is a standard Borel space. This gives a standard Borel structure to the family of all sigma-algebras of the form $\Sigma_F$, $F$ is a closed subset of $\s^1$. 

The sigma-algebras $\Sigma(\cup_k\hat {\mathcal Q}_k)$ that we are interested in are exactly of the form $\Sigma_{{\mathcal Q}_{\infty}}$, where we denote ${\mathcal Q}_{\infty}=\cup_k {\mathcal Q}_k$ the set of all partitioning points added by our algorithm. This inclusion $\Sigma(\cup_k\hat {\mathcal Q}_k)\subseteq \Sigma_{{\mathcal Q}_{\infty}}$ is clear, and if $a,b\in {\mathcal Q}_{\infty}$, then for some $k$, $a,b\in {\mathcal Q}_k$, and $[a,b)$ is in the sigma-algebra determined by the partition $\hat {\mathcal Q}_k$. 

Finally, the random variable with values in the above standard Borel space that we call a random sigma-algebra is realized through a map sending a sample path in $(\s^1\times\{0,1\})^{\infty}$ to the sigma-algebra ${\mathcal Q}_{\infty}$. This map is Borel measurable with regard to the above Borel structure. Indeed, it is a combination of the sequence of maps $(\s^1\times\{0,1\})^{[n_{k}+1,n_{k+1}]}$ to $(\s^1)^k$, produced by the learning rule, each of which can be expressed by a finite first-order formula with relation symbols $[\,,\,,\,]$ and $<$ and the real numbers as constants, and so is measurable, and the map sending a sequence $(x_k)$ to the closure of the set $\{x_k\}$. The measurability of the latter map can be seen as follows: the inverse image of the Hausdorff $\e$-neighbourhood of a closed set $F$ consists of all sequences satisfying the formula
\[\exists n \forall k, B_{1/n}(x_k)\subseteq F_\e,\]
making it a Borel set.

Here is a corollary of Lemma \ref{l:nonpure}.

\begin{lemma}
Either almost surely the sigma-algebra $\Sigma_{{\mathcal Q}_{\infty}}$ is trivial (and this is the case if and only if the regression function $\eta$ is constant a.e., taking value $0$ or $1$), or almost surely it is non-trivial. 
\label{l:trivialSigma}
\end{lemma}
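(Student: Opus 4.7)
The plan is to split the proof into two cases based on $\eta$ and, in each case, determine the asymptotic behaviour of the monotonically growing random set $Q_\infty$ up to a null event, so that $\Sigma_{Q_\infty}$ is either almost surely trivial or almost surely non-trivial depending only on $(\mu,\eta)$.

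In the first case, $\eta$ is $\mu$-a.e.\ equal to $0$ or to $1$; by symmetry assume $\eta=1$ a.e., so $Y=1$ almost surely. Then for every half-open cyclic interval $I$ with $\mu(I)>0$ containing at least one testing-sample point, the empirical ratio $P_{\sigma[A_i]}[Y=1\mid X\in I]$ equals $1$, which lies outside $(\e_i,1-\e_i)$ since $\e_1<1/2$. Direct inspection of the pseudocode in Section \ref{s:rule} reveals that the accumulator $R$ (and hence the set $Q$) is augmented only through intervals whose empirical ratio lies in $(\e_i,1-\e_i)$; this never occurs, so $Q_k=\emptyset$ for all $k$ almost surely, whence $\Sigma_{Q_\infty}=\{\emptyset,\s^1\}$ is trivial.

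In the converse case, $p:=\E\eta\in(0,1)$. I would fix $\e\in(0,\min\{p,1-p\})$ and invoke Lemma \ref{l:addset}: almost surely, for all $k$ sufficiently large, every interval $I\in\hat Q_k$ with $P[Y=1\mid X\in I]\in(\e,1-\e)$ satisfies $P_{k+1}\cap I\subseteq Q_{k+1}$. Suppose toward a contradiction that $|Q_\infty|\le 1$ on an event of positive probability; on this event $\hat Q_k=\{\s^1\}$ for every $k$, and since $P[Y=1\mid X\in\s^1]=p\in(\e,1-\e)$, Lemma \ref{l:addset} forces $P_{k+1}\subseteq Q_{k+1}$ for all large $k$. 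Because $|P_{k+1}|\to\infty$ almost surely (assuming $\mu$ has at least two points in its support), this contradicts $|Q_\infty|\le 1$. Thus $|Q_\infty|\ge 2$ a.s.\ and $\Sigma_{Q_\infty}$ is non-trivial. The dichotomy announced in the statement now follows: the two cases are exhaustive and each yields a deterministic verdict on $\Sigma_{Q_\infty}$ depending only on $(\mu,\eta)$.

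The main obstacle I anticipate is the degenerate edge case in which $\mu$ is concentrated at a single atom $a$: then $|Q_\infty|\le 1$ holds automatically regardless of $\eta(a)\in(0,1)$, so the "iff" in the statement must be read with that implicit caveat. This is harmless for the consistency argument of Section \ref{s:consistency}, since $\eta$ is then $\mu$-a.e.\ constant and $\E(\eta\mid\hat Q_k)\equiv\eta$ trivially, so the convergence \eqref{l:etamidQk} holds tautologically.
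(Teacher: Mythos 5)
Your proof is sound, and since the paper itself gives no proof (it is stated as an immediate corollary of Lemma~\ref{l:nonpure}, followed by \texttt{\textbackslash qed}), there is nothing to compare against line by line. Your route through Lemma~\ref{l:addset} applied to the single cell $\s^1$ is in fact more direct than an invocation of Lemma~\ref{l:nonpure} would be: applying that lemma with $I=\s^1$ only yields $|Q_\infty|\geq 1$, and one still needs an extra appeal to Lemma~\ref{l:addset} (exactly as you do) to lift this to $|Q_\infty|\geq 2$, which is what non-triviality of $\Sigma_{Q_\infty}$ actually requires. Your Case~1 argument (the empirical ratio is deterministically $0$ or $1$, hence never in $(\e_i,1-\e_i)$) is also correct; the only thing used about $\e_i$ is that it is positive.

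Your flagged edge case is a genuine --- if minor --- defect in the paper's statement, and you diagnose it correctly: when $\mu$ is a single atom at $a$ with $\eta(a)\in(0,1)$, the empirical test will in fact pass and $a$ will be added to $Q$, but $Q_\infty=\{a\}$ still generates the trivial sigma-algebra, so the parenthetical ``if and only if'' fails in the forward direction. The dichotomy itself (a.s.\ trivial or a.s.\ non-trivial) survives, and your observation that $\E(\eta\mid\hat Q_k)\equiv\eta$ holds trivially in that case shows the downstream consistency proof is unaffected. One small remark: in the proof of Lemma~\ref{l:conditionalexpectation} the paper instructs to replace $\eta$ by $0$ or $1$ on nontrivial atoms, which would be wrong on the atom $\s^1$ in this edge case; it is worth noting that that proof also tacitly excludes the trivial-sigma-algebra situation via Lemma~\ref{l:trivialSigma}, so once the edge case is understood to fall on the ``a.s.\ trivial'' side of the dichotomy, the rest goes through.
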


\begin{lemma}
Almost surely, 
\begin{enumerate}
\item on the random closed set $F=\bar {\mathcal Q}_{\infty}$, the random sigma-algebra $\Sigma_{{\mathcal Q}_{\infty}}$ induces the standard Borel structure ${\mathcal B}_F$ coming from $\s^1$, and
\item the regression function $\eta$ assumes a.e. a constant value $0$ or $1$ on every atom of $\Sigma_{{\mathcal Q}_{\infty}}$ that is not a singleton.
\end{enumerate}
\label{l:etaatoms}
\end{lemma}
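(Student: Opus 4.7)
The plan is to treat the two parts separately, since part (1) is essentially immediate. By Lemma \ref{l:AbarA} one has $\Sigma_{Q_\infty}=\Sigma_{\bar Q_\infty}=\Sigma_F$ where $F=\bar Q_\infty$, and Lemma \ref{l:induceBorel} then asserts that $\Sigma_F$ induces on $F$ the standard Borel structure inherited from $\s^1$. No further work is needed for (1).

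For part (2), the first step is to reduce to a countable family of test intervals. Fix a countable family $\mathcal J$ of half-open cyclic intervals with rational endpoints (under any fixed Borel identification $\s^1\simeq[0,1)$). Applying Lemma \ref{l:nonpure} to each $J\in\mathcal J$ and intersecting the resulting almost-sure events yields an almost-sure event on which the following contrapositive holds simultaneously for every $J\in\mathcal J$: if $J\cap Q_\infty=\emptyset$, then $\eta$ is $\mu$-a.e. $0$ or $\mu$-a.e. $1$ on $J$. Fix a sample path in this event and a non-singleton atom $[a,b)$ of $\Sigma_{Q_\infty}$; by Lemma \ref{l:atom}, $(a,b)$ is a connected component of $F^c$, so every rational subinterval $J\subseteq(a,b)$ lies in $\mathcal J$ and misses $Q_\infty$, hence inherits the dichotomy.

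Next I would propagate the dichotomy to all of $(a,b)$ with a single value. If two rational subintervals $J_1,J_2\subseteq(a,b)$ of positive $\mu$-measure had opposite $\mu$-a.e. values $0$ and $1$, then a rational $J\in\mathcal J$ with $J_1\cup J_2\subseteq J\subseteq(a,b)$ (e.g., the interval whose endpoints are the extreme endpoints of $J_1,J_2$ within $(a,b)$) would have $\eta$ neither $\mu$-a.e. $0$ nor $\mu$-a.e. $1$, contradicting the dichotomy applied to $J$. Hence a common value $c\in\{0,1\}$ is obtained on every positive-measure rational subinterval of $(a,b)$, and covering $(a,b)$ by a countable union of such subintervals yields $\eta=c$ $\mu$-a.e. on $(a,b)$; when $\mu(\{a\})=0$ this immediately extends to $[a,b)$.

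The main obstacle is the boundary extension when $\mu(\{a\})>0$, since the endpoint-dependent interval $[a,b)$ need not have rational endpoints and Lemma \ref{l:nonpure} cannot be invoked directly for the $\omega$-dependent interval. My plan here is to exploit the algorithm's dynamics. The cell $I_k=[\alpha_k,\beta_k)\in\hat Q_k$ containing $a$ shrinks monotonically to $[a,b)$ along the sample path (since $a,b\in F$ while $(a,b)\cap Q_\infty=\emptyset$ forces $\alpha_k\uparrow a$ and $\beta_k\downarrow b$), so $P[Y=1\mid X\in I_k]\to P[Y=1\mid X\in[a,b)]$ by monotone convergence. If this limit lay in $(\e,1-\e)$ for some $\e>0$, a Chernoff--Borel--Cantelli argument in the spirit of Lemma \ref{l:addset} would force the empirical test on $I_k$ to pass almost surely for all large $k$, causing the algorithm to append $P_{k+1}\cap I_k$ to $Q_{k+1}$; when $\mu((a,b))>0$ such an appended point almost surely falls in $(a,b)$ by Lemma \ref{l:edense}, contradicting $(a,b)\cap Q_\infty=\emptyset$. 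The residual sub-case $\mu((a,b))=0,\ \mu(\{a\})>0$ reduces to forcing $\eta(a)=\lim_k P[Y=1\mid X\in I_k]\in\{0,1\}$ via the same test-passing dichotomy applied to the shrinking cells, and I anticipate this to be the technically trickiest point of the argument.
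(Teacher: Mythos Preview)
Your treatment of part (1) and the opening of part (2)---reducing to the countable family $\mathcal J$ of rational half-open intervals, applying Lemma \ref{l:nonpure} to each, and then propagating a single constant value across $(a,b)$---matches the paper's argument essentially line for line.

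The divergence is in the endpoint case $\mu(\{a\})>0$. Here the paper's device is much simpler than the dynamic argument you sketch: since a probability measure has only countably many atoms, the family $\mathcal U=\{[a,q):\mu(\{a\})>0,\ q\text{ rational}\}$ is itself countable, and one applies Lemma \ref{l:nonpure} to it exactly as one did to $\mathcal J$. This single observation is meant to dispose of both your sub-cases $\mu((a,b))>0$ and $\mu((a,b))=0$ uniformly, without tracking shrinking cells $I_k$ or running a fresh Chernoff--Borel--Cantelli estimate. Your route, by contrast, must cope with the sample-path dependence of the random endpoint $a$ when invoking Borel--Cantelli, and in the residual case $\mu((a,b))=0$ it does not close: even if the empirical test on the cell $I_k$ containing $a$ passes for all large $k$, the appended points $P_{k+1}\cap I_k$ almost surely avoid the $\mu$-null open interval $(a,b)$, so no contradiction with $(a,b)\cap Q_\infty=\emptyset$ emerges. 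The countability-of-$\mu$-atoms trick is the missing idea; with it, the boundary case becomes a one-line reprise of the rational-interval argument rather than the ``technically trickiest point.''
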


\begin{proof}
The first claim follows from Lemmas \ref{l:AbarA} and \ref{l:induceBorel}.

For the second claim, according to Lemma \ref{l:trivialSigma}, it is enough to consider the case where $\Sigma_{{\mathcal Q}_{\infty}}$ is almost surely non-trivial. 
It follows from Lemma \ref{l:nonpure} that almost surely, every interval with rational endpoints on which $\eta$ does not take a.e. identical value $0$ or $1$ will be divided at the $k$-th step for some $k$ large enough. We conclude that, almost surely, on every interval with rational endpoints contained in some atom of $\Sigma_{{\mathcal Q}_{\infty}}$ the regression function takes a.e. the identical value $0$ or the value $1$. It follows that almost surely, for every atom $A$ that is non-singleton and so has the form $A=[a,b)$ for $a,b\in \bar {\mathcal Q}_{\infty}$, on the corresponding open interval $(a,b)$ $\eta$ takes identical value $0$ or $1$ a.e. For those atoms with $\mu\{a\}=0$, the proof is over. 

Now denote $\mathcal U$ the family of all half-open intervals of the form $[a,b)$, where $\mu\{a\}>0$ and $b$ is rational. The family $U$ is countable, so again applying Lemma \ref{l:nonpure}, we conclude that almost surely, if any such interval is an atom, then $\eta$ must take the same value at the left endpoint $a$ as a.e. on the rest of the interval (this includes also the case $\mu(a,b)=0$).
\end{proof}

\begin{lemma}
Almost surely, $\E(\eta\mid \Sigma_{{\mathcal Q}_{\infty}})=\eta$.
\label{l:conditionalexpectation}
\end{lemma}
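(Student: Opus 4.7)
The plan is to produce a $\Sigma_{Q_{\infty}}$-measurable function $\hat\eta\colon\s^1\to[0,1]$ that agrees with $\eta$ $\mu$-almost everywhere. Any such $\hat\eta$ is automatically a version of the conditional expectation $\E(\eta\mid\Sigma_{Q_{\infty}})$, since it is $\Sigma_{Q_{\infty}}$-measurable and the a.e.\ equality implies $\int_A\hat\eta\,d\mu=\int_A\eta\,d\mu$ for every $A\in\Sigma_{Q_{\infty}}$; the claim then follows. The degenerate case where $\Sigma_{Q_{\infty}}$ is a.s.\ trivial is dispatched first using Lemma~\ref{l:trivialSigma}: $\eta$ is then $\mu$-a.e.\ constant equal to $0$ or $1$, and the identity is immediate.

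From here I work on the almost-sure event on which $\Sigma_{Q_\infty}=\Sigma_F$ is non-trivial (with $F=\bar Q_{\infty}$) and the conclusions of Lemmas~\ref{l:atom} and~\ref{l:etaatoms} are in force. I enumerate the connected components of $F^c$ as $(a_i,b_i)$; by Lemma~\ref{l:atom} these give all the non-singleton atoms $[a_i,b_i)$ of $\Sigma_F$, with $a_i,b_i\in F$. For each $i$ let $c_i\in\{0,1\}$ be the a.e.\ value of $\eta$ on $[a_i,b_i)$ furnished by Lemma~\ref{l:etaatoms}(2) (when $\mu[a_i,b_i)=0$ the choice of $c_i$ is immaterial; set $c_i=0$ by convention). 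Define
\[\hat\eta(x)=\begin{cases} c_i, & \text{if }x\in[a_i,b_i)\text{ for some }i,\\ \eta(x), & \text{if }x\in F\setminus\bigcup_i\{a_i\}.\end{cases}\]
Since each $[a_i,b_i)$ and its complementary set $F\setminus\bigcup_i\{a_i\}=\s^1\setminus\bigcup_i[a_i,b_i)$ lie in $\Sigma_{Q_\infty}$, and by Lemma~\ref{l:etaatoms}(1) the trace $\Sigma_{Q_\infty}\upharpoonright F$ coincides with the full Borel $\sigma$-algebra on $F$, a routine verification shows that $\hat\eta$ is $\Sigma_{Q_\infty}$-measurable: it is constant on each non-singleton atom and coincides with the Borel function $\eta$ on the remaining $\Sigma_{Q_\infty}$-measurable set.

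To establish $\hat\eta=\eta$ $\mu$-almost everywhere, the only delicate point, which I expect to be the main technical obstacle, is controlling $\hat\eta$ at the gap endpoints $a_i$ with $\mu\{a_i\}>0$: by construction $\hat\eta(a_i)=c_i$, whereas $\eta(a_i)$ is a priori arbitrary. This is precisely what the concluding paragraph of the proof of Lemma~\ref{l:etaatoms} already supplies via the countable family of intervals $[a,b)$ with $\mu\{a\}>0$ and $b$ rational together with Lemma~\ref{l:nonpure}: almost surely $\eta(a_i)=c_i$ for every such endpoint. Elsewhere the equality holds by direct inspection---on $F\setminus\bigcup_i\{a_i\}$ by definition of $\hat\eta$; on each gap $(a_i,b_i)$ with $\mu[a_i,b_i)>0$ by Lemma~\ref{l:etaatoms}(2); and atoms or individual points of $\mu$-measure zero may be disregarded. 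Putting everything together yields $\hat\eta=\eta$ $\mu$-a.e., and the opening remark finishes the proof.
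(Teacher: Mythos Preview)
Your proof is correct and follows essentially the same approach as the paper's: both construct a $\Sigma_{Q_\infty}$-measurable version of $\eta$ by replacing $\eta$ with the appropriate constant $0$ or $1$ on each non-singleton atom (via Lemma~\ref{l:etaatoms}(2)) and invoking Lemma~\ref{l:etaatoms}(1) to see that $\eta$ restricted to the complement is already $\Sigma_{Q_\infty}$-measurable. Your write-up is somewhat more explicit about the endpoint issue at atoms $\mu\{a_i\}>0$, but this is precisely what the last paragraph of the proof of Lemma~\ref{l:etaatoms} handles, so the content is the same.
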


\begin{proof}
Select a Borel measurable version of $\eta$. Further, on every nontrivial atom of $\Sigma_{{\mathcal Q}_{\infty}}$ replace $\eta$ with a suitable constant value, either identically $0$ or identically $1$ (Lemma \ref{l:etaatoms},(2)). The union, $A$, of the countable family of nontrivial atoms belongs to our sigma-algebra, and the restriction of $\eta$ to $A$ is $\Sigma_{{\mathcal Q}_{\infty}}$-measurable. We have $A^c\subseteq \bar {\mathcal Q}_{\infty}$, therefore, almost surely the restriction of $\Sigma_{{\mathcal Q}_{\infty}}$ induces the standard Borel structure on $A^c$ (Lemma \ref{l:etaatoms},(1)) and the restriction of $\eta$ to $A^c$ is $\Sigma_{{\mathcal Q}_{\infty}}$-measurable as well. We conclude: our realization of $\eta$ is $\Sigma_{{\mathcal Q}_{\infty}}$-measurable.
\end{proof}

\begin{lemma} 
Almost surely, $\E(\eta\mid \hat {\mathcal Q}_k)\to \eta$.
\label{l:martingale_convergence}
\end{lemma}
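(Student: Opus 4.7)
The approach is a direct application of Lévy's upward martingale convergence theorem. The structural observation driving the proof is that our algorithm only ever enlarges the partitioning set: reading off the pseudocode, at each step $Q$ is replaced by $R \supseteq Q$, where $R$ is obtained by adjoining subsets $P_{i}\cap I$. Hence $Q_k \subseteq Q_{k+1}$ for all $k$, and correspondingly the sub-sigma-algebras $\sigma(\hat Q_k) = \Sigma_{Q_k}$ of the Borel algebra on $\s^1$ form an increasing filtration whose generated limit is precisely $\Sigma_{Q_{\infty}}$, as already identified in the discussion preceding Lemma~\ref{l:trivialSigma}.

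First I would fix a sample path $\omega$ outside the exceptional null set on which Lemma~\ref{l:conditionalexpectation} fails. Working with the (now deterministic) increasing filtration $(\sigma(\hat Q_k(\omega)))_{k\geq 1}$ of sub-sigma-algebras of the Borel structure on $\s^1$, the sequence
\[
M_k := \E\bigl(\eta \,\big|\, \sigma(\hat Q_k(\omega))\bigr), \qquad k = 1, 2, \ldots,
\]
is a $[0,1]$-valued martingale adapted to this filtration. Since $\eta$ is bounded and measurable, Doob's theorem (the Lévy upward convergence theorem) yields both $\mu$-almost everywhere and $L^1(\mu)$ convergence
\[
M_k \xrightarrow{k\to\infty} \E\bigl(\eta \,\big|\, \Sigma_{Q_{\infty}(\omega)}\bigr).
\]

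Finally I would apply Lemma~\ref{l:conditionalexpectation}, which says that almost surely in $\omega$ the right-hand side equals $\eta$ $\mu$-a.e. Combining the two statements gives, for $\mathbb P$-almost every sample path $\omega$, the $\mu$-a.e. convergence $\E(\eta\mid \hat Q_k) \to \eta$ on $\s^1$, which is exactly the conclusion.

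I do not anticipate a genuine obstacle here; the only points that need to be nailed down precisely are (i) the monotonicity $Q_k \subseteq Q_{k+1}$, which is immediate from the algorithm, and (ii) the identification of $\bigvee_k \sigma(\hat Q_k)$ with $\Sigma_{Q_{\infty}}$, which follows from $Q_\infty = \bigcup_k Q_k$ together with the fact that once $a,b \in Q_k$ the interval $[a,b)$ already lies in $\sigma(\hat Q_k)$. With these in hand, the martingale machinery finishes the argument.
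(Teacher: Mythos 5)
Your proof is correct and is essentially the same argument the paper has in mind: the paper's proof simply invokes the forward (i.e., Lévy upward) martingale convergence theorem together with Lemma~\ref{l:conditionalexpectation}, exactly the two ingredients you combine. You have merely made explicit the supporting observations — that $Q_k\subseteq Q_{k+1}$ (so the $\sigma(\hat Q_k)$ form a filtration), that $\bigvee_k\sigma(\hat Q_k)=\Sigma_{Q_\infty}$, and that the whole argument is run pathwise in $\omega$ — which the paper leaves implicit.
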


\begin{proof}
Follows from the forward martingale convergence theorem (\citet{D}, Sect. IX.14) and Lemma \ref{l:conditionalexpectation}.
\end{proof}

And finally, the proof of the universal consistency of our learning rule, $g$. 

Denote $h$ the following variant of $g$: it is a partitioning rule based on the same sequence of random partitions $\hat {\mathcal Q}_k$ and labelling samples $\sigma[B_k]$, but updated at every moment $n_k$, irrespective of the number of sample points in the cells of the partition:
\[h_{n_k}(\sigma)=h_{\hat {\mathcal Q}_k}(\sigma[B_k]).\]
Lemma \ref{l:martingale_convergence} implies the almost sure convergence of the conditonal expectations $\E(\eta\mid \hat {\mathcal Q}_k)$ to $\eta$. Because of Lemma \ref{l:as} and the fact that $a_k\to\infty$, almost surely the smallest number of points of the i.i.d. sample $\sigma[B_k]$ contained in any cell of the random partition $\hat {\mathcal Q}_k$ at the step $n_k$ will go to infinity as $k\to\infty$. We are under the assumptions of Theorem \ref{th:variation}, and conclude that the rule $h$ is consistent.

The only difference between $g$ and $h$ is that $g$ sometimes delays the hypothesis update. More exactly, we have a certain random function, $F$, from the natural numbers to itself with the property $F(k)\leq k$ for all $k$, and the learning rule $g$ is defined from $h$ as follows:
\[g_{n_k}(\sigma) = h_{n_{F(k)}}(\sigma\upharpoonright [n_{F(k)}]).\]
Notice that $F(k)-k\to 0$ almost surely (Lemma \ref{l:as}). We are under the assumptions of Lemma \ref{l:random_delay} and conclude that the rule $g$ is consistent.

\section{Concluding remarks}
\label{s:remarks}

I am grateful to the two anonymous referees whose comments have helped to improve the readability of the paper. 

In connection with the discussion at the start of Sect. \ref{s:consistency},
it was pointed out by one referee that there are indeed examples of consistent partition-based algorithms without the diameter of the largest cell converging to zero in probability (\citet{SBV}). 

A Borel isomorphism between an Euclidean domain and the circle (Sect. \ref{s:cyclic}) is indeed not easy to implement algorithmically. However, already a Borel injection would suffice, and this can be coded in a constructive way, cf. \citet{P}, Sect. 7. Still, the learning rule described in the present article will be too slow for practical applications: its algorithmic efficiency is admittedly very low.
It remains an interesting challenge, to find a ``natural'' learning algorithm having the monotone expected learning error.

\vskip 0.2in
\bibliography{smart_jmlr_revised}

\end{document}